\documentclass{article} 
\usepackage{iclr2026_conference,times}
\usepackage{macros}
\usepackage{hyperref}
\usepackage{url}
\usepackage[toc,page,header]{appendix}

\AddToHook{cmd/appendix/before}{
  \crefalias{section}{appendix}
  \crefalias{subsection}{appendix}
  \crefalias{subsubsection}{appendix}
}

\usepackage{minitoc}
\iclrfinalcopy

\title{Cross-Modal Redundancy and the Geometry of Vision–Language Embeddings}

\newcommand{\aut}[1]{\textbf{#1}}
\newcommand{\af}[1]{{\small #1}}

\newcommand{\afn}[1]{\textcolor{primary}{$^{#1}$}}

\author{
\aut{Grégoire Dhimoïla}{\afn{a,b,c}} \quad
\aut{Thomas Fel}{\afn{d}} \quad 
\aut{Victor Boutin}{\afn{e}} \quad 
\aut{Agustin Picard}\afn{c}
\vspace{4pt}\\
\afn{a}\af{Brown University} \quad 
\afn{b}\af{ENS Paris Saclay} \quad 
\afn{c}\af{IRT Saint Exupéry} \\ 
\afn{d}\af{Kempner Institute, Harvard University} \quad
\afn{e}\af{CNRS} \quad 
\vspace{3mm}\\
{
\small 
\href{lelien.fr}{\raisebox{-2.8pt}{\includegraphics[height=10pt]{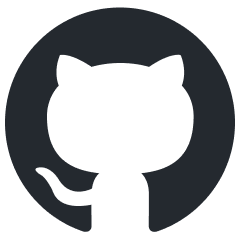}}
\texttt{\textcolor{secondary}{https://github.com/Parabrele/IsoEnergy}}}
}
\vspace{-4mm}
}

\begin{document}

\doparttoc %
\faketableofcontents %

\maketitle

\begin{abstract}

Vision–language models (VLMs) align images and text with remarkable success, yet the geometry of their shared embedding space remains poorly understood. 
To probe this geometry, we begin from the Iso-Energy Assumption, which exploits cross-modal redundancy: a concept that is truly shared should exhibit the same average energy across modalities.
We operationalize this assumption with an Aligned Sparse Autoencoder (SAE) that encourages energy consistency during training while preserving reconstruction.
We find that this inductive bias changes the SAE solution without harming reconstruction, giving us a representation that serves as a tool for geometric analysis.
Sanity checks on controlled data with known ground truth confirm that alignment improves when Iso-Energy holds and remains neutral when it does not.
Applied to foundational VLMs, our framework reveals a clear structure with practical consequences: 
\textbf{(\textit{i})} sparse \bimodal atoms carry the entire \crossmodal alignment signal; 
\textbf{(\textit{ii})} \unimodal atoms act as \modalityspecific biases and fully explain the modality gap; 
\textbf{(\textit{iii})} removing \unimodal atoms collapses the gap without harming performance; 
\textbf{(\textit{iv})} restricting vector arithmetic to the \bimodal subspace yields in-distribution edits and improved retrieval. 
These findings suggest that the right inductive bias can both preserve model fidelity and render the latent geometry interpretable and actionable.
\end{abstract}

\section{Introduction}

\let\thefootnote\relax\footnote{\textit{Correspondence to:} \textcolor{secondary}{\texttt{gregoire.dhimoila@ens-paris-saclay.fr}}}

Vision-language models (VLMs)~\citep{radford2021,zhai2023sigmoid,fini2024multimodal,tschannen2025siglip2} have become central to applications from visual question answering~\citep{chen2024spatialvlm} and medical imaging~\citep{singhal2023large} to autonomous driving~\citep{zhou2024vision} and embodied AI~\citep{shukor2025smolvla}, creating shared embedding spaces where visual and textual representations of similar concepts align. Despite their empirical success, we lack a principled understanding of how these models internally organize and align semantic content across modalities.
This work investigates the geometric structure of cross-modal embeddings: \emph{How do VLMs organize the alignment between visual and textual semantics, and what principles govern this shared representational space?} Understanding these mechanisms is crucial for designing more robust and interpretable vision-language architectures.

\paragraph{From Attributions to Concepts.} To address this, the interpretability community has developed an array of tools~\citep{gilpin2018explaining,bau2017network} aimed at dissecting learned representations. Early efforts centered on attribution methods~\citep{zeiler2013visualizing,sundararajan2017axiomatic,smilkov2017smoothgrad,petsiuk2018rise,fel2021sobol} that highlight ``where'' a model focuses its attention, but these approaches often fall short~\citep{nguyen2021effectiveness,kim2021hive,fel2021cannot,hase2020evaluating,sixt2022users} of explaining ``what'' abstractions and on which basis the model operates.
More recent concept-based methods~\citep{ghorbani2019towards,zhang2021invertible,fel2023craft,elhage2022superposition,fel2023holistic} have emerged to answer this, seeking to extract meaningful features (concepts) that models implicitly compute over. 
Concept extraction is typically framed as a dictionary learning problem~\citep{tovsic2011dictionary,rubinstein2010dictionaries,elad2010sparse,mairal2014sparse,dumitrescu2018dictionary}: identifying an overcomplete set of basis vectors that explain internal activations via sparse coding~\citep{olshausen1996emergence, olshausen1997sparse, lee2006efficient,foldiak2008sparse,rentzeperis2023beyond}. This approach is a direct response to an underlying phenomenology of the activation space: the Linear Representation Hypothesis (LRH)~\citep{elhage2022superposition,wattenberg2024relational} which posits that model activations can be viewed as sparse combinations of latent directions drawn from a high-dimensional concept basis. 
This is motivated not only by empirical findings, but also by geometric arguments: in high-dimensional spaces, sparse sets of nearly orthogonal directions can represent exponentially many distinct concepts with minimal interference, a principle reminiscent of Johnson-Lindenstrauss-style~\citep{johnson1984extensions,larsen2017optimality} embeddings.
Sparse autoencoders (SAEs)~\citep{makhzani2013k,elhage2022superposition} directly operationalize this phenomenology by learning an approximate inverse mapping from model activations to latent conceptual directions. They recover a sparse code that identifies which overcomplete basis elements (concepts) are active in a given representation, and have been effective at uncovering semantically meaningful structure in both vision~\citep{gorton2024missing,fel2025archetypal,dreyer2025mechanistic,rao2024discover} and language models~\citep{cunningham2023sparse, bricken2023monosemanticity, rajamanoharan2024jumping, gao2024scaling, surkov2024unpacking}.
\vspace{-2mm}
\paragraph{Multimodal Interpretability.} However, applying SAEs to VLMs~\citep{bhalla2024interpreting,pach2025sparse} reveals a puzzling behavior: concept dictionaries often segregate by modality~\citep{papadimitriou2025}. Although VLMs are trained for cross-modal alignment, the extracted concepts tend to activate exclusively for either image or text inputs. This observation is the concept-based view of a now well-documented phenomenon called the {\it modality gap}. 
Prior work has described this separation geometrically, attributing it to a conical structure in the embedding space \citep{liang2022mind, ethayarajh2019}, or through the lens of training dynamics induced by the contrastive loss \citep{fahim2024s,shi2023,yaras2024}. Yet these accounts do not explain what such separation means at the level of shared concepts.

In this work, we introduce a framework for analyzing multimodal representations grounded in an explicit generative model. Central to this framework is the \textit{Iso-Energy Assumption} -- if a concept is truly shared across modalities, it should exhibit invariant energy, defined as the average squared activation, regardless of input domain. This assumption provides a concrete, testable criterion for identifying \bimodal concepts.
We then operationalize this simple assumption with a natural method: an alignment-penalized Matching Pursuit Sparse Autoencoder (Aligned SAE), which encourages energy consistency across modalities during training. This approach allows us to diagnose whether extracted concepts genuinely support cross-modal alignment or merely reflect modality-specific patterns.
This work makes the following contributions:  

\begin{itemize}[leftmargin=*, nosep]
  \item We introduce the \emph{Iso-Energy Assumption}, which exploits cross-modal redundancy by requiring that shared concepts exhibit the same average activation energy in image and text.  
  \item We operationalize this assumption with an Aligned Sparse Autoencoder that enforces energy consistency during training while preserving reconstruction, and validate it with ground-truth sanity checks where classical SAEs fail.  
  \item Applied to dual-encoder vision–language foundation models, this inductive bias reveals a geometric decomposition invisible to classical SAEs: (\textbf{\textit{i}}) sparse \bimodal atoms carry the entire \crossmodal alignment signal, while (\textbf{\textit{ii}}) \unimodal atoms carry \modalityspecific information and fully explain the modality gap, with a few high energy atoms acting as \modalityspecific biases.
  \item Moreover, our work reveals that cross-modal information is carried by shared atoms, as opposed to idiosyncratic ones described by \citet{papadimitriou2025}.
  \item This structure enables actionable interventions without loss of performance: (\textbf{\textit{iii}}) removing \unimodal atoms collapses the modality gap, and (\textbf{\textit{iv}}) restricting vector arithmetic to the \bimodal subspace yields in-distribution edits.%
\end{itemize}

At the core lies a single theoretical premise: if concepts are genuinely shared across modalities, they must imprint redundant statistical traces in each domain. We formalize this intuition by modeling multimodal representations as partial inverses of a shared generative process, and introducing the \emph{Iso-Energy Assumption} as the inductive bias that makes concept recovery feasible.

\paragraph{Nomenclature.} We distinguish the following terms when referring to concepts based on their behavior and based on the type of information they carry. \textbf{\textit{{(i) Activation patterns:}}} \textit{(a) unimodal} concepts activate exclusively on a single modality, whereas \textit{(b) bimodal} concepts activate on both. This distinction is made by thresholding the modality score $\mu$, a comparison of domain-wise energy (see \cref{app:modality_metrics}). \textbf{\textit{{(ii) Information carried:}}} \textit{(c) modality-specific} concepts carry information specific to a given modality, e.g. that an image has cropping artifacts, while \textit{(d) cross-modal} concepts carry cross-modal information and participate in the contrastive geometric alignment between modalities.

\vspace{-2mm}
\section{Related Work}
\label{sec:RW}

\vspace{-3mm}
Previous work extensively describes salient phenomena in the shared space of multimodal dual-encoders \citep{schrodi2025two,levi2025the,jiang2023understanding,udandarao2022understanding}. Most notably, \citet{liang2022mind} describes what is now commonly known as the \emph{modality-gap}: that image and text embeddings reside in disjoint cones in the latent space. This modality gap has been attributed to a cone effect \citep{liang2022mind,ethayarajh2019,schrodi2025two} and training dynamics induced by the contrastive loss and by mismatched pairs of data \citep{fahim2024s,shi2023,yaras2024}. \citep{levi2025the} additionally shows that embeddings are contained near the surface of ellipsoid empty shells centered near the mean of the two distributions $\mu_I$ and $\mu_T$.

The \emph{cone effect} naturally comes with a salient difference in modality wise means $\Delta\coloneqq\mu_I-\mu_T$ \citep{levi2025the,liang2022mind,fahim2024s}, or a perfect linear separability between image and text embeddings \citep{schrodi2025two,levi2025the,fahim2024s,shi2023}. Furthermore, \citet{schrodi2025two} shows that, surprisingly, a small subset of coordinates in the canonical latent basis accounts for most of the norm of $\Delta$.
Previous work tries to get rid of the modality gap by shifting modality-wise means \citep{liang2022mind,levi2025the}, or by projecting out the few canonical directions mentioned above \citep{schrodi2025two}. In all of these works, the proposed intervention decreases cross-modal performance of embeddings, with the notable exception of \citet{zhang2023diagnosing}, whose Proposition 1 precedes our more general \cref{prop:modality_removal}.
\Cref{fig:removing_gap_main} shows that this difference in means, while explaining the bulk of the modality gap and linear separability between the two sets of embeddings, is not enough to explain the full distributional mismatch. For that, we need to account for modality-specific information.
 \vspace{-1mm}

\citet{jiang2023understanding} takes an information-theoretic angle to show that modality specific variability in representations is necessary to preserve unimodal capabilities. They use this insight to design a new architecture that would explicitely regularise for the representation of such information. We show that (\emph{i}) unregularised foundational VLM encoders possess such representations, and that (\emph{ii}) the underlying mechanism is to organise these types of information linearly in distinct subspaces---let us call $\Omega_{I}$ and $\Omega_{T}$ the subspaces containing modality specific information, and $\Gamma$ the one containing shared information. The projection by \citet{schrodi2025two} described above, though not introduced for this purpose, is a first attempt at identifying and intervening on $\Omega_I \oplus\Omega_T$, characterised by the span of the canonical basis vector selected, while $\Gamma$ would be the orthogonal complement. However, their results are negative, as their intervention is detrimental to model performance even on cross-modal focused tasks---indicating significant unintended alteration of $\Gamma$---while leaving the gap wide open (see \cref{app:ilestsinuljpp}), thus showing that their identified directions can't describe $\Omega_I \oplus\Omega_T$.

Our novelty lies not in the description of most phenomena discussed above, but rather in the characterisation of these structures in foundational VLM encoders.
Through the use of a concept-based approach, we are able to identify (\emph{i}) high-energy unimodal features to unimodal biases, (\emph{ii}) $\Gamma = \mathrm{Cone}(\bm{\delta} \cdot \bm{D})$ and $\Omega_{I/T} = \mathrm{Cone}(\bm{\delta}_{I/T}\cdot \bm{D})$. Here, $\mathrm{Cone}(\bm{e}_1,\ldots,\bm{e}_k)=\{\sum_{i=1}^k\lambda_i \bm{e}_i \mid \lambda_i \geq 0\}$, $\bm{D} \in \R^{K\times d}$ is the set of $K$ dictionary atoms of latent dimension $d$, and $\bm{\delta}$ (resp. $\bm{\delta}_I$, $\bm{\delta}_T$) $\in \{0,1\}^d$ is the binary mask selecting bimodal (resp. image-, text-only) atoms. To claim this characterisation, we carefully analyse four complementary aspects of the dictionary through novel metrics. Once validated, we further test its practical value through targeted interventions on these structures.

\vspace{-2mm}
\section{Exploiting Cross-Modal Redundancy for Concept Recovery}

\begin{figure}[t]
  \centering
  \vspace{-12mm}
  \includegraphics[width=0.9\textwidth]{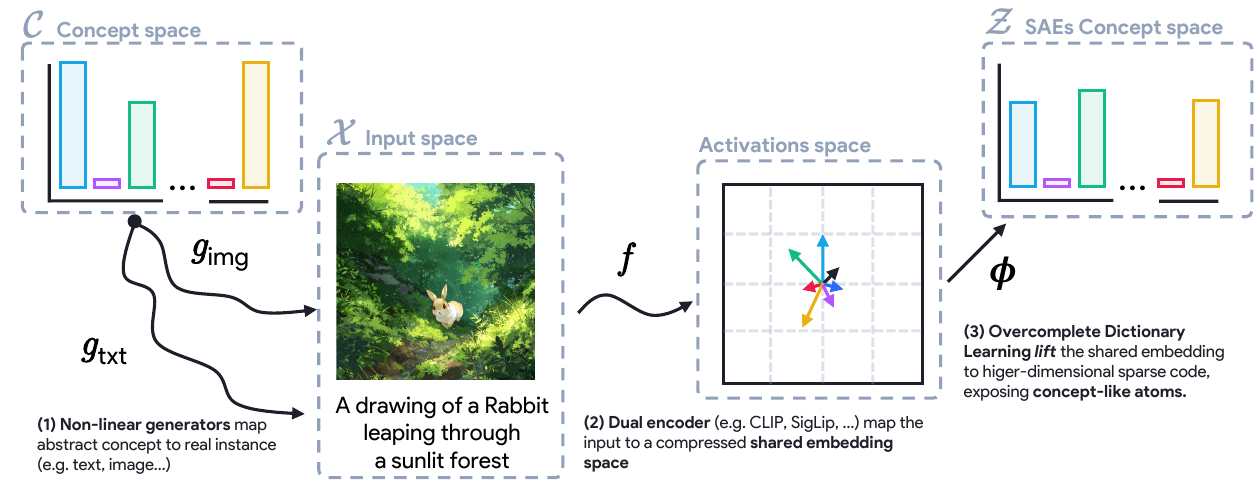}
  \caption{
  \textbf{Multimodal data-generating process.}  
    A latent concept vector $\cvec \in \Cspace$ (e.g., \emph{rabbit}, \emph{forest}, \emph{light}, \emph{running}) is sampled as a sparse combination of abstract concepts and rendered through domain-specific generators $\bm{g}(\cdot)$ (e.g., image or text). Dual-encoder models (e.g., CLIP, SigLIP) map these observations to a shared activation space, which sparse autoencoders (or other overcomplete dictionary learning methods) then attempt to \textit{lift} back to concept-like atoms. 
    However, without additional inductive bias, encoder-decoder pairs $(\bm{f}, \bm{\phi})$ are not uniquely determined, a well-known identifiability problem in nonlinear ICA. Here we leverage cross-modal redundancy as a useful inductive bias, nudging the solution toward recovering \bimodal concepts.}
  \label{fig:dgp}
  \vspace{-2mm}
\end{figure}

We formalize the setting by modeling each datum (image, caption, \dots) as generated from a latent concept vector and a domain-specific generator. From this perspective, encoders serve as partial inverses of a shared multimodal generative process. %
However, recovering the underlying concepts is in general an ill-posed inverse problem: without additional inductive biases, nonlinear ICA is provably unidentifiable~\citep{hyvarinen2016unsupervised,locatello2019challenging,khemakhem2020variational}---infinitely many mappings can account for the observed data in the absence of cross-domain constraints.
Introducing Iso-Energy provides a selection principle among these solutions and yields a representation that is useful for studying the geometry observed in practice.
We begin by formally introducing our data generative process: each datum arises by (\textbf{\textit{i}}) sampling a sparse concept vector and (\textbf{\textit{ii}}) rendering it through a domain-specific generator.
Formally, let $\K\!\in\!\mathbb{N}$ be the number of latent concepts and $\Cspace\!\subseteq\!\mathbb{R}^{\K}$ the concept space.
Let $\Domains=\{\domain_1,\ldots,\domain_{|\Domains|}\}$ index domains with observation spaces $\Xd$.
\begin{definition}[Multimodal Concept Generative Process]
\label{def:generative_process}
Sample $\cvec \sim \prior=\prod_{k=1}^{\K}\marginal$ with $|\mathrm{supp}(\cvec)|\!\ll\!\K$.
For each $\domain\!\in\!\Domains$, a deterministic generator $\generator:\Cspace\!\to\!\Xd$ yields $\xvec^{(\domain)}=\generator(\cvec)$.
We admit that each $\generator$ is $C^1$ and locally invertible.\footnote{for any $\cvec$ there exists a neighborhood $U(\cvec)$ on which $\generator$ is injective with continuous inverse.}
\end{definition}
A VLM encoder $\bm{f}$ maps each observation $\bm{x}^{(\domain)}$ to a shared embedding, and a sparse autoencoder $\bm{\phi}$ attempts to \emph{disentangle} this embedding back into concept coordinates, so that $\bm{\phi} \circ \bm{f}$ approximates the inverse of \Cref{def:generative_process}. 
However, recovering latent concepts from nonlinear generators is not identifiable in general: without additional structure, many different dictionaries can explain the same data~\citep{hyvarinen2016unsupervised,khemakhem2020variational,locatello2019challenging}. 
This ambiguity is visible in practice, where SAE dictionaries often vary substantially across random seeds~\citep{fel2025archetypal}. 
Domain membership provides auxiliary information, but by itself it is insufficient to ensure consistent recovery. 
We therefore introduce a cross-domain simple inductive bias, the \emph{Iso-Energy Assumption}, which states that genuinely multimodal concepts should maintain consistent average energy across modalities. 
Because such concepts manifest in parallel across domains, their observations contain redundant signals that can be exploited to guide dictionary recovery toward more stable and plausible solutions. Formally, 
\begin{definition}[Iso-Energy Assumption]\label{def:isoenergy}
Let $\encoder:\bigcup_{\domain\in\Domains}\Xd\to\mathbb{R}^{\K}$ denote the learned encoder (VLM $\bm{f}$ composed with SAE $\bm{\phi}$).
We say $\encoder$ satisfies \emph{Iso-Energy} if the second moment of each coordinate is domain-invariant:
\[
\mathop{\mathbb{E}}\limits_{\Xvec \in \Xd}\!\left(\encoder(\Xvec)_k^2\right)
=
\mathop{\mathbb{E}}\limits_{\Xvec \in \mathcal{X}^{(\domain')}}\!\left(\encoder(\Xvec)_k^2 \right),
~~
\text{with}~ k \in \llbracket 1,K \rrbracket,~~(\domain,\domain')\in \Domains^2.
\]
\end{definition}
Iso-Energy provides a testable, domain-agnostic inductive bias: \crossmodal features should correspond to \bimodal concepts and maintain comparable energy across domains, whereas \modalityspecific factors need not.
This constraint narrows the solution set without requiring instance-level matching.
It is reminiscent of the rosetta neurons \citep{dravid2023rosetta,gresele2020incomplete} and the platonic representation hypothesis \citep{pmlr-v235-huh24a}, suggesting that independent models (in our case, the vision and language encoders) converge on shared features.

\paragraph{Operationalization.}  
To operationalize this assumption, we adopt as our base the Matching Pursuit (MP) sparse autoencoder, recently introduced by \citet{costa2025flat} and rooted in the original framework of~\citet{mallat1993matching}. MP enforces $\ell_0$ sparsity through sequential residual updates, which aligns with the sparse generative model introduced above and has shown strong empirical reconstruction performance in vision compared to ReLU~\citep{bricken2023monosemanticity}, JumpReLU~\citep{rajamanoharan2024jumping}, or BatchTopK SAEs~\citep{gao2024scaling,bussmann2024batchtopk} (see \cref{app:SAE_architecture} for their formal definition).
We then incorporate Iso-Energy into the sparse autoencoder recipe as a soft regularizer: a small penalty encouraging the activations of the same atom to maintain similar strength across domains. Given $\ell_2$-normalized codes $\Z^{(\domain)},\Z^{(\domain')}\in\mathbb{R}^{b\times K}$, our training loss becomes:
\begin{equation}
\label{eq:alignment_penalty}
\mathcal{L}_{\mathrm{\SAEA{SAE-A}}}= \mathcal{L}_{\mathrm{\SAE{SAE}}}+ \beta \cdot\SAEA{\mathcal{L}_{\mathrm{align}}} 
\quad \text{with} \quad 
\SAEA{\mathcal{L}_{\mathrm{align}}} = -\tfrac{1}{b}\,\mathrm{Tr}\!\left(\Z^{(\domain)} \,\Z^{(\domain')\top}\right)
\end{equation}
with $\mathcal{L}_{\mathrm{\SAE{SAE}}}$ being the training loss of a standard \SAE{SAE}, typically an $\ell_2$ norm measuring reconstruction error with a sparsity constraint on the encoder's output. $\SAEA{\mathcal{L}_{\mathrm{align}}}$ is the soft inductive bias,
where $b$ is the batch size and $K$ the number of latent atoms. With a small weight ($\beta \approx 10^{-4}$, see \cref{app:beta_tuning}), this regularizer gently biases the dictionary toward \bimodal features while preserving reconstruction performance. %
In Section~\ref{sec:results}, we demonstrate that this bias does not force the model to create multimodal concepts that don't exist in the input data. It is important to note that the minimum of this loss function is consistent with the iso-energy principle for cross-modal concepts formalized in \cref{def:isoenergy}: maximizing the cosine similarity of codes coming from the aligned samples from the different modalities leads to codes with the same energy across modalities.
From this point onward, we denote the unregularized model as \SAE{SAE}, and its alignment-augmented counterpart as \SAEA{SAE-A}. %

\section{Recovering Multimodal Structure with Aligned SAEs}
\label{sec:results}

\paragraph{Sanity check.}  
Before turning to large-scale embeddings, we validate our approach on controlled toy data with known ground truth. We construct two synthetic data-generating processes that mimic CLIP-like cosine similarity statistics (\cref{fig:toy_cosim}) while giving us exact control over which atoms are \unimodal and which are \bimodal (\cref{app:DGP_def}). Each sample is generated by drawing a sparse code \(\bm{z}\) with \(\|\bm{z}\|_{0}=L=20\) and producing normalized embeddings
\[
\bm{x}^{(\mu)} = \bm{z}^{\top} \bm{D}^{(\mu)}, \qquad 
\bm{x}^{(\nu)} = \bm{z}^{\top} \bm{D}^{(\nu)}, \qquad 
\|\bm{x}^{(\mu)}\|_{2}=\|\bm{x}^{(\nu)}\|_{2}=1 .
\]
Two parameters govern the process. First, \(\tau_{1}\) sets the cross-modal alignment of each shared (\bimodal) atom \(k\in B\) by fixing the cosine between its per-modality components. Second, \(\tau_{2}\) fixes the average paired image–text similarity at the embedding level:
\[
\cos\!\angle\!\big(\bm{d}_{k}^{(\mu)},\bm{d}_{k}^{(\nu)}\big)
=\frac{\langle \bm{d}_{k}^{(\mu)},\bm{d}_{k}^{(\nu)}\rangle}{\|\bm{d}_{k}^{(\mu)}\|_{2}\,\|\bm{d}_{k}^{(\nu)}\|_{2}}
=\tau_{1}, \quad \text{and} \quad \mathbb{E}\!\left[\left\langle \bm{x}^{(\mu)}, \bm{x}^{(\nu)} \right\rangle\right]=\tau_{2}.
\]
with the Iso-Energy case corresponding to \(\tau_{1}=1\) (the bimodal atom is identical across modalities up to scale).  

Using this approach, we can generate a distribution that matches the alignment statistics found in CLIP embeddings. We then compare a standard Sparse Autoencoder (SAE) to its "aligned" variant trained with the loss function in Eq.~\ref{eq:alignment_penalty}. This new loss encourages energy consistency without compromising reconstruction quality, as indicated by the high R-squared value (\(R^{2}\geq 0.99\)) we observed in all our experiments.
Recovery is evaluated against \((\bm{D}^{\star}, \bm{Z}^{\star})\) using the Wasserstein distance \(\mathcal{W}\) between learned and true atoms (lower is better) and mean matching accuracy (mma) of usage patterns after optimal bipartite matching (higher is better). When Iso-Energy is violated (\(\tau_{1}\neq 1\)), both \SAE{SAE} and \SAEA{SAE-A} recover the dictionary equally well (\(\mathcal{W}\approx 0.19,\ \mathrm{mma}\approx 0.82\)), confirming that the regularizer does not hallucinate \bimodal atoms. When Iso-Energy holds (\(\tau_{1}=1\)), the standard \SAE{SAE} fails (\(\mathcal{W}=0.396,\ \mathrm{mma}=0.29\)) while the aligned \SAEA{SAE-A} succeeds (\(\mathcal{W}=0.184,\ \mathrm{mma}=0.52\)), showing that the inductive bias is neutral when unnecessary and decisive when appropriate. Full experimental details are provided in \cref{app:DGP_experiments}. Having validated the principle in controlled settings, we next study its behavior on embeddings from foundation-scale VLMs.

\subsection{Empirical Evaluation on Vision–Language Foundations}

\newcommand{\best}[1]{\textbf{#1}}
\newcommand{\separator}{{~}}

\begin{table*}[t]
  \centering
  \vspace{-8mm}
  \scalebox{0.9}{
  \begin{tabular}{lcccccc}
    \toprule
    Metric & CLIP & CLIP-L & OpenCLIP & OpenCLIP-L & SigLIP & SigLIP2 \\
    \midrule
    MSE ($\downarrow$)   
      & \best{0.141} \separator 0.163 
      & \best{0.207} \separator 0.213 
      & \best{0.246} \separator 0.257 
      & \best{0.244} \separator 0.253 
      & \best{0.212} \separator 0.214 
      & \best{0.115} \separator 0.115 \\
    $R^2$ ($\uparrow$)   
      & \best{0.859} \separator 0.837 
      & \best{0.793} \separator 0.787 
      & \best{0.754} \separator 0.742 
      & \best{0.755} \separator 0.747 
      & \best{0.788} \separator 0.784 
      & 0.884 \separator \best{0.885} \\
    \midrule 
    $p_{\mathrm{acc}}$ ($\uparrow$) 
      & 0.847 \separator \best{0.915} 
      & 0.843 \separator \best{0.868} 
      & 0.849 \separator \best{0.880} 
      & 0.845 \separator \best{0.873} 
      & 0.897 \separator \best{0.899} 
      & 0.886 \separator \best{0.903} \\
    $\rho$ ($\uparrow$)  
      & 0.327 \separator \best{4.232} 
      & 1.566 \separator \best{4.086} 
      & 4.072 \separator \best{16.02} 
      & 8.737 \separator \best{16.58} 
      & 1.370 \separator \best{2.182} 
      & 0.713 \separator \best{1.475} \\
    FDA ($\uparrow$)     
      & 2.630 \separator \best{4.559} 
      & 3.914 \separator \best{4.800} 
      & 4.369 \separator \best{8.160} 
      & 9.787 \separator \best{16.49} 
      & 8.831 \separator \best{34.95} 
      & 8.246 \separator \best{18.24} \\
    $\delta_{\mathrm{r}}$ ($\downarrow$) 
      & 0.224 \separator \best{0.125} 
      & 0.039 \separator \best{0.021} 
      & 0.037 \separator \best{0.018} 
      & 0.001 \separator \best{-0.000} 
      & 0.023 \separator \best{0.006} 
      & 0.007 \separator \best{-0.006} \\
    \bottomrule
  \end{tabular}}
  \caption{\textbf{Comparison of unregularized vs.\ aligned sparse autoencoders across six VLMs.}  
We report reconstruction fidelity (MSE, $R^2$) and multimodality-sensitive metrics: probing accuracy ($p_{\mathrm{acc}}$), functional alignment ($\rho$), Functional and Distributional Agreement (FDA), and interventional robustness ($\delta_{\mathrm{r}}$). Left values correspond to \SAE{SAE}, right values to \SAEA{SAE-A}. While reconstruction is nearly identical, the aligned variant consistently improves on all multimodality metrics : $\rho$ increases by more than an order of magnitude, FDA doubles (or triples), and $\delta_{\mathrm{r}}$ remains near zero, demonstrating that bimodal atoms alone sustain cross-modal alignment while unimodal ones contribute little beyond modality specific bias.}
  \label{tab:metrics}
  \vspace{-4mm}
\end{table*}

\paragraph{Setup.} We train both \SAE{SAE} and its aligned counterpart \SAEA{SAE-A} on activations from six representative models: CLIP (ViT-B/32, ViT-L/14) \citep{radford2021}, OpenCLIP, OpenCLIP-L \citep{cherti2023reproducible}, SigLIP \citep{zhai2023sigmoid}, and SigLIP2 \citep{tschannen2025siglip2} (see \cref{app:models} for more details on these models). All SAE models are trained with identical hyperparameters (expansion ratio 8, target $\ell_0 = 20$) using a subset of 1 million LAION embeddings chosen at random, ensuring that observed differences are attributable to the Iso-Energy regularizer rather than training artifacts.  
Classical reconstruction metrics (MSE, $R^2$) reveal little difference between the two methods, confirming that the alignment penalty does not compromise fidelity. Yet reconstruction alone is uninformative: two dictionaries with identical $R^2$ may encode radically different concept structures. To capture these distinctions, we introduce a suite of multimodality-sensitive metrics, each designed to probe a complementary aspect of the recovered dictionary. For the full definitions of these metrics, we refer the reader to \cref{app:metrics}.  
\vspace{-3mm}
\paragraph{Tuning $\beta$.}
In \Cref{eq:alignment_penalty}, we select $\beta$  via a log-sweep over $\{10^{-6},\ldots,10^{-1}\}$ and pick the largest value such that the difference in explained variance compared to non regularised \SAE{SAE} is less than $0.05$. This rule reproduces our settings without hand-tuning and, under this criterion, we also observe no degenerate (always-on) features in practice; see \cref{app:beta_tuning} for details.

\paragraph{Metrics:}\hfill

\textit{(i) Probing accuracy $p_{\mathrm{acc}}$} (\cref{metric:pacc}). This metric tests whether the geometry of dictionary atoms reflects the modality structure of the embedding space. \textit{Unimodal} atoms should act as strong linear classifiers for domain membership, while \bimodal atoms should remain domain-agnostic. $p_{\mathrm{acc}}$ gathers all these classifiers' performance in a single scalar. High $p_{\mathrm{acc}}$ therefore indicates that the dictionary correctly distinguishes modality-specific and shared information. %

\textit{(ii) Functional alignment $\rho$} (\cref{metric:rho}). Beyond geometry, we ask which features actually drive cross-modal alignment. $\rho$ measures the ratio of alignment explained by \bimodal versus \unimodal features through instance-level co-activation patterns. Values of $\rho > 1$ indicate that alignment is predominantly supported by \bimodal concepts, consistent with the Iso-Energy Assumption. 

\textit{(iii) Functional and Distributional Agreement (FDA)} (\cref{metric:FDA}). At the population level, FDA measures whether the functional role of features is consistent with their geometric organization. While $\rho$ is local and instance-based, FDA checks distributional alignment across large batches. Intuitively, high FDA confirms that \bimodal atoms globally sustain the match between modalities. Neither $\rho$ nor FDA considers the contrastive aspect of alignment, which is done by our final metric.

\textit{(iv) Interventional robustness $\delta_{\mathrm{r}}$} (\cref{metric:deltar}). To test for causality, we measure how performance in retrieval tasks changes when we remove \unimodal features. We use $\delta_{\mathrm{r}}$ to quantify the change in recall after ablating these \unimodal atoms. A small $\delta_{\mathrm{r}}$ suggests that these atoms aren't essential, while a large drop indicates they are necessary for the contrastive aspect of the embeddings.%
\paragraph{Results.}As shown in Table~\ref{tab:metrics}, \SAEA{SAE-A} matches \SAE{SAE}'s MSE and $R^2$ scores but consistently outperforms it on all multimodality-sensitive metrics. In particular, $\rho$ increases by more than an order of magnitude across models, indicating that alignment is almost entirely driven by \bimodal atoms. Likewise, FDA improves substantially, confirming consistency at the distributional level. Probing accuracy improves modestly, reflecting a more distinct geometric separation, while $\delta_{\mathrm{r}}$ remains small, demonstrating that \unimodal atoms can be safely ablated without harming retrieval performance and therefore do not contain cross-modal information.
Taken together, these results offer converging evidence that, in \SAEA{SAE-A}, \unimodal atoms encode modality-specific information, while \bimodal atoms, identified via Iso-Energy, constitute the principal basis for cross-modal alignment. In \SAE{SAE}, however, the picture is less clear, and some \unimodal atoms carry \crossmodal information.

These findings indicate that the Iso-Energy Assumption reveals a qualitatively distinct structure: a compact \crossmodal subspace, entirely spanned by \bimodal concepts, that retains the contrastive power of the original embeddings while fully supporting cross-modal alignment.
As we will see in \cref{sec:actionability}, making this structure explicit also makes it actionable: it allows us to manipulate representations directly, from closing the modality gap to performing in-distribution semantic arithmetic. But before making this structure actionable, we propose an in-depth analysis of the solution to understand some aspects of the geometry.

\subsection{Concept Geometry Under Iso-Energy}  
\begin{figure}[t]
  \centering
  \includegraphics[width=\linewidth]{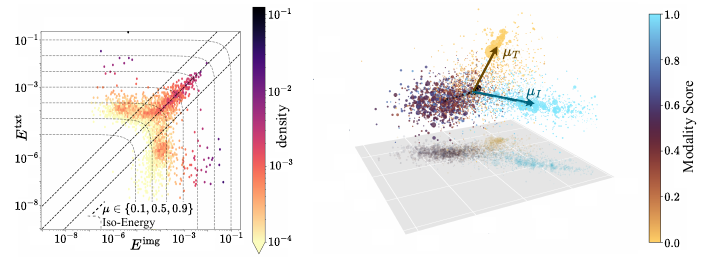}
  \caption{\textbf{(Left) Energy distribution across learned atoms.} The majority of features are \bimodal and medium-energy (inside diagonals defined by constant modality score $\mu$-\cref{app:modality_metrics}), while only a handful of high-energy \unimodal features dominate modality-specific variance. These high-energy \unimodal atoms behave like modality biases and are responsible for much of the observed modality gap.
  \textbf{(Right) Geometric organization of concepts.} Low-dimensional projections reveal three distinct clusters: image-only, text-only, and \bimodal. \textit{Unimodal} atoms align with the modality cones of the embedding space, while \bimodal atoms occupy a modality-agnostic subspace orthogonal to these directions, thereby sustaining cross-modal alignment.
  }
  \label{fig:energy_distribution}
\end{figure}
Having established the effectiveness of Iso-Energy on real VLMs, we now turn to a qualitative characterization of the learned concepts. This analysis focuses on how energy is distributed, how atoms are geometrically organized, and how interpretable they are in practice.  
\paragraph{Energy distribution.} A first observation concerns how energy is distributed across modalities. As illustrated in \Cref{fig:energy_distribution}, the vast majority of features are \bimodal and exhibit moderate energy levels, whereas a small subset of \unimodal features concentrates disproportionately high energy. These high-energy \unimodal atoms dominate modality-specific variance and act as biases, as further detailed in \cref{app:unimodal_biases}.  
\paragraph{Geometric organization.}  The high values of $p_{\mathrm{acc}}$ show a near-perfect alignment between latent structure and concept organization. This can be visualized by projecting the learned atoms into a low-dimensional space, revealing a clear separation into three clusters: image-only, text-only, and \bimodal (\cref{fig:energy_distribution}, right). \textit{Unimodal} atoms align tightly with the cones spanned by image and text embeddings, reproducing the geometry of the modality gap. In contrast, \bimodal atoms occupy a modality-agnostic subspace, orthogonal to the \unimodal directions. This geometry explains why \bimodal (resp. \unimodal) atoms carry cross-modal (resp. modality-specific) information.  
\paragraph{Qualitative inspection.}  Finally, we examine the semantic meaning of individual atoms by inspecting their most activating examples (\cref{app:gallery}). \textit{Bimodal} atoms are semantically stable, consistently capturing the same concept across modalities (e.g., colors, objects, actions). \textit{Unimodal} atoms, on the other hand, often reflect idiosyncratic modality-specific signals (such as poor cropping artifacts in images or ``name patterns'' in text) that contribute little to cross-modal semantics. Together, these three perspectives converge on the same conclusion: \textbf{\unimodal atoms function as modality-specific biases, while \bimodal atoms encode the shared conceptual backbone that supports cross-modal alignment.}

\section{Actionable Interventions on Multimodal Embeddings}
\label{sec:actionability}

Together, these analyses demonstrate that our dictionaries yield a structured and interpretable decomposition of multimodal embeddings. Building on this foundation, we now shift from analysis to intervention: once Iso-Energy isolates the cross-modal backbone, it enables direct manipulation of embeddings in ways that were previously inaccessible. 
In particular, we consider the minimal intervention that removes modality information and examine its effect on two structural aspects. In fact, we show that such transformations are possible without altering ranking-related capabilities even when the modality information is non-trivial (e.g., not a bias, as proposed by \citet{zhang2023diagnosing}'s Proposition 1) and under realistic assumptions of orthogonality.
\begin{proposition}[Modality information removal impact on ranking.]
\label{prop:modality_removal}
Consider $\bm{v} \in \R^d$ with decomposition $\bm{v} = \m(\x) + \c(\x)$ where $\m(\x) \in \M$ encodes modality-specific information, $\c(\x) \in \C$ captures cross-modal content, and $\R^d = \M \oplus \C$. If visual and textual information are orthogonal, then ranking preservation is guaranteed.
\end{proposition}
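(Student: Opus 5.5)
We first make the statement precise. Let $\P_{\C}$ be the projection onto $\C$ along $\M$, so that the intervention is $\bm{v}\mapsto \P_{\C}\bm{v}=\c(\x)$. Since the paper allows $\m$ to depend on $\x$, we read ``visual and textual information are orthogonal'' as the pair of hypotheses
\[
\M=\M_I\oplus\M_T,\qquad \M_I\perp\M_T,\qquad \M\perp\C .
\]
Here images carry $\m_I(\x)\in\M_I$ and texts carry $\m_T(\y)\in\M_T$. Ranking preservation means the following: for a query $\bm{q}$ of one modality and candidates $\bm{v}_1,\dots,\bm{v}_n$ of the other, ordering the candidates by $\langle \bm{q},\bm{v}_j\rangle$ gives the same ranking as ordering them by $\langle \P_{\C}\bm{q},\P_{\C}\bm{v}_j\rangle$.

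\textbf{Case 1: cross-modal retrieval.} Take $\bm{q}=\m_T(\y)+\c(\y)$ and $\bm{v}_j=\m_I(\x_j)+\c(\x_j)$. Expanding the inner product bilinearly gives four terms:
\[
\langle \bm{q},\bm{v}_j\rangle=\langle \m_T(\y),\m_I(\x_j)\rangle+\langle \m_T(\y),\c(\x_j)\rangle+\langle \c(\y),\m_I(\x_j)\rangle+\langle \c(\y),\c(\x_j)\rangle .
\]
The first term vanishes because $\M_I\perp\M_T$. The two cross terms vanish because $\M\perp\C$. Hence $\langle \bm{q},\bm{v}_j\rangle=\langle \P_{\C}\bm{q},\P_{\C}\bm{v}_j\rangle$ exactly, for every $j$. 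Identical scores give an identical ordering, so the ranking is preserved.

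\textbf{Case 2: cosine similarity.} Scores are inner products divided by $\|\bm{q}\|\|\bm{v}_j\|$. The query norm is a common positive factor across candidates and does not affect the ranking. The candidate norm does matter: before removal it is $\|\bm{v}_j\|$, after removal it is $\|\c(\x_j)\|$. Orthogonality gives $\|\bm{v}_j\|^2=\|\m_I(\x_j)\|^2+\|\c(\x_j)\|^2$. If the modality-specific energy $\|\m_I(\x_j)\|$ is constant across candidates, the ranking is preserved. This holds, for instance, when $\m_I$ is a pure bias (recovering Proposition~1 of \citet{zhang2023diagnosing}), or when embeddings are renormalised after the projection. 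Without one of these conditions, cosine ranking is not preserved in general, and we state this restriction explicitly.

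\textbf{Main obstacle.} The algebra above is routine. The real difficulty is choosing hypotheses that are genuinely ``realistic'' yet strong enough. Two points need care. First, orthogonality of $\M$ and $\C$ is needed, not just a direct sum $\R^d=\M\oplus\C$: with an oblique decomposition the cross terms $\langle \m,\c\rangle$ survive. We would either assume $\M\perp\C$ or define $\C:=\M^{\perp}$. Second, the norm issue in Case 2 must be handled as described. We would present Case 1 as the main statement and Case 2 as a corollary under its added condition.
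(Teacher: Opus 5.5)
Your argument is correct under your reading of the statement, but it takes a different route from the paper. The paper (\cref{app:why_remove_bias}) removes modality information from the query only. It compares $\cos(\bm{v},\bm{y}_i)$ with $\cos(\bm{c},\bm{y}_i)$, where $\bm{v}=\bm{m}+\bm{c}$ is the query and the unit-norm candidates $\bm{y}_i$ are left untouched. It writes the score difference as $(\Delta_c(i,j)+\Delta_m(i,j))/\|\bm{v}\|$, with $\Delta_m(i,j)=\alpha_i-\alpha_j$ and $\alpha_i=\langle\bm{m},\bm{y}_i\rangle$. It then observes that orthogonality of visual and textual information makes $\Delta_m$ vanish, so \cref{prop:constant-modality-invariance} applies with $\alpha=0$. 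The setup already assumes $\C\perp\M$, so your concern about oblique sums is excluded there. Since only the query's norm changes, and it is a common factor, cosine scoring needs no extra hypothesis.

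The margin formulation also gives more than the proposition. Invariance only needs $\alpha_i$ constant, not zero; \cref{cor:bounded-spread} handles approximate orthogonality; and \cref{prop:adaptive-modality-flip} says exactly when flips occur. Your two-sided projection instead models the intervention actually run in the experiments, where unimodal atoms are filtered from both images and texts. It gets exact equality of inner-product scores by bilinearity. It also correctly shows that under cosine scoring the candidate content norms then matter, which the paper's one-sided formalization never has to face.

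One correction to your Case 2, writing $\bm{q}=\bm{m}_q+\bm{c}_q$ and $\bm{v}_j=\bm{m}_j+\bm{c}_j$. Renormalising after the projection is \emph{not} a sufficient condition. With unit-norm originals, the pre-removal cosine is $\langle\bm{c}_q,\bm{c}_j\rangle$ up to a common factor. The renormalised post-removal score is $\langle\bm{c}_q,\bm{c}_j\rangle/\|\bm{c}_j\|$ up to a common factor, and a varying $\|\bm{c}_j\|$ can reorder candidates. Two conditions do work: scoring the projected vectors by raw inner product without renormalising, or requiring $\|\bm{c}_j\|/\|\bm{v}_j\|$ to be constant in $j$. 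Your condition that $\|\bm{m}_j\|$ be constant implies the latter when the original embeddings have constant (e.g.\ unit) norm, but not in general.
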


\begin{proof}
See \cref{app:why_remove_bias}.
\end{proof}

\begin{figure}[t]
    \centering
    \vspace{-9mm}
    \includegraphics[width=1.\linewidth]{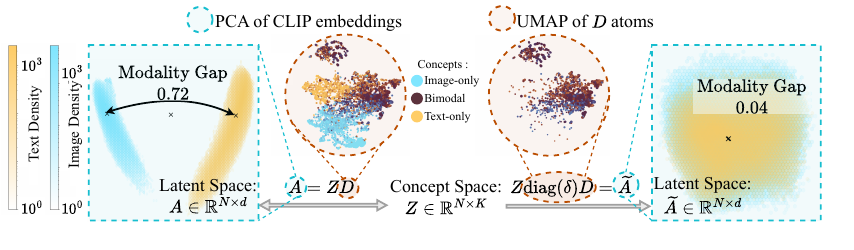}
\caption{
\textbf{The modality gap arises from multiple \unimodal concepts, while \bimodal concepts are sufficient to sustain cross-modal alignment.} 
\textbf{Left:} CLIP embeddings are re-expressed through a learned dictionary. A PCA projection highlights the separation between modalities, and a UMAP layout distinguishes two types of atoms: \unimodal and \bimodal. 
\textbf{Right:} Removing \unimodal atoms with a binary mask $\bm{\delta} \in \{0,1\}^K$ closes the gap. The reconstructed embeddings $\widetilde{\A}$ continue to support retrieval, indicating that \bimodal atoms alone capture the structure necessary for alignment.
}
    \label{fig:removing_modality_gap_banner}
    \vspace{-4mm}
\end{figure}

\begin{figure}[t]
    \centering
    \includegraphics[width=0.98\linewidth]{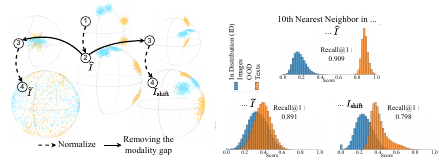}
    \caption{\textbf{Filtering \unimodal atoms closes the modality gap without harming performance.} \textbf{(Left)} Synthetic illustration comparing our method with the embedding shift baseline~\citep{liang2022mind}. Only our approach merges image and text distributions. \textbf{(Right)} Histogram of distances from each image (ID) and caption (OOD) embedding to its 10th nearest image neighbor. The modality gap is measured as the separation between the ID and OOD histograms. Filtering \unimodal atoms aligns the two distributions, whereas shift degrades performance and leaves the gap wide open.
    }
    \label{fig:removing_gap_main}
    \vspace{-4mm}
\end{figure}

\vspace{-3mm}
\paragraph{Closing the modality gap.} First, we find that by filtering out \unimodal atoms with a binary mask (\Cref{fig:removing_modality_gap_banner}), we nearly eliminate the modality gap while preserving retrieval and zero-shot performance.
As illustrated in~\Cref{fig:removing_gap_main}, this intervention merges the image and text distributions, unlike embedding shift baselines~\citep{liang2022mind}, which enforce matching means but still leave distributions well separated. Crucially, our approach preserves contrastive capabilities, showing that our dictionaries faithfully capture both contrastive and modality-specific information separately through \bimodal and \unimodal atoms.
Measuring the modality gap is typically done by measuring the distance between the mean of text and image distributions, or by measuring their linear separability. However, these fail to account for distributional mismatches remaining post intervention. For this reason, we chose to turn to the out-of-distribution (OOD) literature to measure the modality gap, and borrow a method described by \citet{sun2022out}. This method measures the separation between the blue and orange histograms in \Cref{fig:removing_gap_main}.
Our method consists in the following intervention on the activations, indicated with a tilde: $\widetilde{\A}\coloneqq(\Z \odot\bm{\delta})\D$, where $\Z\in\R^{N\times K}$ contains the sparse codes, $\D$ is the concept dictionary, and $\bm{\delta}$ is the binary mask filtering out \unimodal features, broadcast to the size of $\Z$. Reconstructed activations are indicated with a hat: $\widehat{\A}\coloneqq \Z\D$. The embedding shift intervention, and variants described in the appendix, consists in adding a modality-wise constant, essentially moving the mean of each distribution $\bm{\mu}_{I/T}$. \citet{liang2022mind}'s shift method transforms the images by $\bm{I}_{\text{shift}}\coloneqq \bm{I} - \bm{\mu}_I+\frac{\bm{\mu}_I+\bm{\mu}_T}{2}$, similarly for texts $\bm{T}_{\text{shift}}$.

\begin{figure}[t]
  \vspace{-12mm}
  \centering
  \includegraphics[width=0.8\textwidth]{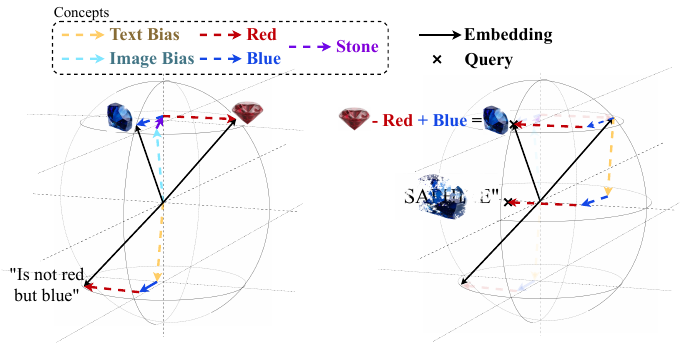}
  \vspace{-8mm}
  \caption{\textbf{Semantic vector arithmetic restricted to cross-modal information.} 
  Starting from a Ruby (red stone), the target is a Sapphire (blue stone).
  The classical edit vector $\bm{\Delta} =$ Text $+$ Blue $-$ Red is polluted by \unimodal directions, producing a query $\bm{Q} = \bm{I}_{\text{src}} + \bm{\Delta}$ that drifts out-of-distribution. 
  In contrast, restricting to \bimodal atoms yields $\bm{Q}_{\text{SAE}}$, which lies on the semantic manifold and reliably retrieves the correct target. 
  This illustrates how \unimodal features inject modality-specific bias into $\bm{\Delta}$, while Iso-Energy isolates the truly shared concepts that support valid semantic arithmetic.
  }
  \label{fig:retrieval_semantics}
  \vspace{-7mm}
\end{figure}

\vspace{-3mm}
\paragraph{Semantic vector arithmetic.} Iso-Energy also grounds semantic manipulations. Let $I_{\mathrm{src}}$ be the source image, and $\Delta$ be the textual description of the difference between the source and target image -- i.e., the relative caption. Restricting vector arithmetic to \bimodal atoms $Q_{\mathrm{SAE}}\coloneqq I_{\mathrm{src}} + \widetilde{\Delta}$ (\Cref{fig:retrieval_semantics}) produces queries that remain in-distribution (\Cref{fig:retrieval_OOD,tab:retrieval_OOD}) while preserving retrieval (\cref{app:arithmetic}). In contrast, classical arithmetic $Q = I_{\mathrm{src}} + \Delta$ incorporates \unimodal noise from the text embedding on top of the interesting cross-modal information, yielding degenerate queries that drift outside the embedding distribution. Our intervention consistently produces queries in-distribution without degrading performance on the FashionIQ benchmark~\cite{wu2021fashion}, demonstrating that the \bimodal backbone revealed by Iso-Energy is practically useful.

\begin{minipage}{0.65\linewidth}
\centering
    \centering
    \vspace{-2mm}
    \includegraphics[width=0.9\linewidth]{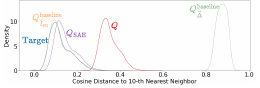}
    \captionof{figure}{\textbf{Out-of-distribution behavior of semantic queries.} Histogram of distances between each query and its 10th nearest neighbor in the target image distribution. Classical arithmetic $\bm{Q} = \bm{I}_{\text{src}} + \bm{\Delta}$ drifts out-of-distribution, while our \bimodal-restricted query $\bm{Q}_{\text{SAE}}$ remains aligned with the target space. Baselines using only the source image ($\bm{Q}^{\text{baseline}}_{\widehat{\bm{I}}_{\text{src}}}$) or only the caption difference ($\bm{Q}^{\text{baseline}}_{\widehat{\bm{\Delta}}}$) confirm the expected extremes: perfectly in-distribution and fully OOD, respectively.}
    \label{fig:retrieval_OOD}
\end{minipage}
\hfill
\begin{minipage}{0.32\linewidth}
    \centering
    \resizebox{\linewidth}{!}{
    \begin{tabular}{lc|c}
        \toprule
        OOD score ($\downarrow$) & $\bm{Q}$ & $\bm{Q}_\text{SAE}$ \\
        \midrule
        \quad CLIP & 0.97 & \textbf{0.77} \\
        \quad {CLIP-L} & 0.95 & \textbf{0.76} \\
        \quad {OpenCLIP} & 0.86 & \textbf{0.68} \\
        \quad {OpenCLIP-L} & 0.87 & \textbf{0.72} \\
        \quad {SigLIP} & 0.99 & \textbf{0.70} \\
        \quad {SigLIP2} & 0.99 & \textbf{0.61} \\
        
        \bottomrule
    \end{tabular}}
    \captionof{table}{OOD scores for classical queries vs. concept-based queries $Q_{\text{SAE}}$. Lower is better, best results per line in bold.}
    \label{tab:retrieval_OOD}
\end{minipage}%

Consider the following example. Let $I_{\mathrm{src}}$ be an image of a ruby, $\Delta$ be the prompt \textit{"is not red but blue"} and the target be an image of a saphire. This is the example illustrated in \Cref{fig:retrieval_semantics}. Adding $\Delta$ to the ruby produces a query that contains both textual-only and visual-only concepts and therefore that do not correspond to any realistic embedding. Adding only the cross-modal concepts of $\Delta$, however, produces a query that actually corresponds to an image of a saphire.

\section{Conclusion}
The Iso-Energy Assumption introduces a simple yet effective inductive bias for analyzing multimodal representations. In synthetic settings, it facilitates the recovery of ground-truth structure; in large-scale vision–language models, it consistently reveals a compact \bimodal basis that supports cross-modal alignment. This basis makes multimodal concepts accessible, isolates \unimodal concepts, closes the modality gap, and enables controlled semantic edits even in foundation-scale VLMs. In contrast, standard sparse autoencoders tend to learn a diffuse mixture of \unimodal and \bimodal atoms despite similar reconstruction quality, which obscures the structure that underpins alignment.

Despite its promising results, our approach has several limitations. First, the alignment penalty is sensitive to the choice of its weighting coefficient $\beta$: when too small, it becomes ineffective; when too large, it can lead to degenerate feature representations. In this work, we select $\beta$ through simple sweeps, but a more principled calibration—e.g., by coupling the penalty to a performance constraint or stability criterion—remains an important direction for future work.
Second, our analysis is conducted on reconstructions produced by the sparse autoencoder, rather than on the original embeddings. This constrains the reported performance to the autoencoder’s reconstruction regime.
Finally, our experiments are limited to dual-encoder vision–language models. Whether the same structural invariants and alignment properties hold in models with cross-attention mechanisms or generative training objectives remains an open question.

More broadly, our results support a hypothesis-driven approach to interpretability: inductive biases should be selected to reflect the structural properties relevant to downstream tasks, rather than applied indiscriminately. When properly aligned with task-relevant objectives, even simple biases can illuminate underlying mechanisms that might otherwise remain hidden—and offer actionable control without compromising core performance.

\bibliography{main}

@string{bmvc={Proceedings of the British Machine Vision Conference (BMVC)}}

@string{cvpr={Proceedings of the IEEE Conference on Computer Vision and Pattern Recognition (CVPR)}}

@string{eccv={Proceedings of the IEEE European Conference on Computer Vision (ECCV)}}

@string{iccv={Proceedings of the IEEE International Conference on Computer Vision (ICCV)}}

@string{iclr={Proceedings of the International Conference on Learning Representations (ICLR)}}

@string{neurips={Advances in Neural Information Processing Systems (NeurIPS)}}

@string{aaai={Proceedings of the AAAI Conference on Artificial Intelligence (AAAI)}}

@string{icml={Proceedings of the International Conference on Machine Learning (ICML)}}

@string{dsaa={Proceedings of the IEEE International Conference on data science and advanced analytics (DSAA)}}

@string{springer={Springer International Publishing}}

@string{arxiv={{A}r{X}iv e-print}}

@article{zeiler2013visualizing,
title={Visualizing and Understanding Convolutional Networks},
author={Matthew D Zeiler and Rob Fergus},
journal=eccv,
pages={818--833},
year={2014}
}

@article{fel2021sobol,
title={Look at the Variance! Efficient Black-box Explanations with Sobol-based Sensitivity Analysis},
author={Fel, Thomas and Cadene, Remi and Chalvidal, Mathieu and Cord, Matthieu and Vigouroux, David and Serre, Thomas},
journal=neurips,
volume={34},
pages={26005--26014},
year={2021}
}

@article{smilkov2017smoothgrad,
title={SmoothGrad: removing noise by adding noise},
author={Daniel Smilkov and Nikhil Thorat and Been Kim and Fernanda Viégas and Martin Wattenberg},
journal=icml,
year={2017}
}

@article{sundararajan2017axiomatic,
title={Axiomatic Attribution for Deep Networks},
author={Mukund Sundararajan and Ankur Taly and Qiqi Yan},
journal=icml,
pages={3319--3328},
year={2017}
}

@article{sixt2022users,
title={Do Users Benefit From Interpretable Vision? A User Study, Baseline, And Dataset},
author={Sixt, Leon and Schuessler, Martin and Popescu, Oana-Iuliana and Wei{\ss}, Philipp and Landgraf, Tim},
journal=iclr,
year={2022}
}

@article{gilpin2018explaining,
title={Explaining explanations: An overview of interpretability of machine learning},
author={Gilpin, Leilani H. and Bau, David and Yuan, Ben Z and Bajwa, Ayesha and Specter, Michael and Kagal, Lalana},
journal=dsaa,
pages={80--89},
year={2018}
}

@article{fel2021cannot,
  title={What i cannot predict, i do not understand: A human-centered evaluation framework for explainability methods},
  author={Colin, Julien and Fel, Thomas and Cad{\`e}ne, R{\'e}mi and Serre, Thomas},
  journal=neurips,
  volume={35},
  pages={2832--2845},
  year={2022}
}

@article{coco2014,
title={Microsoft coco: Common objects in context},
author={Lin, Tsung-Yi and Maire, Michael and Belongie, Serge and Hays, James and Perona, Pietro and Ramanan, Deva and Doll{\'a}r, Piotr and Zitnick, C Lawrence},
journal=eccv,
pages={740--755},
year={2014}
}

@article{imagenet2009,
  title={Imagenet: A large-scale hierarchical image database},
  author={Deng, Jia and Dong, Wei and Socher, Richard and Li, Li-Jia and Li, Kai and Fei-Fei, Li},
  journal=cvpr,
  pages={248--255},
  year={2009},
  organization={Ieee}
}

@article{petsiuk2018rise,
title={Rise: Randomized input sampling for explanation of black-box models},
author={Petsiuk, Vitali and Das, Abir and Saenko, Kate},
journal=bmvc,
  pages        = {151},
  publisher    = {{BMVA} Press},
year={2018}
}

@article{bricken2023monosemanticity,
title={Towards Monosemanticity: Decomposing Language Models With Dictionary Learning},
author={Bricken, Trenton and Templeton, Adly and Batson, Joshua and Chen, Brian and Jermyn, Adam and Conerly, Tom and Turner, Nick and Anil, Cem and Denison, Carson and Askell, Amanda and Lasenby, Robert and Wu, Yifan and Kravec, Shauna and Schiefer, Nicholas and Maxwell, Tim and Joseph, Nicholas and Hatfield-Dodds, Zac and Tamkin, Alex and Nguyen, Karina and McLean, Brayden and Burke, Josiah E and Hume, Tristan and Carter, Shan and Henighan, Tom and Olah, Christopher},
journal={Transformer Circuits Thread},
year={2023}
}

@article{elhage2022superposition,
title={Toy Models of Superposition},
author={Elhage, Nelson and Hume, Tristan and Olsson, Catherine and Schiefer, Nicholas and Henighan, Tom and Kravec, Shauna and Hatfield-Dodds, Zac and Lasenby, Robert and Drain, Dawn and Chen, Carol and Grosse, Roger and McCandlish, Sam and Kaplan, Jared and Amodei, Dario and Wattenberg, Martin and Olah, Christopher},
journal={Transformer Circuits Thread},
year={2022}
}

@inproceedings{kim2021hive,
author = {Kim, Sunnie S. Y. and Meister, Nicole and Ramaswamy, Vikram V. and Fong, Ruth and Russakovsky, Olga},
title = {HIVE: Evaluating the Human Interpretability of Visual Explanations},
year = {2022},
isbn = {978-3-031-19774-1},
publisher = {Springer-Verlag},
address = {Berlin, Heidelberg},
url = {https://doi.org/10.1007/978-3-031-19775-8_17},
doi = {10.1007/978-3-031-19775-8_17},
booktitle = {Computer Vision – ECCV 2022: 17th European Conference, Tel Aviv, Israel, October 23–27, 2022, Proceedings, Part XII},
pages = {280–298},
numpages = {19},
location = {Tel Aviv, Israel}
}

@inproceedings{hase2020evaluating,
    title = "Evaluating Explainable {AI}: Which Algorithmic Explanations Help Users Predict Model Behavior?",
    author = "Hase, Peter  and
      Bansal, Mohit",
    editor = "Jurafsky, Dan  and
      Chai, Joyce  and
      Schluter, Natalie  and
      Tetreault, Joel",
    booktitle = "Proceedings of the 58th Annual Meeting of the Association for Computational Linguistics",
    month = jul,
    year = "2020",
    address = "Online",
    publisher = "Association for Computational Linguistics",
    url = "https://aclanthology.org/2020.acl-main.491/",
    doi = "10.18653/v1/2020.acl-main.491",
    pages = "5540--5552"
}

@article{zhang2021invertible,
title={Invertible concept-based explanations for cnn models with non-negative concept activation vectors},
author={Zhang, Ruihan and Madumal, Prashan and Miller, Tim and Ehinger, Krista A and Rubinstein, Benjamin IP},
journal=aaai,
  volume={35},
  number={13},
  pages={11682--11690},
year={2021}
}

@article{makhzani2013k,
title={K-sparse autoencoders},
author={Makhzani, Alireza and Frey, Brendan},
journal=iclr,
year={2014}
}

@article{sun2022out,
title={Out-of-distribution detection with deep nearest neighbors},
author={Sun, Yiyou and Ming, Yifei and Zhu, Xiaojin and Li, Yixuan},
journal=icml,
pages={20827--20840},
year={2022}
}

@article{nguyen2021effectiveness,
title={The effectiveness of feature attribution methods and its correlation with automatic evaluation scores},
author={Nguyen, Giang and Kim, Daeyoung and Nguyen, Anh},
journal=neurips,
volume={34},
pages={26422--26436},
year={2021}
}

@article{ghorbani2019towards,
title={Towards automatic concept-based explanations},
author={Ghorbani, Amirata and Wexler, James and Zou, James Y and Kim, Been},
journal=neurips,
volume={32},
year={2019}
}

@article{fel2023craft,
title={CRAFT: Concept Recursive Activation FacTorization for Explainability},
author={Thomas Fel and Agustin Picard and Louis Bethune and Thibaut Boissin and David Vigouroux and Julien Colin and Rémi Cadène and Thomas Serre},
journal=cvpr,
pages={2711--2721},
year={2023}
}

@article{fel2023holistic,
title={A Holistic Approach to Unifying Automatic Concept Extraction and Concept Importance Estimation},
author={Fel, Thomas and Boutin, Victor and Moayeri, Mazda and Cadene, Remi and Bethune, Louis and Chalvidal, Mathieu and Serre, Thomas},
journal=neurips,
volume={36},
pages={54805--54818},
year={2023}
}

@article{bhalla2024interpreting,
  title={Interpreting clip with sparse linear concept embeddings (splice)},
  author={Bhalla, Usha and Oesterling, Alex and Srinivas, Suraj and Calmon, Flavio and Lakkaraju, Himabindu},
  journal=neurips,
  volume={37},
  pages={84298--84328},
  year={2024}
}

@article{zhai2023sigmoid,
title={Sigmoid loss for language image pre-training},
author={Zhai, Xiaohua and Mustafa, Basil and Kolesnikov, Alexander and Beyer, Lucas},
journal=iccv,
pages={11975--11986},
year={2023}
}

@article{bussmann2024batchtopk,
  title={Batchtopk sparse autoencoders},
  author={Bussmann, Bart and Leask, Patrick and Nanda, Neel},
  journal=arxiv,
  year={2024}
}

@article{rajamanoharan2024jumping,
title={Jumping ahead: Improving reconstruction fidelity with jumprelu sparse autoencoders},
author={Rajamanoharan, Senthooran and Lieberum, Tom and Sonnerat, Nicolas and Conmy, Arthur and Varma, Vikrant and Kramar, Janos and Nanda, Neel},
journal=arxiv,
year={2024}
}

@article{gao2024scaling,
title={Scaling and evaluating sparse autoencoders},
author={Gao, Leo and la Tour, Tom Dupre and Tillman, Henk and Goh, Gabriel and Troll, Rajan and Radford, Alec and Sutskever, Ilya and Leike, Jan and Wu, Jeffrey},
journal=iclr,
year={2025}
}

@inproceedings{
wattenberg2024relational,
title={Relational Composition in Neural Networks: A Survey and Call to Action},
author={Martin Wattenberg and Fernanda Vi{\'e}gas},
booktitle={ICML 2024 Workshop on Mechanistic Interpretability},
year={2024},
}

@article{bau2017network,
  title={Network dissection: Quantifying interpretability of deep visual representations},
  author={Bau, David and Zhou, Bolei and Khosla, Aditya and Oliva, Aude and Torralba, Antonio},
  journal=cvpr,
  pages={6541--6549},
  year={2017}
}

@article{cunningham2023sparse,
  title={Sparse autoencoders find highly interpretable features in language models},
  author={Cunningham, Hoagy and Ewart, Aidan and Riggs, Logan and Huben, Robert and Sharkey, Lee},
  journal=arxiv,
  year={2023}
}

@article{gorton2024missing,
title={The Missing Curve Detectors of InceptionV1: Applying Sparse Autoencoders to InceptionV1 Early Vision},
author={Gorton, Liv},
journal=arxiv,
year={2024}
}

@inproceedings{surkov2024unpacking,
  title={Unpacking sdxl turbo: Interpreting text-to-image models with sparse autoencoders},
  author={Surkov, Viacheslav and Wendler, Chris and Terekhov, Mikhail and Deschenaux, Justin and West, Robert and Gulcehre, Caglar},
  booktitle={Mechanistic Interpretability for Vision at CVPR 2025 (Non-proceedings Track)},
  year={2025}
}

@article{locatello2019challenging,
title={Challenging common assumptions in the unsupervised learning of disentangled representations},
author={Locatello, Francesco and Bauer, Stefan and Lucic, Mario and Raetsch, Gunnar and Gelly, Sylvain and Scholkopf, Bernhard and Bachem, Olivier},
journal=icml,
pages={4114--4124},
year={2019}
}

@article{gresele2020incomplete,
title={The incomplete rosetta stone problem: Identifiability results for multi-view nonlinear ica},
author={Gresele, Luigi and Rubenstein, Paul K and Mehrjou, Arash and Locatello, Francesco and Scholkopf, Bernhard},
journal={Uncertainty in Artificial Intelligence},
year={2020}
}

@inproceedings{dravid2023rosetta,
  title={Rosetta neurons: Mining the common units in a model zoo},
  author={Dravid, Amil and Gandelsman, Yossi and Efros, Alexei A and Shocher, Assaf},
  booktitle={Proceedings of the IEEE/CVF International Conference on Computer Vision},
  pages={1934--1943},
  year={2023}
}

@InProceedings{pmlr-v235-huh24a,
  title = 	 {Position: The Platonic Representation Hypothesis},
  author =       {Huh, Minyoung and Cheung, Brian and Wang, Tongzhou and Isola, Phillip},
  booktitle = 	 {Proceedings of the 41st International Conference on Machine Learning},
  pages = 	 {20617--20642},
  year = 	 {2024},
  editor = 	 {Salakhutdinov, Ruslan and Kolter, Zico and Heller, Katherine and Weller, Adrian and Oliver, Nuria and Scarlett, Jonathan and Berkenkamp, Felix},
  volume = 	 {235},
  series = 	 {Proceedings of Machine Learning Research},
  month = 	 {21--27 Jul},
  publisher =    {PMLR},
  url = 	 {https://proceedings.mlr.press/v235/huh24a.html}
}

@article{khemakhem2020variational,
title={Variational autoencoders and nonlinear ica: A unifying framework},
author={Khemakhem, Ilyes and Kingma, Diederik and Monti, Ricardo and Hyvarinen, Aapo},
journal=icml,
year={2020}
}

@article{olshausen1996emergence,
  title={Emergence of simple-cell receptive field properties by learning a sparse code for natural images},
  author={Olshausen, Bruno A and Field, David J},
  journal={Nature},
  volume={381},
  number={6583},
  pages={607--609},
  year={1996},
  publisher={Nature Publishing Group UK London}
}

@article{olshausen1997sparse,
  title={Sparse coding with an overcomplete basis set: A strategy employed by V1?},
  author={Olshausen, Bruno A and Field, David J},
  journal={Vision research},
  volume={37},
  number={23},
  pages={3311--3325},
  year={1997},
  publisher={Elsevier}
}

@book{dumitrescu2018dictionary,
  title={Dictionary learning algorithms and applications},
  author={Dumitrescu, Bogdan and Irofti, Paul},
  year={2018},
  publisher={Springer}
}

@article{rubinstein2010dictionaries,
  title={Dictionaries for sparse representation modeling},
  author={Rubinstein, Ron and Bruckstein, Alfred M and Elad, Michael},
  journal={Proceedings of the IEEE},
  volume={98},
  number={6},
  pages={1045--1057},
  year={2010},
  publisher={IEEE}
}

@article{mairal2014sparse,
  title={Sparse modeling for image and vision processing},
  author={Mairal, Julien and Bach, Francis and Ponce, Jean and others},
  journal={Foundations and Trends{\textregistered} in Computer Graphics and Vision},
  volume={8},
  number={2-3},
  pages={85--283},
  year={2014},
  publisher={Now Publishers, Inc.}
}

@article{lee2006efficient,
title={Efficient sparse coding algorithms},
author={Lee, Honglak and Battle, Alexis and Raina, Rajat and Ng, Andrew},
journal=neurips,
volume={19},
year={2006}
}

@article{foldiak2008sparse,
title={Sparse coding},
author={Foldiak, Peter and Endres, Dominik Maria},
journal=arxiv,
year={2008}
}

@book{elad2010sparse,
title={Sparse and redundant representations: from theory to applications in signal and image processing},
author={Elad, Michael},
publisher=springer,
year={2010}
}

@article{rentzeperis2023beyond,
title={Beyond l1 sparse coding in V1},
author={Rentzeperis, Ilias and Calatroni, Luca and Perrinet, Laurent U and Prandi, Dario},
journal={PLoS Computational Biology},
  volume={19},
  number={9},
  pages={e1011459},
year={2023}
}

@article{tovsic2011dictionary,
  title={Dictionary learning},
  author={To{\v{s}}i{\'c}, Ivana and Frossard, Pascal},
  journal={IEEE Signal Processing Magazine},
  volume={28},
  number={2},
  pages={27--38},
  year={2011},
  publisher={IEEE}
}

@article{fel2025archetypal,
title={Archetypal sae: Adaptive and stable dictionary learning for concept extraction in large vision models},
author={Fel, Thomas and Lubana, Ekdeep Singh and Prince, Jacob S and Kowal, Matthew and Boutin, Victor and Papadimitriou, Isabel and Wang, Binxu and Wattenberg, Martin and Ba, Demba and Konkle, Talia},
journal=icml,
year={2025}
}

@article{hindupur2025projecting,
title={Projecting assumptions: The duality between sparse autoencoders and concept geometry},
author={Hindupur, Sai Sumedh R and Lubana, Ekdeep Singh and Fel, Thomas and Ba, Demba},
journal=arxiv,
year={2025}
}

@article{johnson1984extensions,
  title={Extensions of Lipschitz mappings into a Hilbert space},
  author={Johnson, William B and Lindenstrauss, Joram and others},
  journal={Contemporary mathematics},
  volume={26},
  number={189-206},
  pages={1},
  year={1984}
}

@inproceedings{larsen2017optimality,
  title={Optimality of the Johnson-Lindenstrauss lemma},
  author={Larsen, Kasper Green and Nelson, Jelani},
  booktitle={2017 IEEE 58th annual symposium on foundations of computer science (FOCS)},
  pages={633--638},
  year={2017},
  organization={IEEE}
}

@article{tschannen2025siglip2,
title={SigLIP 2: Multilingual Vision-Language Encoders with Improved Semantic Understanding, Localization, and Dense Features}, 
author={Michael Tschannen and Alexey Gritsenko and Xiao Wang and Muhammad Ferjad Naeem and Ibrahim Alabdulmohsin and Nikhil Parthasarathy and Talfan Evans and Lucas Beyer and Ye Xia and Basil Mustafa and Olivier Hénaff and Jeremiah Harmsen and Andreas Steiner and Xiaohua Zhai},
journal=arxiv,
year={2025},
}

@article{fini2024multimodal,
  title={Multimodal autoregressive pre-training of large vision encoders},
  author={Fini, Enrico and Shukor, Mustafa and Li, Xiujun and Dufter, Philipp and Klein, Michal and Haldimann, David and Aitharaju, Sai and da Costa, Victor G Turrisi and B{\'e}thune, Louis and Gan, Zhe and others},
  journal=cvpr,
  pages={9641--9654},
  year={2025}
}

@article{liang2022mind,
title={Mind the gap: Understanding the modality gap in multi-modal contrastive representation learning},
author={Liang, Victor Weixin and Zhang, Yuhui and Kwon, Yongchan and Yeung, Serena and Zou, James Y},
journal=neurips,
volume={35},
pages={17612--17625},
year={2022}
}

@article{chen2024spatialvlm,
  title={Spatialvlm: Endowing vision-language models with spatial reasoning capabilities},
  author={Chen, Boyuan and Xu, Zhuo and Kirmani, Sean and Ichter, Brain and Sadigh, Dorsa and Guibas, Leonidas and Xia, Fei},
  booktitle={Proceedings of the IEEE/CVF Conference on Computer Vision and Pattern Recognition},
  journal=cvpr,
  pages={14455--14465},
  year={2024}
}

@article{singhal2023large,
title={Large language models encode clinical knowledge},
author={Karan Singhal and Shekoofeh Azizi and Tao Tu and S. Sara Mahdavi and Jason Wei and Hyung Won Chung and Nathan Scales and Ajay Tanwani and Heather Cole-Lewis and Stephen Pfohl and Perry Payne and Martin Seneviratne and Paul Gamble and Chris Kelly and Abubakr Babiker and Nathanael Schärli and Aakanksha Chowdhery and Philip Mansfield and Dina Demner-Fushman and Blaise Agüera y Arcas and Dale Webster and Greg S. Corrado and Yossi Matias and Katherine Chou and Juraj Gottweis and Nenad Tomasev and Yun Liu and Alvin Rajkomar and Joelle Barral and Christopher Semturs and Alan Karthikesalingam and Vivek Natarajan},
  journal={Nature},
  volume={620},
  number={7972},
  pages={172--180},
  year={2023},
  publisher={Nature Publishing Group}
}

@article{zhou2024vision,
title={Vision language models in autonomous driving: A survey and outlook},
author={Zhou, Xingcheng and Liu, Mingyu and Yurtsever, Ekim and Zagar, Bare Luka and Zimmer, Walter and Cao, Hu and Knoll, Alois C},
journal={IEEE Transactions on Intelligent Vehicles},
year={2024},
publisher={IEEE}
}

@article{shukor2025smolvla,
title={SmolVLA: A Vision-Language-Action Model for Affordable and Efficient Robotics},
author={Shukor, Mustafa and Aubakirova, Dana and Capuano, Francesco and Kooijmans, Pepijn and Palma, Steven and Zouitine, Adil and Aractingi, Michel and Pascal, Caroline and Russi, Martino and Marafioti, Andres and Alibert, Simon and Cord, Matthieu and Wolf, Thomas and Cadene, Remi},
journal=arxiv,
year={2025},
}

@article{hyvarinen2016unsupervised,
  title={Unsupervised feature extraction by time-contrastive learning and nonlinear ica},
  author={Hyvarinen, Aapo and Morioka, Hiroshi},
  journal={Advances in neural information processing systems},
  year={2016}
}

@article{dreyer2025mechanistic,
  title={Mechanistic understanding and validation of large AI models with SemanticLens},
  author={Dreyer, Maximilian and Berend, Jim and Labarta, Tobias and Vielhaben, Johanna and Wiegand, Thomas and Lapuschkin, Sebastian and Samek, Wojciech},
  journal={Nature Machine Intelligence},
  pages={1--14},
  year={2025},
  publisher={Nature Publishing Group UK London}
}

@article{rao2024discover,
author="Rao, Sukrut
and Mahajan, Sweta
and B{\"o}hle, Moritz
and Schiele, Bernt",
editor="Leonardis, Ale{\v{s}}
and Ricci, Elisa
and Roth, Stefan
and Russakovsky, Olga
and Sattler, Torsten
and Varol, G{\"u}l",
title="Discover-then-Name: Task-Agnostic Concept Bottlenecks via Automated Concept Discovery",
journal=eccv,
year="2024",
publisher="Springer Nature Switzerland",
address="Cham",
pages="444--461",
isbn="978-3-031-72980-5"
}

@article{pach2025sparse,
title={Sparse autoencoders learn monosemantic features in vision-language models},
author={Pach, Mateusz and Karthik, Shyamgopal and Bouniot, Quentin and Belongie, Serge and Akata, Zeynep},
journal=arxiv,
year={2025}
}

@article{cherti2023reproducible,
  title={Reproducible scaling laws for contrastive language-image learning},
  author={Cherti, Mehdi and Beaumont, Romain and Wightman, Ross and Wortsman, Mitchell and Ilharco, Gabriel and Gordon, Cade and Schuhmann, Christoph and Schmidt, Ludwig and Jitsev, Jenia},
  journal=cvpr,
  pages={2818--2829},
  year={2023}
}

@article{yang2022openood,
  title={Openood: Benchmarking generalized out-of-distribution detection},
  author={Yang, Jingkang and Wang, Pengyun and Zou, Dejian and Zhou, Zitang and Ding, Kunyuan and Peng, Wenxuan and Wang, Haoqi and Chen, Guangyao and Li, Bo and Sun, Yiyou and others},
  journal=neurips,
  volume={35},
  pages={32598--32611},
  year={2022}
}

@article{wu2021fashion,
  title={Fashion iq: A new dataset towards retrieving images by natural language feedback},
  author={Wu, Hui and Gao, Yupeng and Guo, Xiaoxiao and Al-Halah, Ziad and Rennie, Steven and Grauman, Kristen and Feris, Rogerio},
  journal=cvpr,
  pages={11307--11317},
  year={2021}
}

@article{papadimitriou2025,
      title={Interpreting the linear structure of vision-language model embedding spaces}, 
      author={Isabel Papadimitriou and Huangyuan Su and Thomas Fel and Sham Kakade and Stephanie Gil},
      journal=arxiv,
      year={2025},
}

@article{schuhmann2021laion,
  title={Laion-400m: Open dataset of clip-filtered 400 million image-text pairs},
  author={Schuhmann, Christoph and Vencu, Richard and Beaumont, Romain and Kaczmarczyk, Robert and Mullis, Clayton and Katta, Aarush and Coombes, Theo and Jitsev, Jenia and Komatsuzaki, Aran},
  journal=arxiv,
  year={2021}
}

@inproceedings{shi2023,
  title={Towards understanding the modality gap in clip},
  author={Shi, Peiyang and Welle, Michael C and Bj{\"o}rkman, M{\aa}rten and Kragic, Danica},
  booktitle={ICLR 2023 workshop on multimodal representation learning: perks and pitfalls},
  year={2023}
}

@article{yaras2024,
  title={Explaining and Mitigating the Modality Gap in Contrastive Multimodal Learning},
  author={Yaras, Can and Chen, Siyi and Wang, Peng and Qu, Qing},
  journal=arxiv,
  year={2024}
}

@inproceedings{ethayarajh2019,
    title = "How Contextual are Contextualized Word Representations? {C}omparing the Geometry of {BERT}, {ELM}o, and {GPT}-2 Embeddings",
    author = "Ethayarajh, Kawin",
    editor = "Inui, Kentaro  and
      Jiang, Jing  and
      Ng, Vincent  and
      Wan, Xiaojun",
    booktitle = "Proceedings of the 2019 Conference on Empirical Methods in Natural Language Processing and the 9th International Joint Conference on Natural Language Processing (EMNLP-IJCNLP)",
    month = nov,
    year = "2019",
    address = "Hong Kong, China",
    publisher = "Association for Computational Linguistics",
    url = "https://aclanthology.org/D19-1006/",
    doi = "10.18653/v1/D19-1006",
    pages = "55--65"
}

@article{fahim2024s,
  title={It's Not a Modality Gap: Characterizing and Addressing the Contrastive Gap},
  author={Fahim, Abrar and Murphy, Alex and Fyshe, Alona},
  journal={arXiv preprint arXiv:2405.18570},
  year={2024}
}

@inproceedings{radford2021,
  title={Learning transferable visual models from natural language supervision},
  author={Radford, Alec and Kim, Jong Wook and Hallacy, Chris and Ramesh, Aditya and Goh, Gabriel and Agarwal, Sandhini and Sastry, Girish and Askell, Amanda and Mishkin, Pamela and Clark, Jack and others},
  booktitle={International conference on machine learning},
  pages={8748--8763},
  year={2021},
  organization={PmLR}
}

@article{mallat1993matching,
  title={Matching pursuits with time-frequency dictionaries},
  author={Mallat, St{\'e}phane G and Zhang, Zhifeng},
  journal={IEEE Transactions on signal processing},
  volume={41},
  number={12},
  pages={3397--3415},
  year={1993},
  publisher={IEEE}
}

@inproceedings{jiang2023understanding,
  title={Understanding and constructing latent modality structures in multi-modal representation learning},
  author={Jiang, Qian and Chen, Changyou and Zhao, Han and Chen, Liqun and Ping, Qing and Tran, Son Dinh and Xu, Yi and Zeng, Belinda and Chilimbi, Trishul},
  booktitle={Proceedings of the IEEE/CVF Conference on Computer Vision and Pattern Recognition},
  pages={7661--7671},
  year={2023}
}

@article{costa2025flat,
  title={From Flat to Hierarchical: Extracting Sparse Representations with Matching Pursuit},
  author={Costa, Val{\'e}rie and Fel, Thomas and Lubana, Ekdeep Singh and Tolooshams, Bahareh and Ba, Demba},
  journal={arXiv preprint arXiv:2506.03093},
  year={2025}
}

@inproceedings{
  zhang2023diagnosing,
  title={Diagnosing and Rectifying Vision Models using Language},
  author={Zhang, Yuhui and HaoChen, Jeff Z and Huang, Shih-Cheng and Wang, Kuan-Chieh and Zou, James and Yeung, Serena},
  booktitle={International Conference on Learning Representations (ICLR)},
  year={2023},
  url={https://openreview.net/pdf?id=D-zfUK7BR6c}
}

@inproceedings{
schrodi2025two,
title={Two Effects, One Trigger: On the Modality Gap, Object Bias, and Information Imbalance in Contrastive Vision-Language Models},
author={Simon Schrodi and David T. Hoffmann and Max Argus and Volker Fischer and Thomas Brox},
booktitle={The Thirteenth International Conference on Learning Representations},
year={2025},
url={https://openreview.net/forum?id=uAFHCZRmXk}
}

@inproceedings{
levi2025the,
title={The Double-Ellipsoid Geometry of {CLIP}},
author={Meir Yossef Levi and Guy Gilboa},
booktitle={Forty-second International Conference on Machine Learning},
year={2025},
url={https://openreview.net/forum?id=QGUju9B68Z}
}

@article{udandarao2022understanding,
  title={Understanding and fixing the modality gap in vision-language models},
  author={Udandarao, Vishaal},
  journal={Master's thesis, University of Cambridge},
  volume={32},
  year={2022}
}
\bibliographystyle{iclr2026_conference}

\appendix

\newpage
\addcontentsline{toc}{section}{Appendix} %
\part{Appendix} %
\parttoc %

\section{SAE-dex}
\label{app:SAE_architecture}
In this appendix, we explore several SAE algorithms popular in the community. We provide definitions along with interpretations and implicit inductive biases for each algorithm. For similar insights into architectures, refer to \citet{hindupur2025projecting}.

\paragraph{Notations.} Let's first redefine all necessary notations here. Let $A \in \R^{n \times d}$ denote the collection of $n$ activations of dimension $d$ to be disentangled. Let $D \in \R^{K \times d}$ be the matrix containing the $K$ dictionary atoms, $Z \in \R^{n \times K}$ the collection of sparse codes.

Let $\phi: x \in \R^d \to \Pi\{W x + b\} \in \R^\K$ be an encoder attempting to solve the \emph{sparse coding} optimization problem, with $W \in \R^{K \times d}$ and $b \in \R^K$ the parameters of an affine transformation on the activations, and $\Pi: \R^\K \to \R^\K$ a non linear projection function.

\paragraph{SAE.} Recent work used SAEs to disentangle latent spaces, within the framework of the \emph{Linear Representation Hypothesis} (LRH) \citep{elhage2022superposition,wattenberg2024relational}. This framework hypothesizes that latent space's activations arise from a sparse linear combination of semantic atoms. This linear combination should be linearly accessible, as all operations on these spaces are linear. SAEs attempt to find this combination. In practice, purely linear SAEs perform very poorly, and some non-linearity has to be introduced to the encoder to increase its expressiveness.

\paragraph{ReLU.} This is the most common nonlinearity, where $\Pi = \mathrm{ReLU} : \bm{}x \to \max(\bm{0}, \bm{x})$, making the SAE effectively a one layer MLP. Sparsity is enforced through a soft constraint, encoded in the loss typically as an $L_1$ penalty. This soft constraint results in desirable and undesirable inductive biases, like sparsity and activation shrinkage, respectively.

This projection assumes linear separability of concepts, with neurons' receptive fields being half spaces, independent of one another. This independence creates an inductive bias preventing the discovery of concepts too close from one another, as they become harder to separate.

\paragraph{JumpReLU.} \citet{rajamanoharan2024jumping} proposed to use a gating mechanism on top of the $\mathrm{ReLU}$, as defined by \Cref{eq:jumprelu}. This gate intuitively prevents noisy small activations, with concepts being active only if they are significantly present. This is not allowed by a simple $\mathrm{ReLU}$, as changing a concept's activation threshold can only happen through the bias, which also shifts its activation magnitude. Sparsity is often achieved by optimizing $\theta$ instead of using the $L_1$ penalty.
This projection also assumes linear separability of concepts, with receptive fields being again independent half-spaces.

\begin{equation}
    \label{eq:jumprelu}
    \Pi_\theta : x \to x H(x-\theta)
\end{equation}
where $H$ is the Heaviside step function.

\paragraph{TopK.} These SAEs were introduced by \citet{gao2024scaling} to facilitate training by enforcing a hard constraint on sparsity, rather than the soft constraint, as an inductive bias in the loss of the $\mathrm{ReLU}$ SAE. Indeed, TopK selects the $k$ most active concepts for each sample, enforcing an $L_0$ of at most $k$. This also prevents known, undesirable effects of the inductive bias.
This projection assumes angular separability of concepts, with receptive fields being polyhedral cones such that at each point in space, at most $k$ of them overlap.

This projection introduces competition between concepts---they are no longer independent of one another---which also prevents the discovery of features too close to one another, though competition introduces different dynamics from the independent case.

\begin{equation}
    \label{eq:topk}
    \Pi : x \to x \odot \bm{1}_{\{i\in\mathcal{T}_k(x)\}}\text{, where }\mathcal{T}_k(x)=\underset{k}{\arg\mathrm{top}}(x)
\end{equation}

\paragraph{BatchTopK.} This is a variant of the TopK algorithm, where competition is introduced in the whole batch rather than in each individual sample. With a batch size of $b$, this function selects the $b \times k$ most active concepts in the batch. This allows for a relaxed constraint on the $L_0$, which is now enforced to be $k$ on average, but allowing for sample-wise variations.
This projection also assumes angular separability.

\paragraph{Matching Pursuit.} As described in \cref{alg:mp}, the Matching Pursuit algorithm iteratively adds the best atom $d_i$ to the code $z$, removes it from the residual $r$, and starts again. Two main variants exist to decide the number of iterations: one where it is fixed, and one where the algorithm continues until some threshold reconstruction is reached. We opted for the former to have better control over sparsity.

\begin{algorithm}[H]
    \caption{Matching Pursuit SAE Encoder}
    \label{alg:mp}
    \begin{algorithmic}[1]
        \State \textbf{Input:} $x \in \R^d$, $D \in \R^{K \times d}$, $\lambda > 0$
        \State $z \gets 0 \in \R^K$ \Comment{Initialize sparse code}
        \State $r \gets x$ \Comment{Initialize residual}
        \For{$k \in \intint{1}{\kappa}$} \Comment{Iterate for $\kappa$ steps - enforcing an $L_0$ sparsity constraint of $\kappa$}
            \State $i \gets \underset{i}{\arg\max} \abs{D_i^\top r}$ \Comment{Find the atom that best matches the residual}
            \State $z_i \gets D_i^\top r$ \Comment{Update sparse code for this atom}
            \State $r \gets r - z_i D_i$ \Comment{Update residual}
        \EndFor
    \end{algorithmic}
\end{algorithm}

The main advantage of this algorithm over TopK is that each atom selected can only decrease the error term $r$. Thus, even though competition still exists, this algorithm cannot select two competing atoms with large activations that would result in an increased error. The main downside of this algorithm is its increased complexity.

Receptive fields are again polyhedral cones, such that exactly $k$ intersect at every point in space, but both their shape and the activation patterns inside them are now much more flexible than for TopK cones. This removes the inductive bias that constrains the geometric similarity between two atoms, which is present in all other methods. It might cause a bias to learn duplicates of a feature, especially high-energy ones, to fit noise at no additional cost.

\begin{table}[H]
    \centering
    \resizebox{\linewidth}{!}{
    \begin{tabular}{lcccccc}
        \toprule
        \textbf{Metric} & \textbf{ReLU} & \textbf{JumpReLU} & \textbf{TopK} & \textbf{BatchTopK} & \textbf{\SAE{SAE}} & \textbf{\SAEA{SAE-A}} \\
        \midrule
        \multicolumn{5}{l}{\textbf{Sparse Reconstruction}} \\
        \cmidrule(lr){1-1}
        \quad MSE ($\downarrow$)     & 0.31 & 0.26 & 0.22 & 0.22 & 0.14 & 0.16 \\
        \quad $R^2$ ($\uparrow$)   & 0.69 & 0.74 & 0.78 & 0.78 & 0.86 & 0.84 \\
        \quad $\ell_0$ ($\downarrow$)   & 19.8 & 17.4 & / & / & / & / \\
        \quad $\ell_1$ ($\downarrow$)     & 1.22 & 2.48 & 2.59 & 2.65 & 2.94 & 3.09 \\
        
        \midrule
        \multicolumn{5}{l}{\textbf{Consistency}} \\
        \cmidrule(lr){1-1}
        \quad $C\mathrm{-insertion}$ ($\uparrow$)   & 0.81 & 0.79 & 0.75 & 0.76 & 0.74 & 0.71 \\
        \quad $C\mathrm{-deletion}$ ($\downarrow$)   & 0.16 & 0.16 & 0.23 & 0.20 & 0.10 & 0.10 \\
        \quad Stability ($\downarrow$)   & 0.067 & 0.23 & 0.31 & 0.33 & 0.15 & 0.22 \\
        
        \midrule
        \multicolumn{5}{l}{\textbf{Structure in D}} \\
        \cmidrule(lr){1-1}
        \quad Stable Rank ($\uparrow$)   & 88.3 & 231 & 29.6 & 161 & 28.3 & 14.0 \\
        \quad Eff. Rank ($\uparrow$)   & 500 & 501 & 498 & 501 & 455 & 435 \\
        \quad Stable Rank (w) ($\uparrow$)   & 1.57 & 1.46 & 1.92 & 1.32 & 1.59 & 1.17 \\
        \quad Eff. Rank (w) ($\uparrow$)   & 18.0 & 37.8 & 55.7 & 46.4 & 160 & 90.3 \\
        \quad Coherence ($\downarrow$)   & 0.99 & 0.50 & 0.51 & 0.55 & 0.99 & 0.99 \\
        
        \midrule
        \multicolumn{5}{l}{\textbf{Structure in Z}} \\
        \cmidrule(lr){1-1}
        \quad Stable Rank ($\downarrow$)   & 1.97 & 1.69 & 1.76 & 1.61 & 9.03 & 1.26 \\
        \quad Eff. Rank ($\downarrow$)   & 46.0 & 45.5 & 85.4 & 80 & 829 & 532 \\
        \quad Connectivity ($\uparrow$)   & 0.24 & 0.25 & 0.45 & 0.021 & 0.007 & 0.058 \\
        \quad Neg. Inter. ($\downarrow$)   & 0.0008 & 0.0019 & 0.0003 & 0.0006 & 0.0005 & 0.0020 \\
        
        \midrule
        \multicolumn{5}{l}{\textbf{Modality}} \\
        \cmidrule(lr){1-1}
        \quad $\rho$ ($\uparrow$) & 0.46 & 0.96 & 1.98 & 0.97 & 0.33 & 4.23 \\
        \quad $\delta_{\mathrm{r@1}}$ ($\downarrow$) & 0.134 & 0.06 & 0.05 & 0.05 & 0.22 & 0.12 \\
        \quad $\mathrm{FDA}$ ($\uparrow$) & 2.52 & 3.44 & 8.77 & 3.83 & 2.63 & 4.56 \\
        \quad $p_{\mathrm{acc}}$ ($\uparrow$) & 0.87 & 0.87 & 0.88 & 0.89 & 0.85 & 0.92 \\
        
        \bottomrule
    \end{tabular}}
    \caption{Metric comparison across SAE architectures under fixed hyperparameters (expansion ratio: 8, target $\ell_0$: $20$). We only report results for SAEs trained on CLIP here.}
    \label{tab:sae_architecture_small}
\end{table}

\begin{table}[H]
    \centering
    \resizebox{\linewidth}{!}{
    \begin{tabular}{lcccccc}
        \toprule
        \textbf{Metric} & \textbf{ReLU} & \textbf{JumpReLU} & \textbf{TopK} & \textbf{BatchTopK} & \textbf{\SAE{SAE}} & \textbf{\SAEA{SAE-A}} \\
        \midrule
        \multicolumn{5}{l}{\textbf{Sparse Reconstruction}} \\
        \cmidrule(lr){1-1}
        \quad MSE ($\downarrow$)     & 0.28 & 0.19 & 0.16 & 0.16 & 0.07 & 0.08 \\
        \quad $R^2$ ($\uparrow$)   & 0.72 & 0.81 & 0.84 & 0.84 & 0.93 & 0.92 \\
        \quad $\ell_0$ ($\downarrow$)   & 50 & 49.1 & / & / & / & / \\
        \quad $\ell_1$ ($\downarrow$)     & 1.60 & 4.38 & 3.99 & 4.43 & 4.58 & 4.69 \\
        
        \midrule
        \multicolumn{5}{l}{\textbf{Consistency}} \\
        \cmidrule(lr){1-1}
        \quad $C\mathrm{-insertion}$ ($\uparrow$)   & 0.75 & 0.61 & 0.61 & 0.59 & 0.59 & 0.58 \\
        \quad $C\mathrm{-deletion}$ ($\downarrow$)   & 0.24 & 0.31 & 0.33 & 0.33 & 0.15 & 0.16 \\
        \quad Stability ($\downarrow$)   & 0.11 & 0.35 & 0.33 & 0.36 & 0.26 & 0.32 \\
        
        \midrule
        \multicolumn{5}{l}{\textbf{Structure in D}} \\
        \cmidrule(lr){1-1}
        \quad Stable Rank ($\uparrow$)   & 346 & 376 & 347 & 383 & 207 & 191 \\
        \quad Eff. Rank ($\uparrow$)   & 511 & 510 & 511 & 511 & 494 & 493 \\
        \quad Stable Rank (w) ($\uparrow$)   & 1.65 & 1.92 & 1.85 & 2.82 & 1.60 & 1.54 \\
        \quad Eff. Rank (w) ($\uparrow$)   & 26.5 & 146 & 101 & 101 & 74.8 & 55.1 \\
        \quad Coherence ($\downarrow$)   & 0.59 & 0.44 & 0.51 & 0.41 & 0.99 & 0.99 \\
        
        \midrule
        \multicolumn{5}{l}{\textbf{Structure in Z}} \\
        \cmidrule(lr){1-1}
        \quad Stable Rank ($\downarrow$)   & 1.86 & 1.58 & 1.68 & 1.43 & 4.06 & 2.80 \\
        \quad Eff. Rank ($\downarrow$)   & 48.9 & 101 & 107 & 105 & 762 & 597 \\
        \quad Connectivity ($\uparrow$)   & 0.56 & 0.91 & 0.81 & 0.56 & 0.12 & 0.13 \\
        \quad Neg. Inter. ($\downarrow$)   & 0.0012 & 0.0012 & 0.0017 & 0.0045 & 0.0013 & 0.0013 \\
        
        \midrule
        \multicolumn{5}{l}{\textbf{Modality}} \\
        \cmidrule(lr){1-1}
        \quad $\rho$ ($\uparrow$) & 1.71 & 1.41 & 1.77 & 2.71 & 1.13 & 2.69 \\
        \quad $\delta_{\mathrm{r@1}}$ ($\downarrow$) & -0.024 & 0.028 & 0.024 & 0.089 & 0.13 & 0.052 \\
        \quad $\mathrm{FDA}$ ($\uparrow$) & 8.12 & 5.17 & 5.12 & 17.3 & 2.79 & 2.57 \\
        \quad $p_{\mathrm{acc}}$ ($\uparrow$) & 0.80 & 0.81 & 0.83 & 0.89 & 0.84 & 0.93 \\
        
        \bottomrule
    \end{tabular}}
    \caption{Metric comparison across SAE architectures under fixed hyperparameters (expansion ratio: 64, target $\ell_0$: $50$). We only report results for SAEs trained on CLIP here.}
    \label{tab:sae_architecture_big}
\end{table}

\section{Tuning the strength of $\mathcal{L}_{\mathrm{align}}$}
\label{app:beta_tuning}

In this appendix, we look at the impact of the alignment loss, and how to tune its strength---which we call $\beta$ in this section. We consider the CLIP model, and train a family of SAEs with $\beta$ ranging from $10^{-5}$ to $10^{-2}$ on a logarithmic scale. The alignment loss is defined by the average cosine similarity between matching image--caption codes~:

\begin{equation}
\mathcal{L}_{\mathrm{align}}(\underbrace{\widetilde{Z}^{(\domain)}}_{\in \R^{b \times K}}, \underbrace{\widetilde{Z}^{(\domain')}}_{\in \R^{b \times K}}) = -\frac{1}{b} \mathrm{Tr}(\widetilde{Z}^{(\domain)} \cdot \widetilde{Z}^{(\domain')\top})
\end{equation}
where $b$ is the batch size, $K$ is the number of latent concept dimensions and $\widetilde{Z}^{(\domain)}$ and $\widetilde{Z}^{(\domain')}$ are normalized codes.

We observe a sharp transition, with almost no effect below $\beta=10^{-4}$, and systematic degenerate solutions above $\beta=10^{-3}$. \Cref{fig:r2vsbeta} shows the evolution of reconstruction error against $\beta$. At the transition, constant features start appearing with frequencies of 1. These features are degenerate solutions to the alignment loss. \Cref{fig:Einbvsbeta} shows the proportion of energy contained in \bimodal and degenerate features against $\beta$.

\begin{figure}[h]
    \centering
    \includegraphics[width=0.5\linewidth]{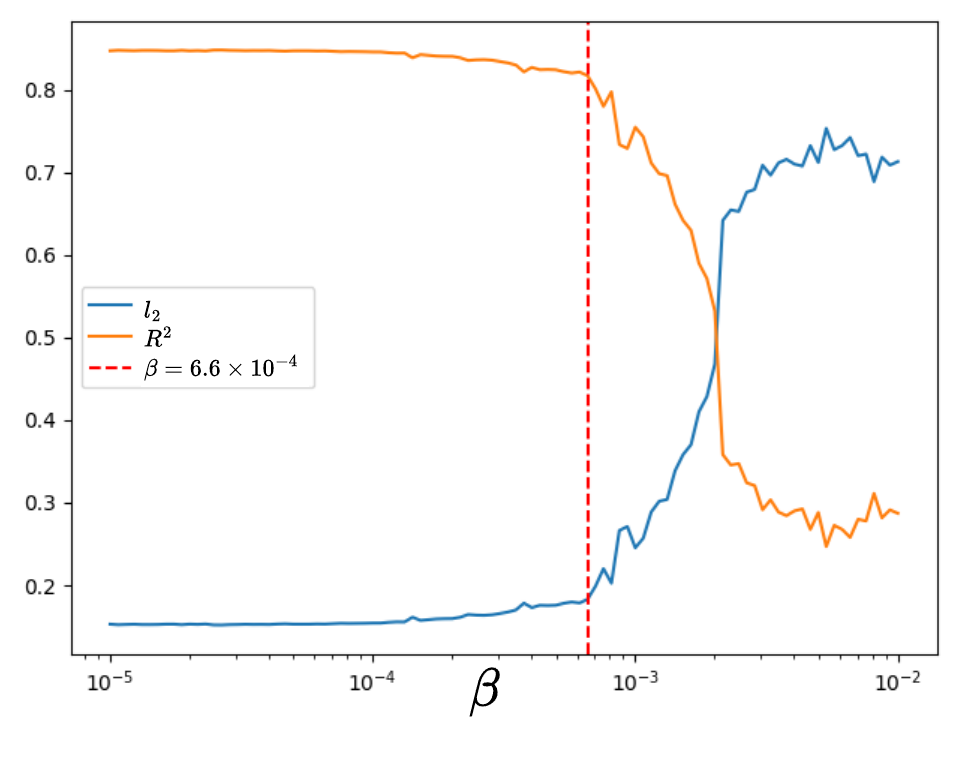}
    \caption{Reconstruction error ($MSE = l_2$) and explained variance ($R^2$) against the strength of the alignment penalty ($\beta$).}
    \label{fig:r2vsbeta}
\end{figure}

\begin{figure}[h]
    \centering
    \includegraphics[width=0.5\linewidth]{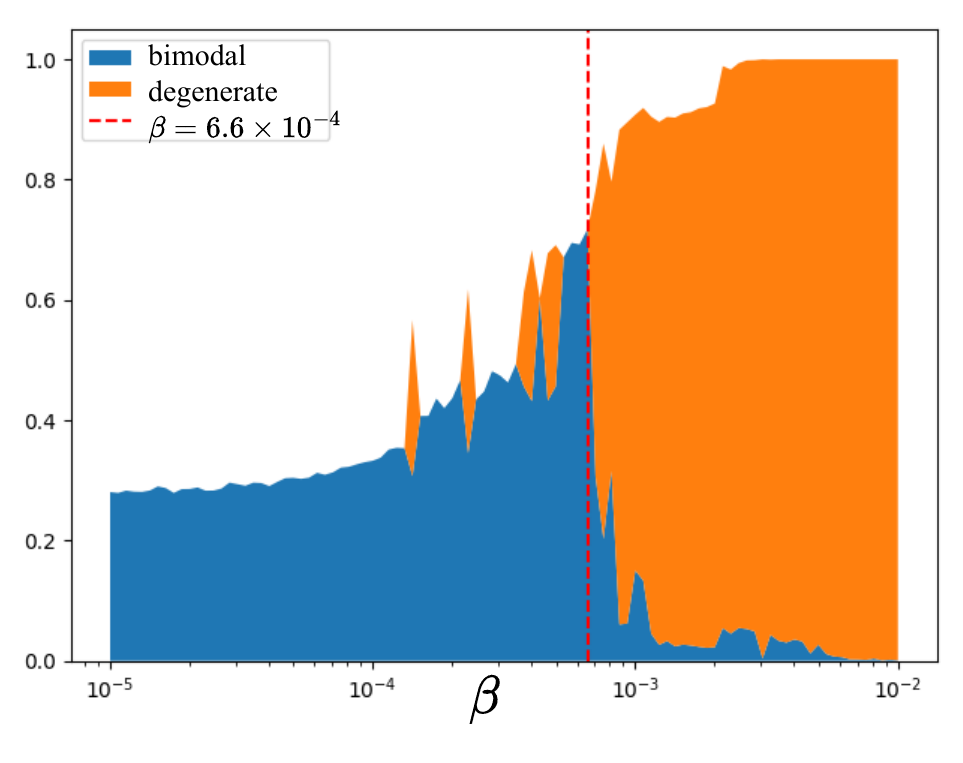}
    \caption{Proportion of energy contained in \bimodal features, and in degenerate features, against the strength of the alignment penalty.}
    \label{fig:Einbvsbeta}
\end{figure}

\section{Toy Data Generating Process}
\label{app:DGP}

\subsection{DGP definition}
\label{app:DGP_def}

For our toy Data Generating Process (DGP), we created two ground truth dictionaries $D_1, D_2$ and sparse codes $Z_1, Z_2$ such that $Z_1D_1=Z_2D_2$. The goal is to have a realistic-enough distribution on which we have full control of which features are "\unimodal" and which are "\bimodal". $(D_1, Z_1)$ will have "ground truth" \bimodal features while $(D_2, Z_2)$ will contain "ground truth" \unimodal features. One parameter, $\tau_1$ controls smoothly the separation between these, with no ambiguity over which of the two is the "true" ground truth when this parameter is extreme. When $\tau_1$ is close to $1$---resp. $\tau_2$---$(D_1, Z_1)$---resp. $(D_2, Z_2)$--- is the ground truth dictionary, without ambiguity. Below is the list of parameters :

\begin{itemize}
    \item $d$, the dimension of the latent space.
    \item $k$, controlling the number of concepts, such that $(D_1, Z_1)$ has $10k$ concepts and $(D_2, Z_2)$ has $14k$ concepts.
    \item \emph{$L$} : the sparsity of the codes, such that $\forall i, L_0(Z_{2,i}) = L$.
    \item $\tau_1$ : controls the cosine similarity between $D_2^I$ and $D_2^T$, such that $\forall i, D_{2,i}^{I, \top}\cdot D_{2,i}^T = \tau_1$
    \item $\tau_2$ : controls the cosine similarity between a matching image-caption pair, such that $\forall i, (Z_{2, i}^I D_2)^\top(Z_{2, i}^T D_2) = \tau_2$
\end{itemize}

where the superscript $\cdot^I$ indicates that we take the submatrices corresponding to image codes in $Z$ or to image-only features in $D$. Similarly for $\cdot^T$ and text.

\paragraph{\bm{$(D_1, Z_1)$}.} Let $D^I$ and $D^T$ be subsets of concepts of size $k$ each and corresponding to high-energy high-frequency \unimodal concepts. Let $D^{B}$ be $4k$ \bimodal concepts, associated to small (depending on $\tau_1$) additional terms $D^{B, I}$ and $D^{B, T}$ of size $4k$ each. These additional terms are modality-specific and control for the fact that features can have some amount of \unimodal information.

We enforce that all features in $D^I$ are orthogonal to all features in $D^T$, $D^B$ and $D^{B, T}$, similarly for $D^{B, I}$, $D^T$ and $D^{B, T}$. All features in $D^B$ are thus orthogonal to all features not in $D^B$.

To generate one sample $Z_{1, i}$, we proceed as follows. We select exactly one feature in $D^I$ and one in $D^T$ that will have an activation of $1$ (temporarily). Then, we select $L-1$ features in $D^B$, and their corresponding additional terms in $D^{B, I}$ and $D^{B, T}$. The activations associated to $D^B$ are constant, as are those in $D^{B, I}$ and $D^{B, T}$, and equal to $\tau_1 \beta$ and $(1-\tau_1)\beta$ respectively. To compute $\beta$, we select the only positive real-valued root of the underlying 4th-degree polynomial to satisfy the constraint given by $\tau_2$. Finally, we scale all activations by $\frac{1}{\norm{Z_{1, i} D_1}}$ such that the final embeddings have unit norm.

\paragraph{\bm{$(D_2, Z_2)$}.} $D_2$ is the collection of features in $D^I$, $D^T$ as well as $\tau_1D^{B} + (1-\tau_1)D^{B, I}$ and $\tau_1D^{B} + (1-\tau_1)D^{B, T}$. Activation strengths in $Z_2$ associated to $B^I$ and $B^T$ are unchanged, and those associated to $\tau_1D^{B} + (1-\tau_1)D^{B, I/T}$ are equal to $\beta$.

\paragraph{} In \cref{app:DGP_experiments}, we use $\tau_2$ to have cosine similarity histograms similar to those obtained on CLIP embeddings of LAION, and for $\tau_1$, we use $\tau_2 + 0.1$ in experiment 1, and $0.999$ in experiment 2.

In \Cref{fig:toy_cosim} we show the histogram of cosine similarities between embeddings generated with CLIP and LAION, as well as those between embeddings generated by our DGP.

\begin{figure}[h]
    \centering
    \includegraphics[width=0.45\linewidth]{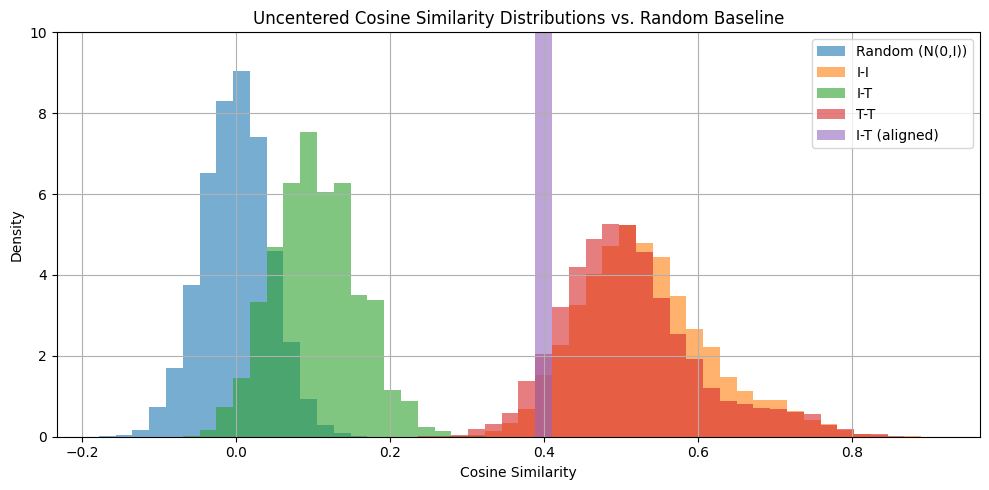}
    \hfill
    \includegraphics[width=0.45\linewidth]{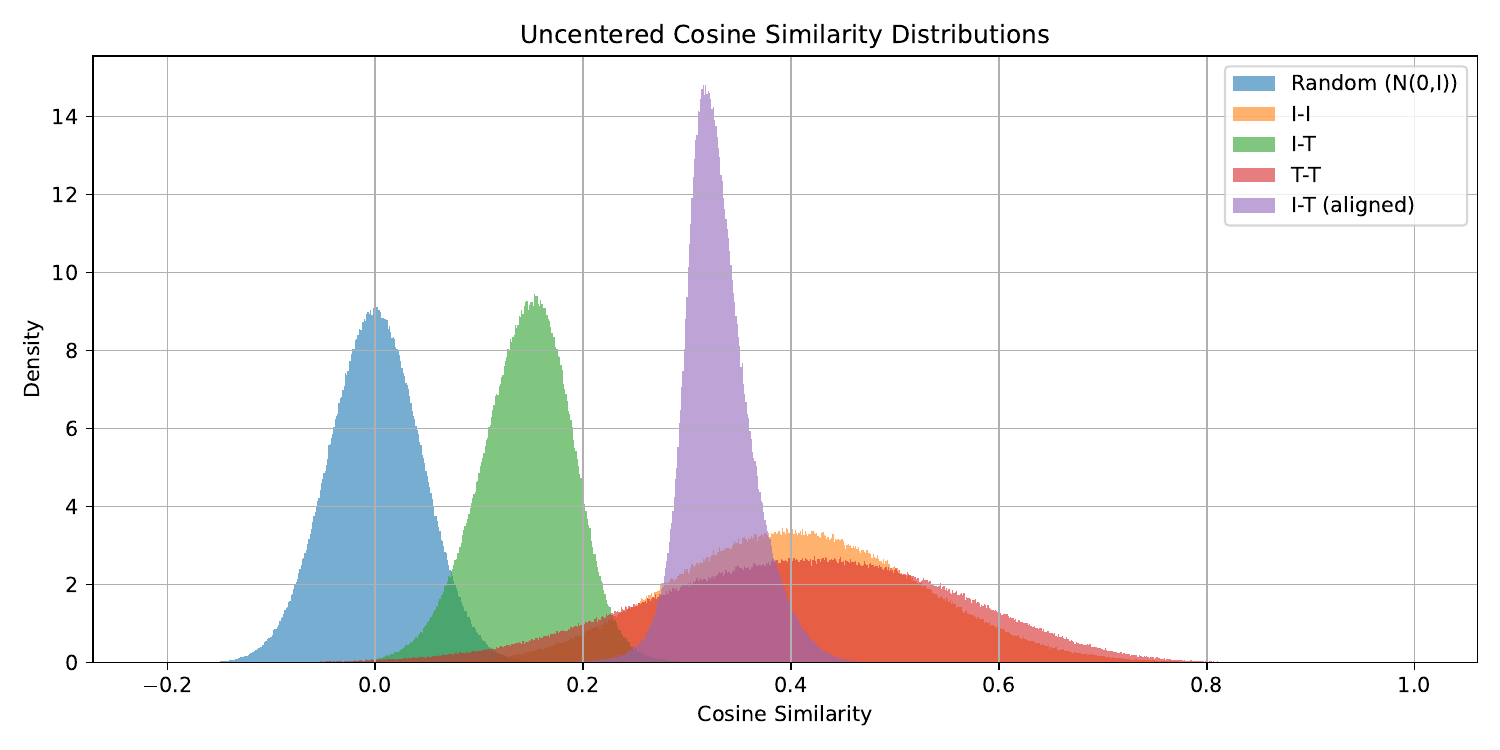}
    \caption{Histograms of cosine similarities between CLIP embeddings (\textbf{right}). "II": image-image embedding pairs, "TT": text-text, "IT": random image-caption, "IT (aligned)": matching image caption, Random: here for reference. \textbf{Left:} cosine similarity histogram for the embeddings generated by our toy DGP.}
    \label{fig:toy_cosim}
\end{figure}

\subsection{Experiments}
\label{app:DGP_experiments}

In this section, we investigate the impact of enforcing our alignment penalty $\mathcal{L}_{\mathrm{align}}$ on the learned dictionaries. For that, we design two toy datasets sampled from two different data-generating processes (DGPs), one satisfying the \Isoenergy~Assumption and one violating it. In both cases, parameters of the DGPs are chosen such that the data distribution is similar to the actual data distribution of LAION-400M embedded by CLIP ViT-B/32. Details on the DGPs and the datasets are given in \cref{app:DGP_def}.

We train our SAEs using the same number of atoms and sparsity as those of the DGPs. We do not report sparse reconstruction metrics, as they are all equally good, with $R^2 \geq 0.99$ and fixed sparsity.

\paragraph{Experiment 1.} In our first experiment, we show that both \SAE{SAE} and \SAEA{SAE-A} are equally able to find the ground truth dictionary in a setup where the \Isoenergy~Assumption is violated. In other words, we check that enforcing the alignment penalty does not lead to learning spurious \bimodal features.

Let $D$ and $Z$ be the learned dictionary and sparse codes, respectively. Let $W$ and $C$ be the ground truth dictionary weights and corresponding sparse codes, respectively. We measure the proximity between both dictionaries using the Wasserstein distance $\mathcal{W}(D, W)$ between the ground truth and the learned dictionary's atoms. We also measure the mean matching accuracy $\mathrm{mma}(Z, C)$, for a permutation matrix $\Pi \in \R^{K \times K}$ that best matches the atoms in $Z$ and $C$. $\mathrm{mma}(Z, C)$ is the average cosine similarity between matched atoms' activation patterns in $Z$ and $C$.

Results are summarized in \cref{tab:experiment_1_results}. We observe no distinction between the two SAEs. In particular, we show that adding the alignment penalty \SAEA{$\mathcal{L}_{\mathrm{align}}$} does not incentivize \SAEA{SAE-A} to learn spurious \bimodal features.

\paragraph{Experiment 2.} In our second experiment, the DGP from which the data is sampled satisfies the \Isoenergy~Assumption. In this case, we observe that training a \SAE{SAE} fails to uncover the ground truth dictionary and learns spurious, \unimodal features, while training a \SAEA{SAE-A} succeeds at doing so.
Results are summarized in \cref{tab:experiment_2_results}.

\begin{table}[H]
    \centering
    \begin{subtable}[t]{0.45\textwidth}
        \centering
        \resizebox{\linewidth}{!}{
        \begin{tabular}{l|cc}
            \toprule
            & $\mathcal{W}(D, W)\,\downarrow$ & $\mathrm{mma}(Z, C)\,\uparrow$\\
            \midrule
            \SAE{SAE}   & $0.197$ & $0.83$ \\
            \SAEA{SAE-A} & $0.185$ & $0.81$ \\
            \bottomrule
        \end{tabular}}
        \caption{Both \SAE{SAE} and \SAEA{SAE-A} learn an equally good approximation of the ground truth dictionary. \textbf{Adding the alignment penalty $\mathcal{L}_{\mathrm{align}}$ does not lead to learning spurious \bimodal features}.}
        \label{tab:experiment_1_results}
    \end{subtable}
    \hfill
    \begin{subtable}[t]{0.45\textwidth}
        \centering
        \resizebox{\linewidth}{!}{
        \begin{tabular}{l|cc}
            \toprule
            & $\mathcal{W}(D, W)\,\downarrow$ & $\mathrm{mma}(Z, C)\,\uparrow$\\
            \midrule
            \SAE{SAE}   & $0.396$ & $0.29$ \\
            \SAEA{SAE-A} & $\bm{0.184}$ & $\bm{0.52}$ \\
            \bottomrule
        \end{tabular}}
        \caption{\textbf{\SAE{SAE} fail to learn ground truth \bimodal atoms}. \SAEA{SAE-A} learns the ground truth dictionary.}
        \label{tab:experiment_2_results}
    \end{subtable}
    \caption{Comparision between \SAE{SAE} and \SAEA{SAE-A} (trained with $\mathcal{L}_{\mathrm{align}}$). $\mathcal{W}(D, W)$ : Wasserstein distance between learned and ground truth dictionaries (lower is better). $\mathrm{mma}(Z, C)$ : mean matching accuracy between learned and ground truth sparse codes (higher is better).
    \textbf{(a)} Experiment 1: the data-generating process violates the \Isoenergy~Assumption. \textbf{(b)} Experiment 2: the data-generating process satisfies the \Isoenergy~Assumption.}
    \label{tab:experiment_results}
\end{table}

\section{Model-dex}
\label{app:models}

We studied a collection of models, varying in size and training procedure from the CLIP \citep{radford2021}, SigLIP \citep{zhai2023sigmoid}, and OpenCLIP \citep{cherti2023reproducible} implementations.

\paragraph{CLIP and OpenCLIP.} We used the ViT-B/32 and ViT-L/14 variants of both CLIP and OpenCLIP. These were trained with the contrastive softmax loss to align matching pairs and separate non-matching ones. Specifically, we used {\texttt{openai/clip-vit-base-patch32}}, {\texttt{openai/clip-vit-large-patch14}}, {\texttt{laion/CLIP-ViT-B-32-laion2B-s34B-b79K}} and {\texttt{laion/CLIP-ViT-L-14-laion2B-s32B-b82K}} from hugging face's {\texttt{transformers.CLIPModel}} implementation. They are referred to as CLIP, CLIP-L, OpenCLIP, and OpenCLIP-L.

\paragraph{SigLIP.} We conducted our experiments using both SigLIP, trained with a sigmoid-based contrastive loss, and SigLIP2, trained with a heterogeneous mix of objectives. Specifically, we used {\texttt{google/siglip-base-patch16-224}} and {\texttt{google/siglip2-base-patch16-224}} from hugging face's {\texttt{transformers.SiglipModel}} implementation.\\

We report in \Cref{fig:pcas} linear projections of embeddings, colored by modality, illustrating the modality gap. We measure the modality gap both as the Difference in Mean (DiM) between each modality and as the difference in distribution, with the Wasserstein distance ($\mathcal{W}$). We repeat this on three datasets (\cref{app:datasets}). We report the modality gap inside of each dataset in \cref{tab:model_bestiary}.

Finally, we report recall@1 and recall@5, along with histograms of cosine similarities between embeddings of the same modality, matching embeddings of different modalities, and random pairs of embeddings of different modalities in \Cref{fig:histograms,tab:model_bestiary}. We add a histogram of cosine similarity between samples from white noise as a reference.

For the ImageNet dataset, we also measure zero-shot recall---slightly different from recall@1, see \cref{app:metrics}---and classifier accuracy.

\begin{figure}[h]
  \centering
  \includegraphics[width=.98\textwidth]{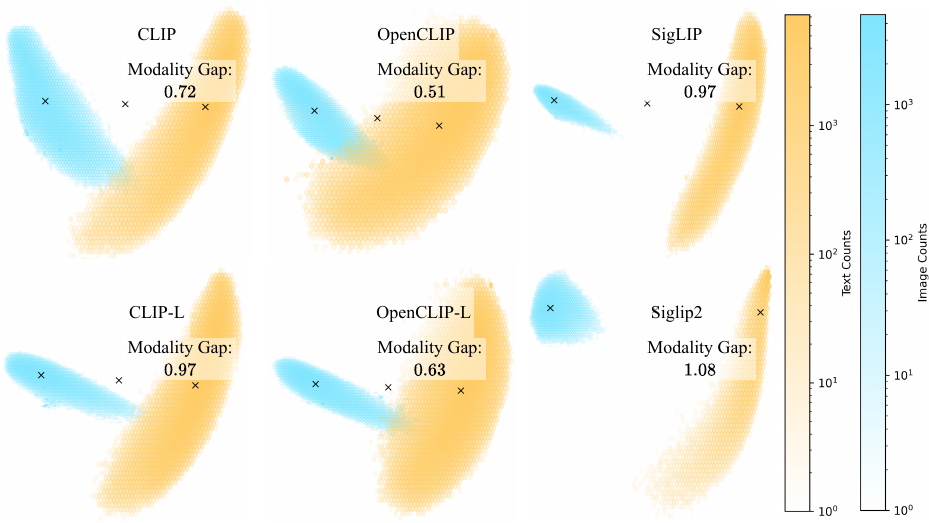}
  \caption{PCA of image (blue) and text (yellow) embeddings from LAION for different VLM encoders. The modality gap is measured as the difference in mean (DiM) and as the Wasserstein distance ($\mathcal{W}$) between the two distributions. A cross indicates the modality-wise and global means.}
  \label{fig:pcas}
\end{figure}

\begin{figure}[h]
  \centering
  \includegraphics[width=.98\textwidth]{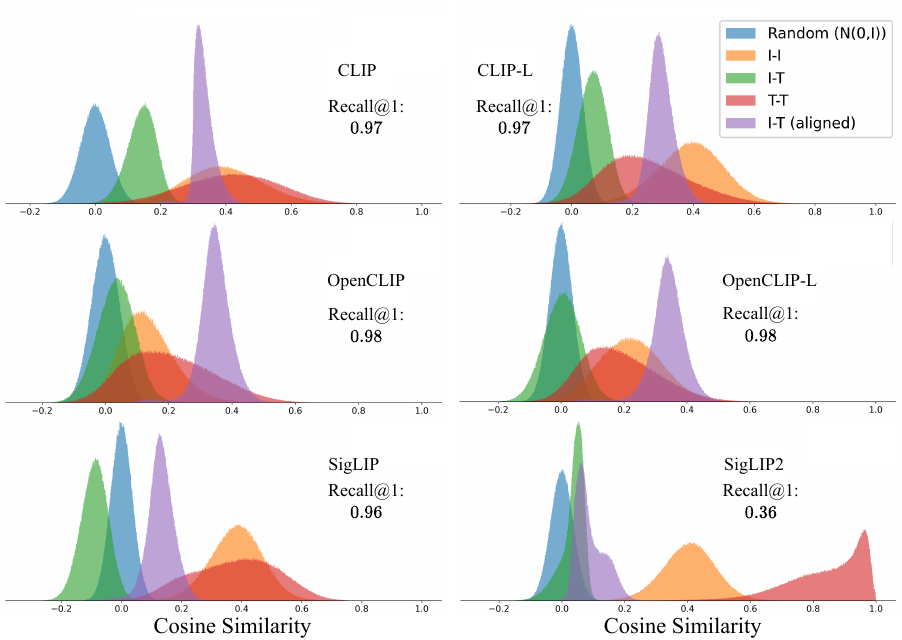}
  \caption{Histograms of cosine similarities between pairs of embeddings from the same modality (I : image, T : text), random pairs of opposing modality (I-T), and matching image-caption pairs (I-T (aligned)). Cosine similarities between samples from a white noise are indicated as a reference. We only consider LAION embeddings here. We indicate the corresponding recall@1 performance on each plot.}
  \label{fig:histograms}
\end{figure}

\begin{table}[H]
    \centering
    \resizebox{\linewidth}{!}{
    \begin{tabular}{lcccccc}
        \toprule
        \textbf{} & \textbf{CLIP} & \textbf{CLIP-L} & \textbf{OpenCLIP} & \textbf{OpenCLIP-L} & \textbf{SigLIP} & \textbf{SigLIP2} \\
        
        \midrule
        \multicolumn{6}{l}{\textbf{LAION}} \\
        \cmidrule(lr){1-1}
        \quad $\mathrm{DiM} \mid \mathcal{W}$ & $0.72~\mid~{\gray0.68}$ & $0.70~\mid~{\gray0.73}$ & $0.51~\mid~{\gray0.70}$ & $0.63~\mid~{\gray0.75}$ & $0.97~\mid~{\gray0.94}$ & $1.08~\mid~{\gray0.90}$ \\
        \quad $\mathrm{recall@1}\mid@5$   & $0.97~\mid~{\gray0.99}$ & $0.97~\mid~{\gray0.99}$ & $0.98~\mid~{\gray0.99}$ & $0.98~\mid~{\gray0.99}$ & $0.96~\mid~{\gray0.99}$ & $0.36~\mid~{\gray0.46}$ \\
        
        \midrule
        \multicolumn{6}{l}{\textbf{COCO}} \\
        \cmidrule(lr){1-1}
        \quad $\mathrm{DiM} \mid \mathcal{W}$ & $0.82~\mid~{\gray0.67}$ & $0.81~\mid~{\gray0.72}$ & $0.72~\mid~{\gray0.65}$ & $0.75~\mid~{\gray0.68}$ & $1.04~\mid~{\gray0.90}$ & $1.05~\mid~{\gray0.86}$ \\
        \quad $\mathrm{recall@1}\mid@5$   & $0.71~\mid~{\gray0.95}$ & $0.74~\mid~{\gray0.96}$ & $0.77~\mid~{\gray0.97}$ & $0.81~\mid~{\gray0.98}$ & $0.81~\mid~{\gray0.98}$ & $0.81~\mid~ {\gray0.97}$ \\
        
        \midrule
        \multicolumn{6}{l}{\textbf{ImageNet}} \\
        \cmidrule(lr){1-1}
        \quad $\mathrm{DiM} \mid \mathcal{W}$ & $0.86~\mid~{\gray0.68}$ & $0.86~\mid~{\gray0.73}$ & $0.77~\mid~{\gray0.67}$ & $0.84~\mid~{\gray0.69}$ & $1.13~\mid~{\gray0.93}$ & $1.13~\mid~{\gray0.89}$ \\
        \quad $\mathrm{recall@1}\mid@5$   & $0.65~\mid~{\gray0.91}$ & $0.74~\mid~{\gray0.95}$ & $0.67~\mid~{\gray0.92}$ & $0.73~\mid~{\gray0.95}$ & $0.75~\mid~{\gray0.96}$ & $0.76~\mid~{\gray0.97}$ \\
        \quad zero-shot~accuracy   & $0.60~\mid~{\gray0.85}$ & $0.73~\mid~{\gray0.92}$ & $0.63~\mid~{\gray0.86}$ & $0.72~\mid~{\gray0.91}$ & $0.73~\mid~{\gray0.92}$ & $0.75~\mid~{\gray0.93}$ \\
        \quad Classifier~acc   & $0.75~\mid~{\gray0.94}$ & $0.84~\mid~{\gray0.97}$ & $0.77~\mid~{\gray0.94}$ & $0.83~\mid~{\gray0.97}$ & $0.83~\mid~{\gray0.97}$ & $0.84~\mid~{\gray0.97}$ \\
        
        \bottomrule
    \end{tabular}}
    \caption{Modality Gap and recall on three datasets for the six models studied. For ImageNet, zero-shot and classifier accuracy show downstream task performance.}
    \label{tab:model_bestiary}
\end{table}

\section{Metrics}
\label{app:metrics}
\subsection{VLM encoder metrics}

\paragraph{Recall@k.} Let $I, T \in R^{b \times d}$ be normalized embeddings of matching image and captions. Let $C = I \cdot T^\top \in [-1, 1]^{b \times b}$ be the matrix of cosine similarities between these embeddings. Let $\bm l$ be the vector of ground truth matching labels : $\bm l=\intint{0}{b-1} \in \N^b$, and $\mathrm{topk}^{(\mathrm{img})} \in \N^{b\times k}$ be such that $\mathrm{topk}^{(\mathrm{img})}_{i, j}$ is the $j$-th best match for the image embedding $i$, and likewise for $\mathrm{topk}^{(\mathrm{txt})}$. We can now define $\mathrm{recall@k} = \frac{1}{2b} \sum_{i} \left(\bm{1}_{i \in \mathrm{topk}^{(\mathrm{img})}_i} + \bm{1}_{i \in \mathrm{topk}^{(\mathrm{txt})}_i}\right)$. In other words, it measures the accuracy of the model at matching image-caption pairs.
As this metric depends on the batch size $b$, we will fix it to $256$ throughout this study.

\paragraph{Zero-shot accuracy.} The goal of this metric is to measure the quality of multimodal embeddings through downstream task performance. In the context of image classification, text embeddings are constrained to a small number of points, corresponding to simple descriptions of the class---e.g., "an image of a \{class\}". Following \citet{radford2021}, we generate about 80 text descriptions for each ImageNet class, embed them, take their average, and finally normalize this average to get the class classifier. By concatenating all of these embeddings, we get a linear classifier, and zero-shot accuracy is the accuracy of this classifier. As for recall, "@k" means that the prediction is correct if the correct class is among the $k$ best matches.

\paragraph{Classifier Accuracy.} Alternatively, we train a linear classifier on the image embeddings instead of taking the one given by the embedding of these basic textual class descriptions. This metric doesn't consider the multimodal aspect of the embeddings anymore; it only considers whether the embeddings contain the information to classify.

\paragraph{Retrieval.} We evaluate retrieval performance, another downstream task, on the FashionIQ dataset (\cref{app:datasets}). To do so, we consider the average of recall@10 and recall@50, as in \citet{wu2021fashion}. Instead of matching images $I$ and texts $T$, we now have queries $Q$ and targets $T$. Queries are functions of the source image's embedding and the relative caption's embedding, typically the normalized sum. Targets are the target image's embedding.

The goal is to quantify the quality of semantic vector arithmetic. Qualitatively, the embedding of an image of a red shoe $\bm{i}_{\mathrm{src}}$ plus the embedding of the relative description \emph{"is not red but blue"} $\bm{\Delta}$ should match the embedding of an image of a blue shoe $\bm{i}_{\mathrm{src}} + \bm{\Delta} \sim \bm{i}_{\mathrm{trg}}$. Quantitatively, we evaluate both if our queries have high recall---meaning they can retrieve images of blue shoes---and whether they belong to the distribution of images---meaning they are actually embeddings of blue shoes---using the distance to the K-th Nearest Neighbor, a standard out-of-distribution (OOD) detection method \citep{sun2022out,yang2022openood}.

\subsection{SAE metrics}

\subsubsection{General metrics}

\paragraph{Sparse Reconstruction.} These metrics measure reconstruction faithfulness and the sparsity of the codes. While a perfect reconstruction indicates the SAE likely fits noise, a faithful reconstruction of the activations is necessary to have captured the latent conceptual structure.

\begin{itemize}
    \item $\mathrm{MSE} = \E_i\left[\|A_i - \widehat{A}_i\|_2^2\right]$, where $\widehat{A}_i$ is the reconstruction of $A_i$ by the SAE. This is a direct measure of the reconstruction error, quantifying how closely the autoencoder reproduces the original activations.
    
    \item $R^2 = \E_i\left[1 - \frac{\|A_i - \widehat{A}_i\|_2^2}{\|A_i - \bar{A}\|_2^2}\right]$, where $\bar{A}$ is the average activation in $A$. The $R^2$ score is a measure of the explained variance. It is independent of the norm of the activations, focusing on the proportion of variance captured by the model on each instance.
    
    \item $L_0 = \E_i\left[\|z_i\|_0\right]$, where $z_i$ is the encoding of $A_i$ by the SAE, and $\|\cdot\|_0$ counts the number of nonzero elements. This metric captures the average sparsity level in the encoded representation.
    
    \item $L_1 = \E_i\left[\|z_i\|_1\right]$, where $z_i$ is the encoding of $A_i$. This is the average $L_1$ norm of the codes and serves as a differentiable proxy for $L_0$ sparsity. It penalizes large activations and encourages sparse representations, and is commonly used as a regularization term. In some architectures, a term similar to $L_1$ is included in the loss to promote sparsity, while in others, where sparsity is directly controlled, it is not necessary. We report it for all architectures for comparison.
\end{itemize}

\paragraph{Consistency.} Measures whether the dictionary is well-grounded and functionally relevant.

\begin{itemize}
    \item Interventions: $C\mathrm{-insertion}$, $C\mathrm{-deletion}$ \citep{fel2023holistic} We include these metrics for completeness, even though they are not very useful in our case. For all data points, they sort active features by attribution and either insert or delete them one by one, measuring the downstream effect.
    Attribution is relative to a scalar, and we used the reconstruction error in this case. This is not very informative, as activation is then a very good proxy for attribution due to the near orthogonality of features. What these metrics tell us is essentially how fast the activation decreases if you sort features by activation.
    An arguably more useful way of computing these metrics would have been w.r.t. the cosine similarity between matching pairs or w.r.t. the recall. But again, it would not be very informative, and they would essentially become weaker versions of $\delta_{\mathrm{r@1}}$.
    \item Stability measures the average distance between learned dictionaries under different random seeds. It is traditionally believed that lower is better, as it means the algorithm and its inductive biases solve the unidentifiability problem. However, it does not, by any means, imply that the found solution is "good" or is representative of the "true" conceptual structure.
\end{itemize}

\paragraph{Structure in $\bm D$.} Organization of the learned dictionary atoms.

\begin{itemize}
    \item $\mathrm{Stable Rank}(M) = \frac{\norm{M}_F^2}{\norm{M}_2^2}$. Here, $\norm{M}_F^2 = \sum\sigma_i^2$ and $\norm{M}_2^2 = \sigma_{\max}^2$, where $\sigma_i$ are the singular values of $M$. It gives both the number of significant singular values in $M$ and a robust rank estimation. Due to numerical and floating-point errors, almost all high-dimensional matrices are full rank.
    \item $\mathrm{EffectiveRank}(M)=e^{H(p)}$, where $H(p)=-\sum p_i \log p_i$ is the entropy of the probability distribution $p$, and $p_i=\frac{\sigma_i}{\sum\sigma_j}$. It captures information-theoretic diversity in $M$, measuring the spread of the spectrum. It is the number of informative singular directions in the matrix. Both a $\mathrm{Stable Rank}(M)$ and an $\mathrm{Effective Rank}(M)$ close to 1 suggest a few dominant directions, while values close to $\mathrm{rank}(M)$ indicate all directions are equally important.
\end{itemize}

When analyzing $D$, higher stable and effective ranks are better, which suggests high diversity of concepts.

\begin{itemize}
    \item $\mathrm{Coherence}=\max_{i\neq j} D_i^\top D_j$ is the maximum cosine similarity between two dictionary atoms. A score close to $1$ indicates possibly redundant concepts, while lower scores tend to reflect a better spread of concepts. In practice, this metric seems to be a measure of the inductive bias against learning close, undistinguishable features, with all SAEs but MP in \cref{app:SAE_architecture} having this bias and achieving similar scores, and MP not having this bias and having a much higher score. See \cref{app:unimodal_biases} for more details on MP.
\end{itemize}

\paragraph{Structure in $\bm{Z^\top Z}$.} In this case, we consider $Z\in\R^{N\times K}$ with normalized columns, to focus on co-activation structure, not just activation magnitude.

\begin{itemize}
    \item $\mathrm{Connectivity}=1 - \frac{1}{K^2}\norm{Z^\top Z}_0$ counts the number of unique co-activations among all possible pairs. Higher connectivity scores suggest a wide range of possible meaningful concept combinations, while low connectivity implies a sparse graph of concept combinations, with possibly more structure in this second case.
    \item $\mathrm{Stable Rank}$ and $\mathrm{Effective Rank}$ measure the structure in $Z^\top Z$, with low scores indicating highly structured co-activations, with the presence of groups or blocks of concepts behaving in similar manners. High scores suggest little to no structure.
    \item $\mathrm{NegativeInterference} = \norm{\mathrm{ReLU}\left[-(Z^\top Z) \odot(DD^\top)\right]}_2$ quantifies how much concepts tend to cancel each other out.
\end{itemize}

\subsubsection{Modality specific metrics}
\label{app:modality_metrics}

In addition to these general metrics, we introduce multimodality-specific metrics, motivated by the \Isoenergy~Assumption and the phenomenology of VLM embeddings. We report:

\begin{enumerate}
    \item \emph{Probing accuracy} $p_{\mathrm{acc}}$, which quantifies how the geometric organization of dictionary atoms aligns with known structures in latent space.
    \item \emph{Functional alignment} $\rho$, an instance-level ratio indicating whether cross-modal alignment is carried by \bimodal rather than \unimodal features.
    \item \emph{Functional and Distributional Agreement} (FDA), a distribution-level analogue of $\rho$ that checks consistency of this functional role across populations.
    \item $\delta_{\mathrm{r@1}}$, an interventional measure of how much retrieval recall degrades when \unimodal atoms are ablated.
\end{enumerate}

Together, these four views (geometric, functional, distributional, and causal) consistently probe complementary aspects of how features support cross-modal alignment.
While $p_{\mathrm{acc}}$ gives a purely geometric measure of consistency between structures in concepts and embeddings, $\rho$ and FDA link the geometric aspect of alignment to the functional behavior of each feature.
$\rho$ and FDA are two sides of the same coin: while $\rho$ operates at the instance level, capturing how cross-modal alignment is distributed across features functionally, FDA operates at the distribution level and shows whether this functional role is consistent with the global geometry of the two modalities, in a way that is more consistent with the \Isoenergy~Assumption.
$\delta_{\mathrm{r@1}}$ complements these functional views by providing a causal perspective on whether \bimodal features alone are sufficient to maintain alignment.

\paragraph{Energy.} Energy $E$ gives a measure of the importance of each feature in the dictionary. See \Cref{fig:scatter_high} for a detailed distribution of energy across features and modality.
\vspace{-2mm}
\begin{align}
\label{eq:energy}
\forall \domain \in \Domains \quad E^{(\domain)} = \E \left( \encoder^{(\domain)}(\Xvec)^2 \mid \Xvec \in \mathcal{X}^{(\domain)} \right)
 \qquad\qquad E = \frac{1}{|\Domains|} \sum_{\domain \in \Domains} E^{(\domain)} \in \R^{K}
\end{align}
\vspace{-4mm}

where $\encoder$ is the composition of the VLM and the SAE. Let $\mathbf{a} \coloneqq \frac{E^{(\img)}}{\norm{E^{(\img)}}_1}$ and $\mathbf{b} \coloneqq \frac{E^{(\txt)}}{\norm{E^{(\txt)}}_1}$ be probability distributions for the image and text domains, respectively.

\paragraph{Modality Score.}
Before turning to the metrics themselves, we introduce the modality score, and a binary mask indicating whether a feature is \bimodal. The \emph{modality score} of feature $i$ is a direct measure of the component-wise \Isoenergy~Assumption:
$\mu_i = \frac{E_i^{(\img)}}{E_i^{(\img)} + E_i^{(\txt)}} \in [0,1]$.
Values $\mu_i \approx 1$ or $\mu_i \approx 0$ correspond to image-only and text-only features, while $\mu_i \approx 0.5$ indicates balanced activation energy across modalities. We refer to the former as \unimodal features and the latter as \bimodal features. To make this distinction operational, we introduce a binary mask $\bm{\delta}_i$ indicating whether feature $i$ is classified as \bimodal. In practice, $\bm{\delta}_i$ is obtained by thresholding $\mu_i$ within an interval $[\tau, 1-\tau]$ around $0.5$, with $\tau=0.05$. We use a bridge-based consistency criterion to define $\tau$, which we introduce later.

\paragraph{Linear Structure.}\label{metric:pacc}We evaluate how the geometric organization of dictionary atoms aligns with the known structure of the modality gap. In \Cref{fig:energy_distribution}, a linear projection of feature vectors reveals three distinct clusters corresponding to image-only, text-only, and \bimodal features. This geometry of concept organization closely aligns with the geometry of embeddings and the modality gap~: image--only features are aligned with the image cone, similarly for text, while \bimodal features are orthogonal to modality information. We define \emph{probing accuracy} $p_{\mathrm{acc}}$ to quantify how well the geometric structure of concepts aligns with embedding structure and the modality gap. High $p_{\mathrm{acc}}$ indicates strong alignment.

Each feature $D_i$ is considered as a linear probe classifying embeddings. \textit{Unimodal} features are expected to be highly predictive, yielding high accuracy, whereas \bimodal features should be domain-agnostic, yielding an accuracy near $0.5$.
The score $s_i$ of \unimodal feature $i$ is simply its accuracy $acc$. For \bimodal features, $s_i=1 - 2 * (\max(\mathrm{acc}, 1 - \mathrm{acc}) - 0.5)$ is close to 1 if they are consistently anti-informative (i.e. good at being bad), and close to $0$ otherwise.
Finally, $p_{\mathrm{acc}}$ is computed as the energy-weighted average of the $s_i$ over all features:
$p_{\mathrm{acc}} = \frac{\sum_i E_i \cdot s_i}{\sum_i E_i}$.

\paragraph{Bridge.} Some features naturally act as \emph{bridges} between modalities, carrying information that contributes to cross-modal alignment. Intuitively, these are the \bimodal features: they are simultaneously active in both image and text embeddings and facilitate matching across modalities. Following \citet{papadimitriou2025}, we define the \emph{bridge} matrix $B_W$ as $B_W = W \odot (D \cdot D^\top) \in \R^{K \times K}$
where $W \in \R^{K \times d}$ is a weight matrix. They used $B_\Sigma$, where $\Sigma$ is the cross-covariance of activations across image-text pairs from a joint empirical distribution $\gamma$ (\Cref{eq:cross_covariance}).

\begin{equation}
    \label{eq:cross_covariance}
    \Sigma = \underset{{(\Xvec, \Xvec') \sim \gamma}}{\E}\left(\encoder^{(\mathrm{img})}(\Xvec)^\top\encoder^{(\mathrm{txt})}(\Xvec')\right)
\end{equation}

\begin{align}
    \label{eq:ot_plan}
    \Gamma = \underset{\gamma}{\arg\min} \|\gamma \odot C\|_F\qquad\text{s.t.}\quad \gamma \mathbf{1} = \mathbf{a},\\\qquad\qquad\qquad\qquad\quad\mathbf{1}^\top \gamma = \mathbf{b}^\top,~\gamma \ge 0
\end{align}

We also consider $B_\Gamma$, where $\Gamma$ is the optimal transport plan (\Cref{eq:ot_plan}) between the image- and text-weighted dictionaries, using weights $\mathbf{a}$ and $\mathbf{b}$, and a cost matrix $C = 1 - D \cdot D^\top$ (equivalently, both cosine distance and half squared Euclidean distance since atoms are unit norm).
While $B_\Sigma$ captures instance-level co-activation patterns under the contrastive objective, $B_\Gamma$ captures alignment at the distribution level, making it more consistent with the \Isoenergy~assumption and robust to instance-level mismatches. We show how to chose $\tau$ based on $B_\Sigma$ in \cref{app:modality_metrics}.

\paragraph{Functional Alignment.}\label{metric:rho}$B_{\Sigma}$ captures \emph{functional alignment}, the combined functional similarity and geometric alignment of features across modalities. It reflects each feature pair's contribution to cross-modal alignment under the contrastive objective, with its entries providing pairwise alignment score.
Thus we define $\rho$ the ratio between total $B_\Sigma$ bridge score adjacent to \bimodal features ---i.e. for all pairs $(i, j)$ where at least one of $i$ or $j$ is a \bimodal feature--- and that adjacent only to \unimodal features ---i.e. both $i$ and $j$ are \unimodal. $\rho>1$ means functional alignment is primarily done by \bimodal features; $\rho<1$ means \unimodal pairs spuriously carry it.

\paragraph{Distributional Alignment.}\label{metric:FDA}In contrast, $B_{\Gamma}$ captures \emph{distributional alignment} between modalities. Its total mass is $1-c$, where $c$ is the transport cost. The entries of $B_{\Gamma}$ reveal which features contribute to balancing or imbalancing the alignment.
A high $B_{\Gamma, i, j}$ indicates features $i$ and $j$ jointly contribute to aligning the two distributions. A low score is harder to interpret: it may reflect features contributing to imbalance (if $\Gamma_{i,j}\neq0$) or unmatched pairs by the transport plan.
\textit{Bimodal} features should belong to both modalities' distribution (particularly matching mostly to themselves) and thus drive distributional alignment. On the other hand, \unimodal features should be specific to it's corresponding modality's distribution and thus should not participate in the overall distributional alignment.
This motivates the \textbf{\boldmath Functional and Distributional Agreement ($\mathrm{FDA}$)} ratio:
\begin{equation}
\label{eq:rho}
\mathrm{FDA} = \frac{\alpha}{\varepsilon}\frac{1-\varepsilon}{(1-c) - \alpha}\text{, where }\alpha = \sum_{i, j \mid \bm{\delta}_i \lor \bm{\delta}_j} B_{\Gamma, i, j}\text{ and }\varepsilon = \frac{\sum_{i \mid \bm{\delta}_i} E_i}{\sum_{i} E_i}
\end{equation}
Here, $\alpha$ is the sum of bridge scores adjacent to a \bimodal feature, and $\varepsilon$ is the proportion of energy in \bimodal features.
$\mathrm{FDA}$ is a mirror of the instance--level $\rho$. It summarizes the impact of \bimodal and \unimodal features on the distributional alignment between the two modalities. The higher the $\mathrm{FDA}$, the better the dictionary is at capturing features whose functional behavior consistently matches their geometric distribution across modalities. Indeed, \bimodal features should always carry most of the alignment \emph{relative to their overall energy}.
A low FDA means either \bimodal features fail to align distributions properly or \unimodal features are spuriously filling this role, indicating a breakdown in the shared conceptual structure.

\paragraph{Intervention.}\label{metric:deltar}After these two functional measures, we turn to the causal behavior of \bimodal feature, asking if they alone are sufficient to explain the recall performance of the embeddings. To do so, we define $\delta_{\mathrm{r@1}}$ as the difference in recall between embeddings reconstructed with the full dictionary and those reconstructed by filtering out \unimodal features.
Let $A \in \R^{N\times d}$ be a set of activations, $Z \in \R^{N \times K}$ the set of sparse codes obtained by encoding $A$ with our learned encoder, $\bm{\delta} \in \{0, 1\}^{N \times K}$ be a binary mask filtering out \unimodal features, broadcasted across the rows of $Z$. Then, let $\widehat{A} \coloneqq ZD$ be the set of reconstructed activations, and $\widetilde{A} \coloneqq (Z \odot \bm{\delta})D$ be the set of \bimodal activations. We finally measure $\delta_\mathrm{r@1}$ as the difference in recall between the embeddings given by $\widehat{A}$ and~$\widetilde{A}$.\\

\paragraph{Thresholding $\bm{\mu}$.}These four metrics depend on the threshold chosen to decide which features are \bimodal, text-only, and image-only. See \Cref{fig:bridge_vs_eps} for a visualization of $\rho$ across the full range of possible thresholds. This figure also illustrates how to choose the threshold, depending on the region's size, involving bridges between two image-only or two text-only features. As soon as this region becomes significant, \unimodal features can't be considered unimodal anymore. As we can see, our results are not very sensitive to the threshold. This is explained by the fact that most modality scores are extreme, either close to 0, 0.5 or 1. This is best visualised through the domain-wise distribution of energy (\Cref{fig:scatter_high}), where we can see clearly defined modes that are not sensitive to the threshold.\\

\begin{figure}[h]
    \centering
    \includegraphics[width=1.\linewidth]{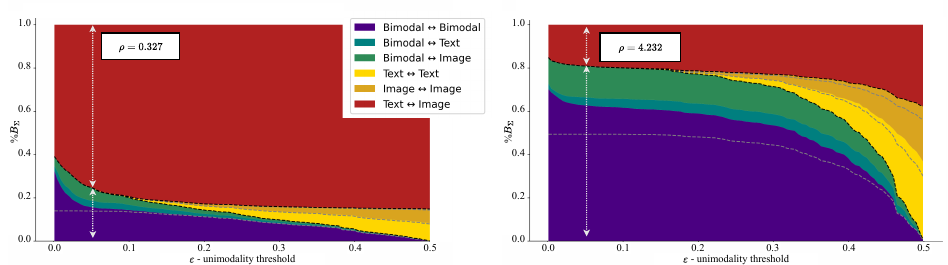}
    \caption{Bridge $B_\Sigma$ mass repartition between \bimodal, image-only, and text-only features, depending on the unimodality threshold.}
    \label{fig:bridge_vs_eps}
\end{figure}

\Cref{fig:distributional_alignment} shows an illustration of the $\mathrm{FDA}$ score, with both an example of a high score on a \SAEA{SAE-A} where we can clearly see that \bimodal features are very stable while image-only are transported to text-only features at considerable cost, and a low score on a classical \SAE{SAE}, where there is no clear dynamic between all three types of features, especially in the center cluster where all three types seem to be mixed.

\begin{figure}[h]
  \centering
  \includegraphics[width=.9\textwidth]{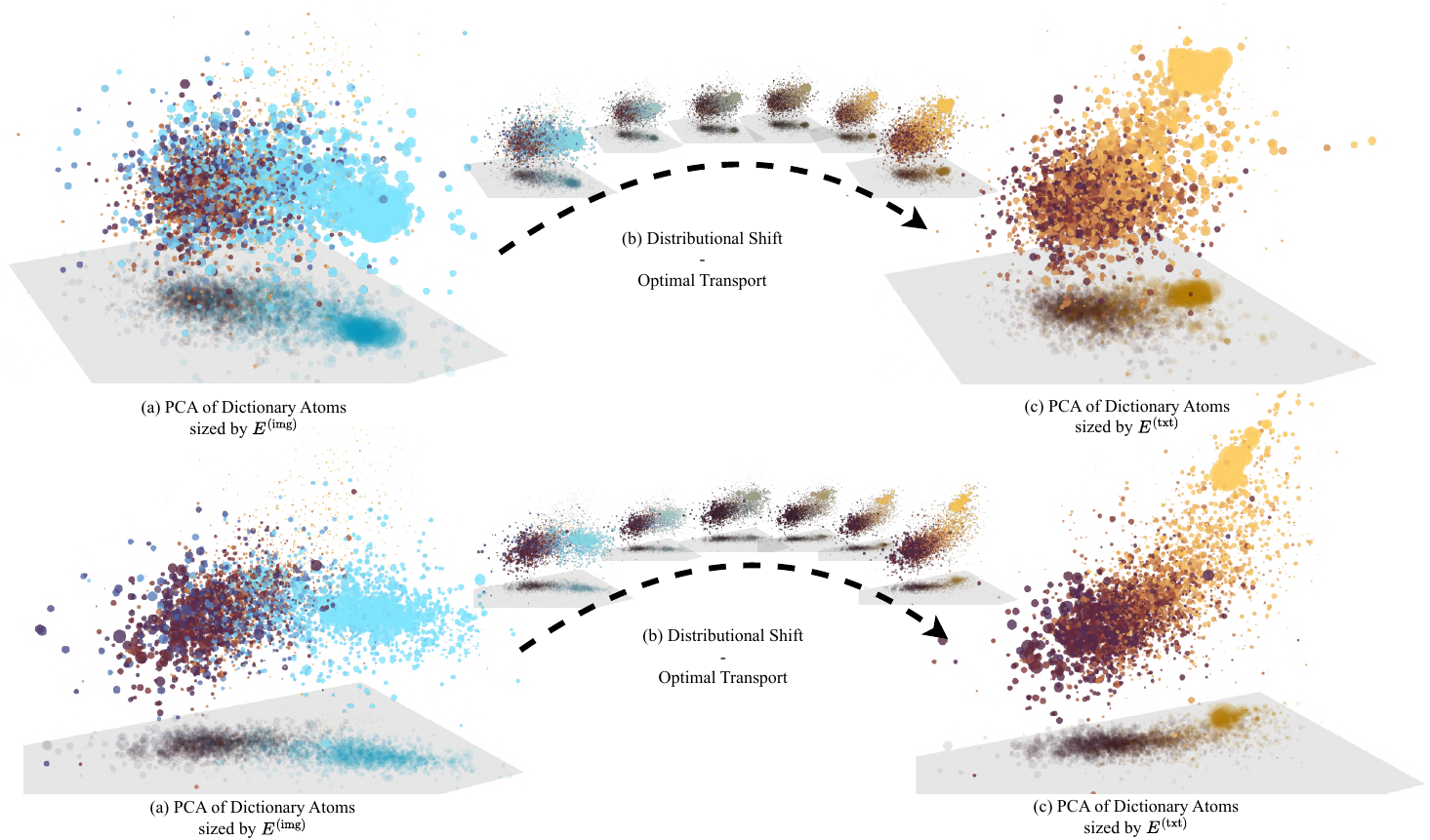}
  \caption{\textbf{Features of \SAEA{SAE-A} (Bottom, FDA = 4.559) functionally behave in accordance with their geometric repartition, which is less the case in \SAE{SAE} (Top, FDA = 2.630)}. The left and right plots show 3D projections of the feature distributions, colored by modality score, sized by energy on the image (\emph{left}) and text (\emph{right}) domains. The center part displays optimal transport between the two domains' distribution, illustrating the FDA score.}
  \label{fig:distributional_alignment}
\end{figure}

We can see in \Cref{fig:linear_structure} the details of the $p_{\mathrm{acc}}$ score distribution across each feature, both for an \SAE{SAE} and a \SAEA{SAE-A}.

\begin{figure}[h]
  \centering
  \includegraphics[width=1.\textwidth]{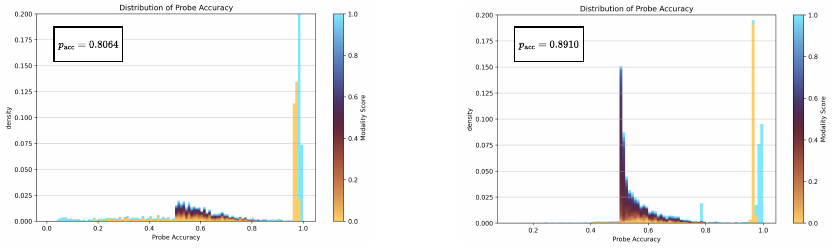}
  \caption{\textbf{Features in \SAEA{SAE-A} (right) outperform those in \SAE{SAE} (left) at geometrically aligning with modality-specific or agnostic information}. Distribution of probing accuracy for each feature based on its modality score. Each feature is evaluated as a probe classifying the modality of embeddings. \textit{Unimodal} features should have an accuracy of $1$, while \bimodal features should have an accuracy of $0.5$. $p_{\mathrm{acc}}$ summarizes that in a single metric---higher is better.}
  \label{fig:linear_structure}
\end{figure}

\subsection{Optimizing for $p_{\mathrm{acc}}$}
\label{app:pacc}

Let's dive into why optimizing for $\mathcal{L}_{\mathrm{align}}$ does not optimize for $p_{\mathrm{acc}}$. Before presenting the technical reasoning, it is important to clarify the motivation for making this distinction.

We consider each feature vector $D_i$ as a probe on embeddings $A$ by computing the dot product between $D_i$ and every embedding $A_j$~: $\mathrm{scores} = A \cdot D_i \in \R^N$, and then taking binary classification predictions as $\mathrm{preds}=\bm{1}_{scores>0} \in \{0, 1\}^{N}$. Given some desired labels $l \in \{0, 1\}^N$, we can thus compute the accuracy of $D_i$. Precisely, the scores are corrected to have a mean of $0$, which is equivalent to, and done by, centering $A$. The reason for this is that we know exactly half of the labels are $0$s and half are $1$s.

Let $l^{(\mathrm{img})}$ be labels indicating which embeddings are from the image domain. For an image-only feature, with $\mu > 1-\tau$, we take $l = l^{(\mathrm{img})}$. For a text-only feature, with $\mu < \tau$, we take $l = 1-l^{(\mathrm{img})}$. Finally, as explained above, for \bimodal features, we take $l=l^{(\mathrm{img})}$ and apply a transformation to the probe's accuracy such that the resulting score is close to 0 when the feature acts as a reliable (text or image) classifier, and approaches 1 when the classification accuracy is near chance (i.e., 0.5): $\mathrm{acc_{bimodal}}=1-2*(\max(\mathrm{acc}, 1-\mathrm{acc})-0.5)$.

If we consider the distributions of $I \cdot D_i$ and $T \cdot D_i$, it can appear that $\mathcal{L}_{\mathrm{align}}$ pushes these distributions to be equal, by enforcing an equal activation on both elements of a matching image-caption pair. However, this is only true when $z_i > 0$ for at least one of the image or the caption. In other words, if you consider only a subset of $I$ and $T$ where $D_i$ is active, then it is true that $\mathcal{L}_{\mathrm{align}}$ penalizes distribution mismatches.

Nevertheless, as features are sparse and have very low frequency, this effect is negligible on the total distributions, as illustrated by \Cref{fig:app_pacc}.

Moreover, all of that is applicable only for \bimodal features. \textit{Unimodal} features are not affected at all by this effect, and yet, we observe that in \SAE{SAE} a significant amount of \unimodal features act as classifiers for the opposite modality, which almost completely disappears in \SAEA{SAE-A}. This suggests that when training with $\mathcal{L}_{\mathrm{align}}$, features become more organized overall.

\begin{figure}[h]
    \centering
    \includegraphics[width=0.6\textwidth]{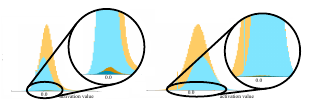}
    \caption{Distribution of $\mathrm{scores} = A \cdot D_i$ for a high energy \bimodal feature (\textbf{left}) and a low energy \bimodal feature (\textbf{right}). Blue: image scores $I \cdot D_i$. Yellow: text scores $T \cdot D_i$. Dark: subset of image-caption pairs where $D_i$ is active in at least one of them. Light: all other pairs. The implicit optimization for $p_{\mathrm{acc}}$ happens only in the dark regions.}
    \label{fig:app_pacc}
\end{figure}

\section{MP-SAE high energy "biases"}
\label{app:unimodal_biases}

In this section, we investigate high-energy \unimodal features, circled in \Cref{fig:scatter_high}. We find that they are all extremely close to one another, with an average cosine similarity above 0.9 (\Cref{fig:cosim_matrices}), never co-activate, and have a total frequency of 0.5. As they are \unimodal, this means that they are, in fact, small variations around a \unimodal bias. These results hold for all the models we tested, and for all the MP-SAEs we trained.

We find that, while some of these features do not seem to be trivially interpretable by looking at their most activating examples, some are surprisingly interpretable (\Cref{fig:visu_i_high,fig:visu_t_high}).

\Cref{fig:E_and_f} additionally shows the distribution of energy and frequency across features.

\begin{figure}[h]
    \centering
    
    \begin{subfigure}[b]{0.45\linewidth}
        \centering
        \includegraphics[width=\linewidth]{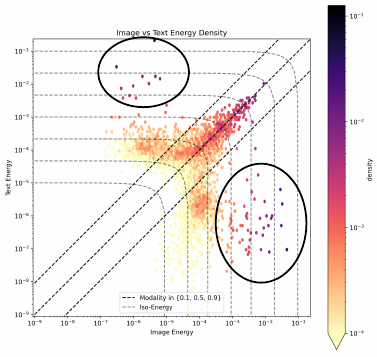}
        \caption{2D histogram of feature energy across both the image and text domains. We circled high-energy \unimodal features for the analysis of \cref{app:unimodal_biases}.}
        \label{fig:scatter_high}
    \end{subfigure}
    \hfill
    \begin{subfigure}[b]{0.45\linewidth}
        \centering
        \includegraphics[width=\linewidth]{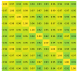}
        \caption{Matrix of pairwise cosine similarity between high-energy text-only features. Average cosine similarity: 0.9555 Total frequency: 0.461}
        \label{fig:cosim_matrices}
    \end{subfigure}
    \caption{\textbf{Left:} energy distribution across domain, \textbf{right:} pairwise cosine similarity between circled features.}
\end{figure}

\section{Feature Gallery}
\label{app:gallery}

\begin{figure}[h]
    \centering
    \includegraphics[width=0.5\linewidth]{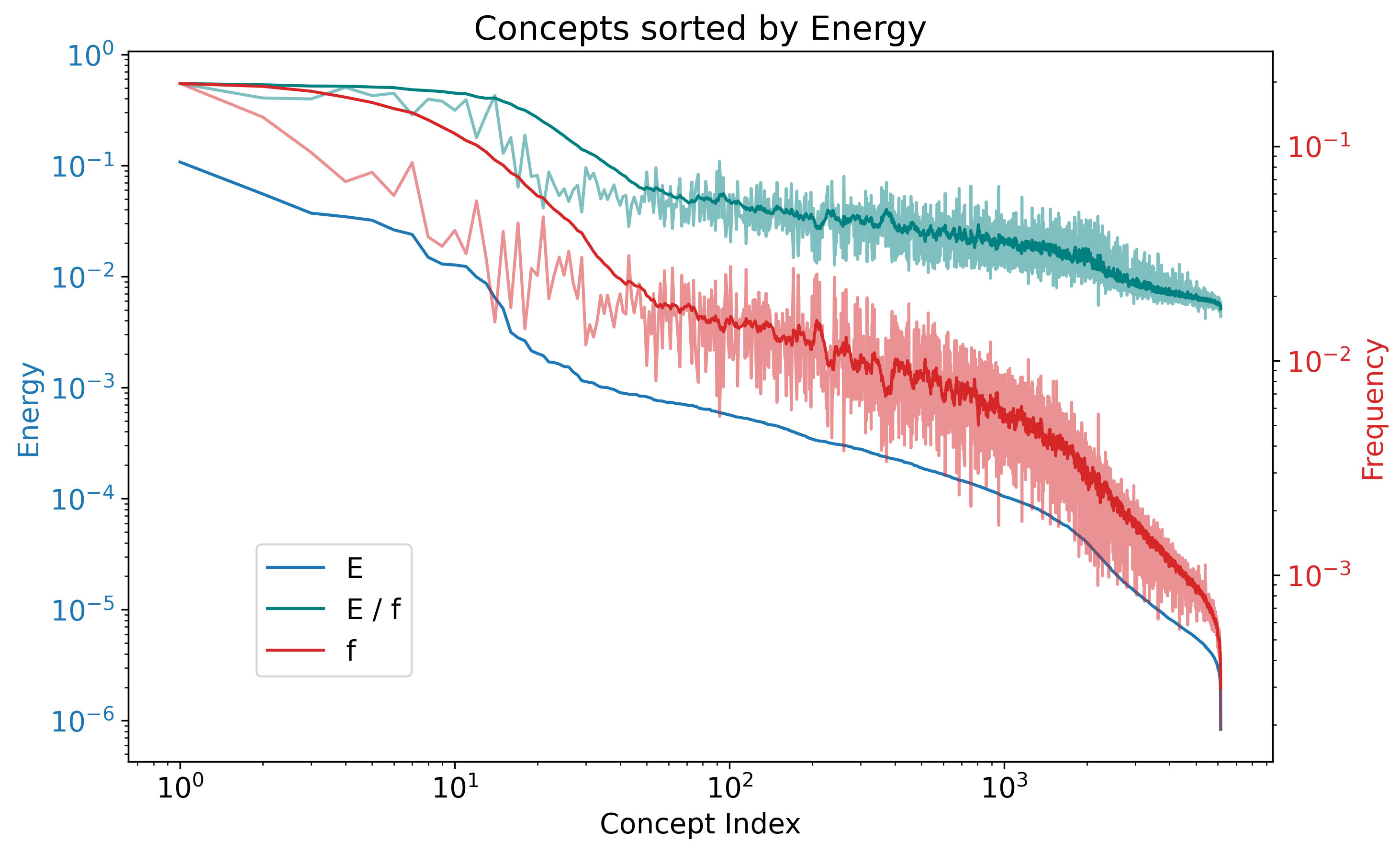}
    \caption{Energy and frequency distribution across features, sorted by energy. The $E/f$ curve gives an indication of the activation magnitude when the feature is active. Apart from the first few features, it appears that this quantity is roughly constant across all features.}
    \label{fig:E_and_f}
\end{figure}

\begin{figure}[h]
    \centering
    \includegraphics[width=1.\linewidth]{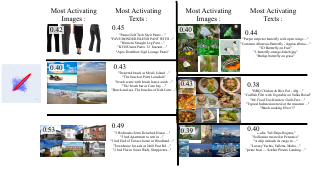}
    \caption{Examples of high and low-energy \bimodal features' most activating samples. For each feature, we indicate its average activation on the selected samples.}
    \label{fig:visu_bi_high}
\end{figure}

\begin{figure}[h]
    \centering
    \includegraphics[width=1.\linewidth]{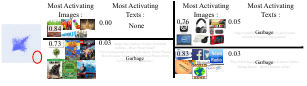}
    \caption{Examples of image-only features' most activating samples. For each feature, we indicate its average activation on the selected samples. We can see an a priori uninterpretable feature, then a feature for electronic devices, covers, and logos. These features are almost never active in texts, and when they are, they have very small activation and no apparent pattern.}
    \label{fig:visu_i_high}
\end{figure}

\begin{figure}[h]
    \centering
    \includegraphics[width=1.\linewidth]{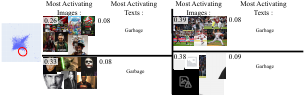}
    \caption{Examples of image-only features' most activating samples. For each feature, we indicate its average activation on the selected samples. We can see a feature for covers, sports, famous figures, and badly cropped images. These features are almost never active on texts, and when they are, it's with very small activation and without any apparent pattern.}
    \label{fig:visu_i_low}
\end{figure}

\begin{figure}[h]
    \centering
    \includegraphics[width=1.\linewidth]{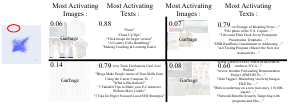}
    \caption{Examples of text-only features' most activating samples. For each feature, we indicate its average activation on the selected samples. We can see an a priori non-interpretable feature, then a feature for news, scam influencers, and news again. These features are almost never active on images, and when they are, it's with very small activation and without any apparent pattern.}
    \label{fig:visu_t_high}
\end{figure}

\begin{figure}[h]
    \centering
    \includegraphics[width=1.\linewidth]{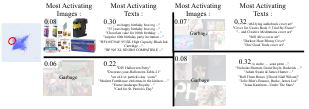}
    \caption{Examples of text-only features' most activating samples. For each feature, we indicate its average activation on the selected samples. We can see a feature for ink cartridges and birthdays, cover art, traditional holidays, and name-dropping. These features are almost never active on images, and when they are, they have very small activation and no apparent pattern, except for the cartridges and birthday one, which also activate with low energy on similar images.}
    \label{fig:visu_t_low}
\end{figure}

\clearpage

\section{Semantic vector arithmetic.}
\label{app:arithmetic}

After training SAEs with desirable properties, based on our hypotheses on the representations of VLM dual encoders, we now ask whether they can be used to improve their performance on downstream tasks. To do so, we select both zero-shot accuracy, measured on ImageNet in \cref{app:removing_gap} as part of the modality gap experiment, and retrieval, measured in this section on FashionIQ. See \cref{app:datasets} for details on this dataset.\\

\paragraph{Notations.} Let $Q \in R^{N\times d}$ be query vectors and $T \in \R^{N \times d}$ be target vectors. The task consists in maximizing average recall on batches $b$ of cosine similarity matrix $Q_{b} \cdot T_b^\top$.
$T$ are embeddings of target images, and $Q$ will involve $I_{\mathrm{src}}$, source image embeddings, and $\Delta$, embeddings of relative captions.
Let $Z_T$, $Z_{I_{\mathrm{src}}}$ and $Z_\Delta$ be the sparse codes of $T, I_{\mathrm{src}}$ and $\Delta$ given by our SAE.

Let $\widehat{T} \coloneqq Z_TD$ be the set of reconstructed activations, and $\widetilde{T} \coloneqq (Z_T \odot \bm{\delta})D$ be the set of \bimodal activations (recall that $D \in \R^{K\times d}$ is the dictionary and $\bm{\delta} \in \{0, 1\}^{K}$ is the binary mask selecting \bimodal features, and broadcasted to the size of the codes). Similarly, let $\widehat{I}_{\mathrm{src}}$, $\widetilde{I}_{\mathrm{src}}$, $\widehat{\Delta}$ and $\widetilde{\Delta}$ be reconstructions and \bimodal versions of $I_{\mathrm{src}}$ and $\Delta$.\\

\paragraph{Queries.} We define the following queries. First, we use $Q^\mathrm{baseline}_{\widehat{I}_{\mathrm{src}}}\coloneqq \widehat{I}_{\mathrm{src}}$ and $Q^\mathrm{baseline}_{\widehat{\Delta}} \coloneqq \widehat{\Delta}$ as lower bounds for any other queries. Then, let $Q \coloneqq \widehat{I}_{\mathrm{src}} + \widehat{\Delta}$ be the classical way of doing semantic vector arithmetic, and $Q_{\mathrm{SAE}} \coloneqq \widehat{I}_{\mathrm{src}} + \widetilde{\Delta}$ be arithmetic restricted on \bimodal features.

See \Cref{fig:retrieval_semantics} for a visual representation of all vectors involved. Note that queries are scaled to end up on the unit sphere on this representation, the actual result of the addition is indicated with a cross.

\paragraph{Retrieval.} We find that $Q_{\mathrm{SAE}}$ systematically outperforms $Q$ in terms of retrieval recall, across the six models we tested (\cref{tab:retrieval_recall}).

\paragraph{Out of Distribution.} We also measure the out of distribution (OOD) score of each of our four queries distribution to the target distribution. The idea is to measure whether our queries actually match targets or are just vectors that happen to be able to do retrieval.

We use the distance between $Q_i$ and it's K-th nearest neighbor in $\widehat{T}$--- similarly for $T_i$---then we find the optimal threshold on the distance to classify $Q$ vs $\widehat{T}$. \textbf{The final OOD score of Q is the accuracy of this classifier}. This approach is popular in the OOD literature \citep{sun2022out, yang2022openood}. We typically use $K = 10$. We repeat that procedure for $Q_{\mathrm{SAE}}$ and the two baselines.

We expect $Q^\mathrm{baseline}_{\widehat{I}_{\mathrm{src}}}$ to have the exact same distribution as $\widehat{T}$ and $Q^\mathrm{baseline}_{\widehat{\Delta}}$ to be extremely out of distribution, due to the modality gap. Also due to the modality gap, we expect $Q$ to be completely OOD. Intuitively, if you only consider \unimodal biases, $Q$ will have the mean of both. As biases are the strongest components of all embeddings, this means that $Q$ will end up in a cone disjoint from both the image and text embedding distribution. In practice, there are a lot of other \unimodal features that are not biases. Thus, by intervening only on shared representations, we should have $Q_\mathrm{SAE}$ end up in the same distribution as $\widehat{T}$.

Results are available in \Cref{fig:retrieval_OOD_appendix} and \cref{tab:retrieval_OOD_appendix}. We can see that, as expected, restricting the intervention to shared representations reduces drastically the difference in query and target distribution.

\begin{minipage}{0.32\linewidth}
    \centering
    \resizebox{\linewidth}{!}{
    \begin{tabular}{lc|c}
        \toprule
        OOD score ($\downarrow$) & $Q$ & $Q_\mathrm{SAE}$ \\
        \midrule
        \quad \textbf{CLIP} & 0.97 & \textbf{0.63} \\
        \quad \textbf{CLIP-L} & 0.95 & \textbf{0.76} \\
        \quad \textbf{OpenCLIP} & 0.86 & \textbf{0.68} \\
        \quad \textbf{OpenCLIP-L} & 0.87 & \textbf{0.72} \\
        \quad \textbf{SigLIP} & 0.99 & \textbf{0.70} \\
        \quad \textbf{SigLIP2} & 0.99 & \textbf{0.61} \\
        
        \bottomrule
    \end{tabular}}
    \captionof{table}{OOD score between the query distribution and target distribution. Across all six models, our concept based queries $Q_{\mathbf{SAE}}$ outperforms the classical queries $Q$.}
    \label{tab:retrieval_OOD_appendix}
\end{minipage}%
\hfill
\begin{minipage}{0.63\linewidth}
\centering
    \centering
    \includegraphics[width=0.98\linewidth]{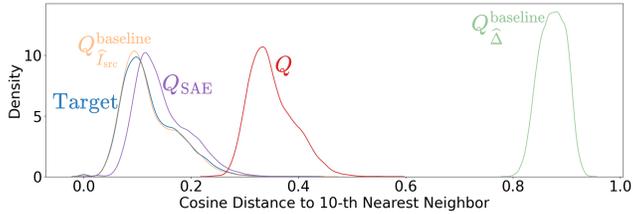}
    \captionof{figure}{Histograms of distances between queries and their 10th nearest neighbor in the target distribution. Our $Q_{\mathrm{SAE}}$ are close to the target distribution while classical $Q$ are not. $Q^\mathrm{baseline}_{\widehat{\Delta}}$ and $Q^\mathrm{baseline}_{\widehat{I}_{\mathrm{src}}}$ are, as expected, perfectly OOD due to the modality gap, and perfectly in distribution as source and target image distribution are the same.}
    \label{fig:retrieval_OOD_appendix}
\end{minipage}%

\begin{table}[H]
    \centering
    \resizebox{\linewidth}{!}{
    \begin{tabular}{lcccccc}
        \toprule
        $\frac{1}{2}(\mathrm{r@10}+\mathrm{r@50})~(\uparrow)$ & \textbf{CLIP} & \textbf{CLIP-L} & \textbf{OpenCLIP} & \textbf{OpenCLIP-L} & \textbf{SigLIP} & \textbf{SigLIP2} \\
        
        \midrule
        \quad $Q^\mathrm{baseline}_{\widehat{\Delta}}$ & 0.499 & 0.536 & 0.610 & 0.630 & 0.601 & 0.611 \\
        \quad $Q^\mathrm{baseline}_{\widehat{I}_{\mathrm{src}}}$ & 0.567 & 0.630 & 0.647 & 0.669 & 0.686 & 0.683 \\
        
        \midrule
        \quad $Q$ & 0.681 & 0.743 & 0.788 & 0.818 & 0.794 & 0.787 \\
        \quad $Q_\mathrm{SAE}$ & \textbf{0.683} & \textbf{0.750} & \textbf{0.790} & \textbf{0.820} & \textbf{0.812} & \textbf{0.798} \\
        
        \bottomrule
    \end{tabular}}
    \caption{Retrieval recall on FashionIQ for our two baselines and two queries. Across all six models, our concept based queries $Q_{\mathbf{SAE}}$ outperforms the classical queries $Q$.}
    \label{tab:retrieval_recall}
\end{table}

\section{Removing the modality gap}
\label{app:removing_gap}

\paragraph{}We investigate two methods and two baselines for removing the modality gap between $I$ and $T$. We measure the modality gap using the difference in mean (DiM) and the Wasserstein distance $\mathcal{W}$, and consider the performance in recall on LAION. We also include COCO and ImageNet to study the transferability of our methods on unseen data distributions, and on unseen tasks.

\paragraph{Notations.} Let's first recall some notations. Let $I$ and $T$ denote image and text embeddings, with respective mean $\mu_I$ and $\mu_T$. Let $Z_I$, $Z_T$ be the codes of $I$, $T$ given by our SAE. Let $D$ be the dictionary, and $\bm{\delta} \in \{0, 1\}^{K}$ be the binary mask selecting \bimodal features, and broadcasted to the size of the codes. 
Let $\widehat{I} \coloneqq Z_ID$ be the set of reconstructed activations, and $\widetilde{I} \coloneqq (Z_I \odot \bm{\delta})D$, and similarly for $\widehat{T}$ and $\widetilde{T}$.\\

\paragraph{}Our first method comes from the analysis of our SAEs : we remove \unimodal concepts, ending up with $\widetilde{I}$ and $\widetilde{T}$. This is the only method that doesn't enforce the mean of the two distribution to exactly match. Instead, we rely on our SAEs to capture all shared representations. As a baseline we propose to consider the centered versions of the data $I_{\mathrm{center}} \coloneqq \widehat{I} - \bm{\mu_{\widehat{I}}}$ and $T_{\mathrm{center}} \coloneqq \widehat{T} - \bm{\mu_{\widehat{T}}}$. The motivation for this baseline is the observation in \cref{app:unimodal_biases} that \unimodal features are predominantly small variations around a unique direction, except for small energy ones which are orders of magnitude less energetic.

For the above baseline and the next two methods, we use reconstructed embeddings instead of the original ones for fairness of comparison, as SAEs never perfectly reconstruct activations. Additionally, for the means, we use the means on the LAION dataset even on COCO and on ImageNet, also for fairness, as SAEs were trained on the LAION dataset.

\paragraph{}The second method is called \emph{embedding shift} in \citet{liang2022mind}, and it essentially consist in the following transformation : $I_{\mathrm{shift}} \coloneqq \widehat{I} - \bm{\mu_{\widehat{I}}} + \bm{\frac{\mu_{\widehat{I}} + \mu_{\widehat{T}}}{2}}$ and $T_{\mathrm{shift}} \coloneqq \widehat{T} - \bm{\mu_{\widehat{T}}} + \bm{\frac{\mu_{\widehat{I}} + \mu_{\widehat{T}}}{2}}$. We propose the following baseline for this method : we replace the average of the means by a random direction $\bm{r}$, sampled uniformly from the unit sphere---$I_{\mathrm{rand}} \coloneqq \widehat{I} - \bm{\mu_{\widehat{I}}} + \bm{r}$ and $T_{\mathrm{rand}} \coloneqq \widehat{T} - \bm{\mu_{\widehat{T}}} + \bm{r}$.

These two methods are not comparable to the first two in terms of modality gap, as even if the DiM is similar, the Wasserstein distance $\mathcal{W}$ behaves very differently. Indeed, in the first two methods, data is centered around $0$ and is roughly uniformly spread on the unit sphere. On the other hand, by manually adding a mean, these methods concentrate the data on a narrow cone, significantly reducing the $\mathcal{W}$. We illustrate these four methods on synthetic 3D points in \Cref{fig:toy_modality_gap}.

\begin{figure}[h]
    \centering
    \includegraphics[width=0.8\linewidth]{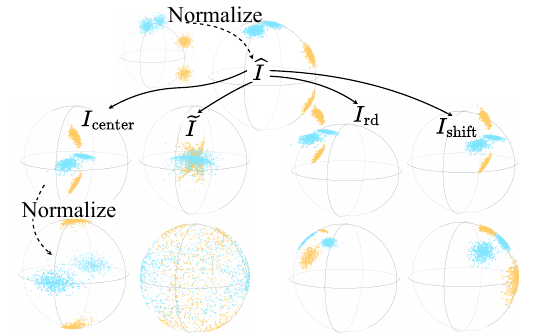}
    \caption{Illustration of the four methods to remove the modality gap on synthetic data points.}
    \label{fig:toy_modality_gap}
\end{figure}

\paragraph{} Results are compiled in \cref{tab:removing_gap_laion,tab:removing_gap_coco,tab:removing_gap_imagenet}. We can see that the shift method introduced in previous work is by far the worst, significantly reducing recall performances, then comes the random baseline. The mean baseline outperforms the concept-based approach in terms of recall, but it does not close the modality gap.

We also indicate in \Cref{fig:modality_gap_OOD} the KNN distance histograms for our four methods, following the OOD detection method described in \cref{app:arithmetic}, for the CLIP-L model on the LAION dataset. We take the image distribution as the reference one, and the text distribution as the "OOD" one. This illustrates a bit more precisely the effect of all methods on the distribution of embeddings. We can see that our method is the only one that closes the gap by merging the two distributions.

\begin{figure}[h]
    \centering
    \includegraphics[width=0.95\linewidth]{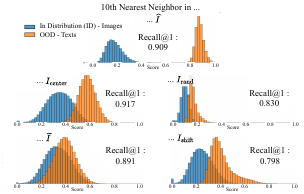}
    \caption{Histogram of distances from each image (ID) and caption (OOD) embedding to it's 10th nearest neighbor in the corresponding distribution of image embeddings. We use CLIP-L and the LAION dataset. As explained in \cref{app:removing_gap}, the shift method and our concept based method can't be compared in terms of distances, they can only be compared to their respective baselines. Indeed, distances in the random baseline and shift method are artificially small, but we can see from these histograms that the two distributions can still be very well separated. They are however comparable in histogram separation and recall, which we indicate. \textbf{Our concept based method is the only one to truly close the gap by matching the image and text distribution.}}
    \label{fig:modality_gap_OOD}
\end{figure}

\begin{table}[H]
    \centering
    \resizebox{\linewidth}{!}{
    \begin{tabular}{lcccccc|c}
        \toprule
         & \textbf{CLIP} & \textbf{CLIP-L} & \textbf{OpenCLIP} & \textbf{OpenCLIP-L} & \textbf{SigLIP} & \textbf{SigLIP2} & \textbf{$\Delta$ from $(\widehat{I}, \widehat{T})$} \\
        
        \midrule
        \multicolumn{6}{l}{$\mathrm{DiM}$} \\
        \cmidrule(lr){1-1}
        \quad $(\widehat{I}, \widehat{T})$ & 0.723 & 0.707 & 0.520 & 0.633 & 0.969 & 1.079 & / \\
        \quad $(I_{\mathrm{center}}, T_{\mathrm{center}})$ & 0.009 & 0.010 & 0.012 & 0.011 & 0.009 & 0.007 & -0.762 \\
        \quad $(\widetilde{I}, \widetilde{T})$ & 0.040 & 0.054 & 0.062 & 0.071 & 0.028 & 0.054 & -0.720 \\
        \quad $(I_{\mathrm{rand}}, I_{\mathrm{rand}})$ & 0.017 & 0.033 & 0.020 & 0.015 & 0.011 & 0.117 & -0.736 \\
        \quad $(I_{\mathrm{shift}}, I_{\mathrm{shift}})$ & 0.011 & 0.010 & 0.011 & 0.011 & 0.009 & 0.006 & -0.762 \\
        
        \midrule
        \multicolumn{6}{l}{$\mathcal{W}$} \\
        \cmidrule(lr){1-1}
        \quad $(\widehat{I}, \widehat{T})$ & 0.679 & 0.729 & 0.708 & 0.754 & 0.940 & 0.901 & / \\
        \quad $(I_{\mathrm{center}}, T_{\mathrm{center}})$ & 0.686 & 0.684 & 0.680 & 0.669 & 0.669 & 0.767 & -0.093 \\
        \quad $(\widetilde{I}, \widetilde{T})$ & 0.355 & 0.508 & 0.533 & 0.560 & 0.533 & 0.649 & -0.262 \\
        \quad $(I_{\mathrm{rand}}, I_{\mathrm{rand}})$ & 0.220 & 0.232 & 0.261 & 0.249 & 0.206 & 0.168 & -0.562 \\
        \quad $(I_{\mathrm{shift}}, I_{\mathrm{shift}})$ & 0.447 & 0.505 & 0.590 & 0.573 & 0.514 & 0.349 & -0.289 \\
        
        \midrule
        \multicolumn{6}{l}{$\mathrm{recall@1}$} \\
        \cmidrule(lr){1-1}
        \quad $(\widehat{I}, \widehat{T})$ & 0.882 & 0.909 & 0.952 & 0.964 & 0.907 & 0.307 & / \\
        \quad $(I_{\mathrm{center}}, T_{\mathrm{center}})$ & 0.898 & 0.917 & 0.951 & 0.966 & 0.921 & 0.330 & +0.010 \\
        \quad $(\widetilde{I}, \widetilde{T})$ & 0.760 & 0.891 & 0.933 & 0.964 & 0.901 & 0.313 & -0.026 \\
        \quad $(I_{\mathrm{rand}}, T_{\mathrm{rand}})$ & 0.745 & 0.830 & 0.924 & 0.943 & 0.856 & 0.251 & -0.062 \\
        \quad $(I_{\mathrm{shift}}, T_{\mathrm{shift}})$ & 0.704 & 0.798 & 0.913 & 0.935 & 0.827 & 0.238 & -0.084 \\
        \bottomrule
    \end{tabular}}
    \caption{Recall performance and modality gap in LAION's reconstructed embeddings, after the \emph{shift} method and our concept based method to remove the gap, and after two baselines adapted to each method. As explained in \cref{app:removing_gap}, the modality gap between the shift and concept based methods are not comparable, they can only be compared to their respective baselines. We can see that the shift method performs by far the worst in terms of recall. The concept based method seems to be consistently worse in terms of recall than it's baseline in terms of recall on smaller models, and equivalent in larger ones. However, in terms of modality gap, it is much better, with the mean baseline closing the gap in DiM but leaving it almost untouched in terms of distribution ($\mathcal{W}$).}
    \label{tab:removing_gap_laion}
\end{table}

\begin{table}[H]
    \centering
    \resizebox{\linewidth}{!}{
    \begin{tabular}{lcccccc|c}
        \toprule
         & \textbf{CLIP} & \textbf{CLIP-L} & \textbf{OpenCLIP} & \textbf{OpenCLIP-L} & \textbf{SigLIP} & \textbf{SigLIP2} & \textbf{$\Delta$ from $(\widehat{I}, \widehat{T})$} \\
        
        \midrule
        \multicolumn{6}{l}{$\mathrm{DiM}$} \\
        \cmidrule(lr){1-1}
        \quad $(\widehat{I}, \widehat{T})$ & 0.812 & 0.808 & 0.714 & 0.747 & 1.043 & 1.051 & / \\
        \quad $(I_{\mathrm{center}}, T_{\mathrm{center}})$ & 0.354 & 0.379 & 0.472 & 0.469 & 0.375 & 0.507 & -0.437 \\
        \quad $(\widetilde{I}, \widetilde{T})$ & 0.153 & 0.186 & 0.192 & 0.216 & 0.123 & 0.100 & -0.701 \\
        \quad $(I_{\mathrm{rand}}, T_{\mathrm{rand}})$ & 0.292 & 0.311 & 0.366 & 0.360 & 0.308 & 0.420 & -0.520 \\
        \quad $(I_{\mathrm{shift}}, T_{\mathrm{shift}})$ & 0.354 & 0.379 & 0.471 & 0.467 & 0.374 & 0.507 & -0.437 \\
        
        \midrule
        \multicolumn{6}{l}{$\mathcal{W}$} \\
        \cmidrule(lr){1-1}
        \quad $(\widehat{I}, \widehat{T})$ & 0.676 & 0.723 & 0.648 & 0.675 & 0.901 & 0.859 & / \\
        \quad $(I_{\mathrm{center}}, T_{\mathrm{center}})$ & 0.653 & 0.670 & 0.604 & 0.588 & 0.613 & 0.659 & -0.116 \\
        \quad $(\widetilde{I}, \widetilde{T})$ & 0.359 & 0.524 & 0.426 & 0.463 & 0.451 & 0.436 & -0.304 \\
        \quad $(I_{\mathrm{rand}}, T_{\mathrm{rand}})$ & 0.210 & 0.225 & 0.241 & 0.233 & 0.195 & 0.211 & -0.528 \\
        \quad $(I_{\mathrm{shift}}, T_{\mathrm{shift}})$ & 0.431 & 0.487 & 0.520 & 0.523 & 0.470 & 0.460 & -0.265 \\
        
        \midrule
        \multicolumn{6}{l}{$\mathrm{recall@1}$} \\
        \cmidrule(lr){1-1}
        \quad $(\widehat{I}, \widehat{T})$ & 0.534 & 0.566 & 0.617 & 0.678 & 0.636 & 0.636 & / \\
        \quad $(I_{\mathrm{center}}, T_{\mathrm{center}})$ & 0.551 & 0.585 & 0.612 & 0.667 & 0.650 & 0.628 & +0.004 \\
        \quad $(\widetilde{I}, \widetilde{T})$ & 0.430 & 0.536 & 0.575 & 0.633 & 0.581 & 0.600 & -0.052 \\
        \quad $(I_{\mathrm{rand}}, T_{\mathrm{rand}})$ & 0.442 & 0.496 & 0.557 & 0.617 & 0.552 & 0.410 & -0.099 \\
        \quad $(I_{\mathrm{shift}}, T_{\mathrm{shift}})$ & 0.420 & 0.485 & 0.554 & 0.603 & 0.542 & 0.395 & -0.111 \\
        \bottomrule
    \end{tabular}}
    \caption{Recall performance and modality gap in COCO's reconstructed embeddings, after the \emph{shift} method and our concept based method to remove the gap, and after two baselines adapted to each method. Compared to \cref{tab:removing_gap_laion}, this table tells us that our concept based method transfers much better than all other ones to a different data distribution. Indeed, the performance in recall are comparable with those on LAION, yet our concept based method is the only one that seems to transfer it's ability to close the modality gap.}
    \label{tab:removing_gap_coco}
\end{table}

\begin{table}[H]
    \centering
    \resizebox{\linewidth}{!}{
    \begin{tabular}{lcccccc|c}
        \toprule
         & \textbf{CLIP} & \textbf{CLIP-L} & \textbf{OpenCLIP} & \textbf{OpenCLIP-L} & \textbf{SigLIP} & \textbf{SigLIP2} & \textbf{$\Delta$ from $(\widehat{I}, \widehat{T})$} \\
        
        \midrule
        \multicolumn{6}{l}{$\mathrm{DiM}$} \\
        \cmidrule(lr){1-1}
        \quad $(\widehat{I}, \widehat{T})$ & 0.859 & 0.864 & 0.769 & 0.837 & 1.130 & 1.128 & / \\
        \quad $(I_{\mathrm{center}}, T_{\mathrm{center}})$ & 0.339 & 0.380 & 0.443 & 0.458 & 0.373 & 0.380 & -0.536 \\
        \quad $(\widetilde{I}, \widetilde{T})$ & 0.112 & 0.148 & 0.188 & 0.224 & 0.119 & 0.0881 & -0.785 \\
        \quad $(I_{\mathrm{rand}}, I_{\mathrm{rand}})$ & 0.288 & 0.317 & 0.361 & 0.459 & 0.321 & 0.340 & -0.583 \\
        \quad $(I_{\mathrm{shift}}, I_{\mathrm{shift}})$ & 0.340 & 0.381 & 0.443 & 0.365 & 0.373 & 0.381 & -0.551 \\
        
        \midrule
        \multicolumn{6}{l}{$\mathcal{W}$} \\
        \cmidrule(lr){1-1}
        \quad $(\widehat{I}, \widehat{T})$ & 0.684 & 0.726 & 0.666 & 0.687 & 0.931 & 0.893 & / \\
        \quad $(I_{\mathrm{center}}, T_{\mathrm{center}})$ & 0.686 & 0.685 & 0.651 & 0.616 & 0.655 & 0.679 & -0.103 \\
        \quad $(\widetilde{I}, \widetilde{T})$ & 0.362 & 0.534 & 0.462 & 0.468 & 0.467 & 0.467 & -0.304 \\
        \quad $(I_{\mathrm{rand}}, I_{\mathrm{rand}})$ & 0.197 & 0.210 & 0.237 & 0.219 & 0.181 & 0.176 & -0.561 \\
        \quad $(I_{\mathrm{shift}}, I_{\mathrm{shift}})$ & 0.397 & 0.443 & 0.490 & 0.465 & 0.419 & 0.358 & -0.336 \\
        
        \midrule
        \multicolumn{6}{l}{Zero shot accuracy} \\
        \cmidrule(lr){1-1}
        \quad $(\widehat{I}, \widehat{T})$ & 0.350 & 0.472 & 0.400 & 0.504 & 0.466 & 0.448 & / \\
        \quad $(I_{\mathrm{center}}, T_{\mathrm{center}})$ & 0.367 & 0.484 & 0.406 & 0.522 & 0.490 & 0.490 & +0.020 \\
        \quad $(\widetilde{I}, \widetilde{T})$ & 0.340 & 0.477 & 0.401 & 0.524 & 0.481 & 0.460 & +0.007 \\
        \quad $(I_{\mathrm{rand}}, I_{\mathrm{rand}})$ & 0.310 & 0.438 & 0.398 & 0.511 & 0.481 & 0.409 & -0.016 \\
        \quad $(I_{\mathrm{shift}}, I_{\mathrm{shift}})$ & 0.311 & 0.434 & 0.387 & 0.497 & 0.479 & 0.381 & -0.025 \\
        \bottomrule
    \end{tabular}}
    \caption{Zero-shot accuracy and modality gap in ImageNet's reconstructed embeddings, after the \emph{shift} method and our concept based method to remove the gap, and after two baselines adapted to each method. Our conclusion here is the same as the conclusion for \cref{tab:removing_gap_coco}.}
    \label{tab:removing_gap_imagenet}
\end{table}

\section{Datasets}
\label{app:datasets}

Throughout this study, we used four datasets~: LAION-400M \citep{schuhmann2021laion}, COCO \citep{coco2014}, ImageNet \citep{imagenet2009} and FashionIQ \citep{wu2021fashion}.

\paragraph{LAION.} We randomly selected a subset of 1M image-caption samples from the LAION-400M dataset. These samples were used to generate matching image-text embeddings. All reported SAEs were trained on this dataset.

\paragraph{COCO.} We used the train split of the 2017 update of the COCO dataset. Each image being paired with multiple captions, we generated pairs by duplicating the images for each of their captions. This dataset was used to further test the VLM encoders as well as the transferability of the learned features.

\paragraph{ImageNet.} This dataset was used to study the transferability of our features on a downstream classification task, commonly used in the literature. We generated about 80 captions for each class, embedded them, and averaged each class's embeddings to get a linear classifier, following \citet{radford2021}'s method.

\begin{figure}[h]
    \centering
    \includegraphics[width=0.8\linewidth]{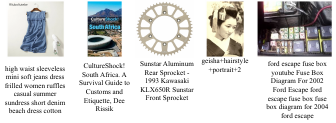}
    \includegraphics[width=0.8\linewidth]{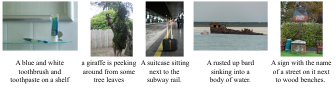}
    \includegraphics[width=0.8\linewidth]{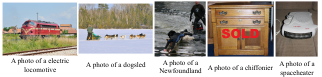}
    \caption{Examples of image-caption pairs in the LAION (\textbf{top}), COCO (\textbf{center}) and ImageNet (\textbf{bottom}).}
    \label{fig:dataset_examples}
\end{figure}

\paragraph{FashionIQ.} This dataset is used to evaluate retrieval performance. It contains source images, called candidate, target images, and \emph{relative} captions capturing the difference between the candidate and target. When several relative captions are available for a single candidate-target pair, we proceed as in ImageNet: we embed each caption and mean the results.

\begin{figure}[h]
    \centering
    \includegraphics[width=0.9\linewidth]{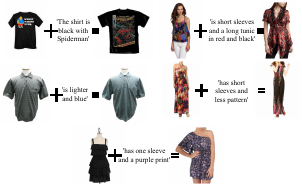}
    \caption{Examples of source and target images in FashionIQ, along with their relative caption.}
    \label{fig:fashionIQ_examples}
\end{figure}

\clearpage
\section{Why remove bias before comparing content}
\label{app:why_remove_bias}

One could ask why it is important to remove unimodal information and what impact this has on retrieval and other downstream tasks. Can it help improve performance by cleaning the representation and revealing the useful semantic information? Building on the fact that many works have found that modality-specific components live in orthogonal subspaces, we are left with two choices: either the modality information acts as a constant offset, in which case removing or adding it does not change rankings, or the modality information is adaptive, as we find in this work. In the adaptive case, retrieval rankings can change significantly, making our decomposition a useful tool for precise control and steering of retrieval systems.
We detail this here, and start with our setup.

\paragraph{Setup} 
We work in a representation space $\mathbb{R}^d$ that decomposes orthogonally as $\mathbb{R}^d = \C \oplus \M$ into a content subspace $\C$ and a modality subspace $\M$ with $\C \perp \M$. For any underlying item $\x$, the observed embedding factorizes as
$$\bm{v} = \underbrace{\bm{\m}(\x)}_{\text{modality } \in \M} + \underbrace{\bm{\c}(\x)}_{\text{content } \in \C} \quad \text{with} \quad \mathbb{R}^d = \C \oplus \M$$
where $\bm{\m}(\x) \in \M$ captures modality-specific information and $\bm{\c}(\x) \in \C$ represents the semantic content estimated by our SAE. The orthogonal direct sum decomposition ensures that content and modality components are geometrically independent.

For retrieval, we score against a set of unit-normalized candidates $\{\bm{y}_i\}_{i=1}^n$ using cosine similarity. To analyze pairwise rankings, we define the content margin $\Delta_c(i,j) \coloneqq \langle \bm{\c}(\x), \bm{y}_i - \bm{y}_j \rangle$ and the modality margin $\Delta_m(i,j) \coloneqq \langle \bm{\m}(\x), \bm{y}_i - \bm{y}_j \rangle$. We also write $\alpha_i \coloneqq \langle \bm{\m}(\x), \bm{y}_i \rangle$ for the modality projection onto candidate $i$.

\subsection{Constant modality information}
We first consider the case where modality information affects all candidates equally. This corresponds to the intuitive scenario where modality acts as a simple offset that shifts all similarities by the same amount, leaving relative rankings unchanged.

When the modality component $\bm{\m}(\x)$ projects with identical strength onto every candidate in the retrieval set, removing this component preserves all pairwise orderings. This formalizes the intuition that a candidate-independent offset cannot affect relative rankings.

\begin{proposition}[Ranking invariance under constant projection]
\label{prop:constant-modality-invariance}
Assume unit normalized candidates $\{\bm{y}_i\}_{i=1}^n$. If there exists $\alpha \in \mathbb{R}$ such that $\langle \bm{\m}(\x), \bm{y}_i \rangle = \alpha$ for all $i$, then for any $i$ and $j$
$$\cos(\bm{v}, \bm{y}_i) > \cos(\bm{v}, \bm{y}_j)
\quad \iff \quad
\cos(\bm{\c}(\x), \bm{y}_i) > \cos(\bm{\c}(\x), \bm{y}_j)$$
The same equivalence holds with equality. The statement also holds for dot product scoring.
\end{proposition}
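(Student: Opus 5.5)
Since $\bm{v} = \m(\x) + \c(\x)$, bilinearity of the inner product gives, for every candidate $i$,
\[
\langle \bm{v}, \bm{y}_i\rangle = \langle \c(\x), \bm{y}_i\rangle + \langle \m(\x), \bm{y}_i\rangle = \langle \c(\x), \bm{y}_i\rangle + \alpha .
\]
So the modality component contributes the same additive constant to every score. The plan is to reduce every comparison to the content margin $\Delta_c(i,j)$ defined in the setup, and to show that the modality margin $\Delta_m(i,j)$ vanishes identically.

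\textbf{Dot-product scoring.} Subtracting the scores for $i$ and $j$ gives
\[
\langle \bm{v}, \bm{y}_i - \bm{y}_j\rangle = \Delta_c(i,j) + \Delta_m(i,j) = \Delta_c(i,j) + (\alpha - \alpha) = \Delta_c(i,j).
\]
Hence $\langle \bm{v},\bm{y}_i\rangle > \langle \bm{v},\bm{y}_j\rangle$ holds exactly when $\langle \c(\x),\bm{y}_i\rangle > \langle \c(\x),\bm{y}_j\rangle$. The same identity gives the equality case.

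\textbf{Cosine scoring.} Because the candidates are unit normalized,
\[
\cos(\bm{v},\bm{y}_i) = \frac{\langle \bm{v},\bm{y}_i\rangle}{\|\bm{v}\|}, \qquad \cos(\c(\x),\bm{y}_i) = \frac{\langle \c(\x),\bm{y}_i\rangle}{\|\c(\x)\|}.
\]
The denominators $\|\bm{v}\|$ and $\|\c(\x)\|$ do not depend on the candidate index. Dividing by a fixed positive scalar preserves both strict inequalities and equalities, so the cosine statement follows directly from the dot-product statement.

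\textbf{The only delicate point: degenerate norms.} The cosines must be well defined, which requires $\c(\x)\neq 0$ and $\bm{v}\neq 0$. If $\c(\x)\neq 0$, orthogonality of $\C$ and $\M$ gives $\|\bm{v}\|^2 = \|\m(\x)\|^2 + \|\c(\x)\|^2 > 0$, so $\bm{v}\neq 0$ as well. I would therefore state $\c(\x)\neq 0$ as an implicit hypothesis. Orthogonality is used only at this step; the ranking argument itself needs nothing beyond the constant-projection assumption.
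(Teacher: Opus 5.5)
Your proof is correct and matches the paper's argument: both show the modality margin is $\alpha-\alpha=0$, so every score difference reduces to the content margin $\Delta_c(i,j)$, and both then note that the cosine denominators are positive and independent of the candidate. The only difference is the degenerate case. The paper keeps a zero content vector by convention (all candidates tie under content, and the observed differences also vanish). You instead exclude it as an implicit hypothesis and also check that $\bm{v}\neq 0$, which the paper takes for granted.
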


\begin{proof}
With $\|\bm{y}_i\| = 1$ we have
$$\cos(\bm{v}, \bm{y}_i) = \frac{\langle \bm{v}, \bm{y}_i \rangle}{\|\bm{v}\|}
= \frac{\langle \bm{\m}(\x), \bm{y}_i \rangle + \langle \bm{\c}(\x), \bm{y}_i \rangle}{\|\bm{v}\|}$$

Under the hypothesis $\langle \bm{\m}(\x), \bm{y}_i \rangle = \alpha$ for all $i$ we get
$$\cos(\bm{v}, \bm{y}_i) - \cos(\bm{v}, \bm{y}_j)
= \frac{\alpha - \alpha}{\|\bm{v}\|} + \frac{\langle \bm{\c}(\x), \bm{y}_i \rangle - \langle \bm{\c}(\x), \bm{y}_j \rangle}{\|\bm{v}\|}
= \frac{\Delta_c(i,j)}{\|\bm{v}\|}$$

Hence $\cos(\bm{v}, \bm{y}_i) > \cos(\bm{v}, \bm{y}_j)$ if and only if $\Delta_c(i,j) > 0$. We also have
$$\cos(\bm{\c}(\x), \bm{y}_i) - \cos(\bm{\c}(\x), \bm{y}_j) = \frac{\Delta_c(i,j)}{\|\bm{\c}(\x)\|}$$

The denominators $\|\bm{v}\|$ and $\|\bm{\c}(\x)\|$ are positive constants independent of $i$ and $j$, so both differences have the same sign. If $\|\bm{\c}(\x)\| = 0$ then $\bm{\c}(\x) = \bm{0}$ and all candidates tie under content while the first difference is zero as well, so rankings are trivially invariant. The dot product case is identical without denominators.
\end{proof}

In the constant modality regime, removing modality information yields only the subspace of interest, and potentially more interpretable multimodal embeddings, but without any performance gains or losses. 
Thus, in that case, the decomposition provides the content representation by eliminating modality-specific artifacts, but since rankings remain identical, one should not expect improvements in retrieval metrics. This case represents a neutral scenario where the primary benefit is representational clarity rather than functional enhancement.

We now consider a near-constant case where modality projections vary slightly across candidates. This provides a useful practical criterion for when small variations cannot disrupt content-based rankings.

\begin{corollary}[Approximate invariance under bounded spread]
\label{cor:bounded-spread}
Assume unit normalized candidates. Let $\varepsilon \coloneqq \max_{i,j} |\alpha_i - \alpha_j|$. Then for any pair $(i,j)$ with $|\Delta_c(i,j)| > \varepsilon$, the inequality $\cos(\bm{v}, \bm{y}_i) > \cos(\bm{v}, \bm{y}_j)$ holds if and only if $\cos(\bm{\c}(\x), \bm{y}_i) > \cos(\bm{\c}(\x), \bm{y}_j)$.
\end{corollary}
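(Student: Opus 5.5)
We prove \cref{cor:bounded-spread} by redoing the computation from the proof of \cref{prop:constant-modality-invariance}, this time keeping the modality margin $\Delta_m(i,j)$ instead of cancelling it. The first step is to expand the full cosine difference using $\|\bm{y}_i\|=1$ and $\bm{v}=\m(\x)+\c(\x)$:
\[
\cos(\bm{v},\bm{y}_i)-\cos(\bm{v},\bm{y}_j)=\frac{\Delta_m(i,j)+\Delta_c(i,j)}{\|\bm{v}\|}.
\]
The second step is to note that $\Delta_m(i,j)=\alpha_i-\alpha_j$, so by the definition of $\varepsilon$ we have $|\Delta_m(i,j)|\le\varepsilon$.

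The key step is the sign argument. Suppose $|\Delta_c(i,j)|>\varepsilon\ge|\Delta_m(i,j)|$. Then $\Delta_c(i,j)+\Delta_m(i,j)$ has the same strict sign as $\Delta_c(i,j)$, and in particular is nonzero. We check both cases:
\begin{itemize}
\item if $\Delta_c(i,j)>\varepsilon$, the sum exceeds $\varepsilon-\varepsilon=0$;
\item if $\Delta_c(i,j)<-\varepsilon$, the sum is below $-\varepsilon+\varepsilon=0$.
\end{itemize}
Since $\|\bm{v}\|>0$, the full cosine difference has the sign of $\Delta_c(i,j)$. Note that $\bm{v}\neq\bm{0}$ is implicit in the cosine being defined.

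Next we treat the content-only side exactly as in the proof of \cref{prop:constant-modality-invariance}:
\[
\cos(\c(\x),\bm{y}_i)-\cos(\c(\x),\bm{y}_j)=\frac{\Delta_c(i,j)}{\|\c(\x)\|}.
\]
This also has the sign of $\Delta_c(i,j)$. The degenerate case $\c(\x)=\bm{0}$ cannot arise here, because it would force $\Delta_c(i,j)=0$, contradicting $|\Delta_c(i,j)|>\varepsilon\ge 0$. Both differences therefore share the same strict sign, which gives the claimed equivalence.

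The main obstacle is only bookkeeping. We need a strict inequality in the hypothesis so that the sum of the two margins cannot vanish, and we must verify that neither normalization is zero. The dot-product version follows identically, with the denominators dropped.
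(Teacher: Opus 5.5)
Your proof is correct and follows the paper's argument essentially line for line: you write the full cosine difference as $(\Delta_c(i,j)+\Delta_m(i,j))/\|\bm{v}\|$ with $|\Delta_m(i,j)|=|\alpha_i-\alpha_j|\le\varepsilon$, conclude that it has the strict sign of $\Delta_c(i,j)$, and match this with the content-only difference $\Delta_c(i,j)/\|\bm{\c}(\x)\|$. Your added observation that $\bm{\c}(\x)\neq\bm{0}$ is forced by $|\Delta_c(i,j)|>\varepsilon\ge 0$ is a small piece of bookkeeping that the paper leaves implicit.
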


\begin{proof}
We have
$$\cos(\bm{v}, \bm{y}_i) - \cos(\bm{v}, \bm{y}_j)
= \frac{\Delta_c(i,j) + \Delta_m(i,j)}{\|\bm{v}\|}
\quad \text{with} \quad
\Delta_m(i,j) = \alpha_i - \alpha_j$$

If $|\Delta_c(i,j)| > \varepsilon \geq |\Delta_m(i,j)|$ then $\Delta_c(i,j)$ and $\Delta_c(i,j) + \Delta_m(i,j)$ have the same sign. Division by the positive constant $\|\bm{v}\|$ preserves sign. The sign also matches the sign of $\Delta_c(i,j)$ which determines the sign of $\cos(\bm{\c}(\x), \bm{y}_i) - \cos(\bm{\c}(\x), \bm{y}_j)$.
\end{proof}

This corollary establishes a robustness condition: when modality variations are small relative to content differences—even in the presence of multiple adaptive modality components -- the content signal prevails and rankings remain unchanged. Like the constant case, decomposition offers no performance gains here since content already drives the original rankings.

However, as we will show next and as we empirically find in this work, real modality information is typically adaptive rather than constant, leading to significant ranking changes when removed.

\subsection{Adaptive modality information}
This is what we observe empirically in this work. The modality signal decomposes into multiple unimodal atoms and the modality projection varies significantly across candidates. In this regime, removing $\bm{\m}(\x)$ can dramatically change rankings. The next proposition provides a necessary and sufficient condition for ranking flips and clarifies that flips occur when the candidate-dependent modality margin opposes and dominates the content margin.

\begin{proposition}[Characterization of ranking flips under adaptive modality]
\label{prop:adaptive-modality-flip}
Assume unit normalized candidates. For any pair $(i,j)$, the two differences
$$\cos(\bm{v}, \bm{y}_i) - \cos(\bm{v}, \bm{y}_j)
\quad \text{and} \quad
\cos(\bm{\c}(\x), \bm{y}_i) - \cos(\bm{\c}(\x), \bm{y}_j)$$
have opposite signs if and only if
$$\Delta_c(i,j) \big( \Delta_c(i,j) + \Delta_m(i,j) \big) < 0$$
Equivalently, the signs of $\Delta_m(i,j)$ and $\Delta_c(i,j)$ are opposite and $|\Delta_m(i,j)| > |\Delta_c(i,j)|$. The same characterization holds for dot product scoring.
\end{proposition}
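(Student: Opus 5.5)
The plan is to repeat the computation used in the proof of \cref{prop:constant-modality-invariance} and \cref{cor:bounded-spread}, this time without any assumption on the modality projections $\alpha_i$. First I expand both score differences using $\|\bm{y}_i\|=1$ and $\bm{v}=\bm{\m}(\x)+\bm{\c}(\x)$:
$$\cos(\bm{v},\bm{y}_i)-\cos(\bm{v},\bm{y}_j)=\frac{\Delta_c(i,j)+\Delta_m(i,j)}{\|\bm{v}\|},\qquad \cos(\bm{\c}(\x),\bm{y}_i)-\cos(\bm{\c}(\x),\bm{y}_j)=\frac{\Delta_c(i,j)}{\|\bm{\c}(\x)\|}.$$
The denominators are positive and do not depend on $(i,j)$. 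So the two differences have opposite signs exactly when $\Delta_c+\Delta_m$ and $\Delta_c$ have opposite strict signs, which is exactly the condition $\Delta_c(\Delta_c+\Delta_m)<0$.

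Second, I show this product condition is equivalent to the stated geometric one. If $\Delta_c>0$, the condition reads $\Delta_m<-\Delta_c<0$. If $\Delta_c<0$, it reads $\Delta_m>-\Delta_c>0$. In both cases this says $\Delta_m$ has the opposite sign to $\Delta_c$ and $|\Delta_m|>|\Delta_c|$. The converse holds by the same two cases. If $\Delta_c=0$, the product is zero and no strict flip occurs, which matches the fact that the inequality $|\Delta_m|>|\Delta_c|$ paired with "opposite signs" fails because $\Delta_c$ has no sign.

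For dot-product scoring, I drop the denominators and the same argument applies verbatim.

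The only delicate point is degenerate norms. If $\bm{\c}(\x)=\bm{0}$, then the content cosine is undefined, but $\Delta_c=0$, so I adopt the convention from the proof of \cref{prop:constant-modality-invariance} that all candidates tie under content. Under that convention neither side of the equivalence holds. If $\bm{v}=\bm{0}$, then by orthogonality of $\C$ and $\M$ both components vanish, and the same conclusion follows. I would state these conventions explicitly. Beyond that, I expect no real obstacle: the result is essentially a sign-bookkeeping exercise.
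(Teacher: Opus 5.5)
Your proposal is correct and follows the paper's proof: you write both differences as $(\Delta_c+\Delta_m)/\|\bm{v}\|$ and $\Delta_c/\|\bm{\c}(\x)\|$, use the positive candidate-independent denominators to reduce the question to $\Delta_c(\Delta_c+\Delta_m)<0$, and drop the denominators for dot-product scoring. Two small additions go beyond the paper's proof of this proposition: the explicit case split (including $\Delta_c=0$) for the sign/magnitude reformulation, and the treatment of the degenerate cases $\bm{\c}(\x)=\bm{0}$ and $\bm{v}=\bm{0}$, which the paper only addresses in the proof of \cref{prop:constant-modality-invariance}.
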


\begin{proof}
Using the identities above we write
$$\cos(\bm{v}, \bm{y}_i) - \cos(\bm{v}, \bm{y}_j)
= \frac{\Delta_c(i,j) + \Delta_m(i,j)}{\|\bm{v}\|}
\quad \text{and} \quad
\cos(\bm{\c}(\x), \bm{y}_i) - \cos(\bm{\c}(\x), \bm{y}_j)
= \frac{\Delta_c(i,j)}{\|\bm{\c}(\x)\|}$$

The denominators are positive constants independent of the candidates. Therefore the two differences have opposite signs if and only if $\Delta_c(i,j)$ and $\Delta_c(i,j) + \Delta_m(i,j)$ have opposite signs. This is equivalent to the strict inequality $\Delta_c(i,j) \big( \Delta_c(i,j) + \Delta_m(i,j) \big) < 0$. Expanding gives the equivalent condition that $\Delta_m(i,j)$ and $\Delta_c(i,j)$ have opposite signs and that the magnitude of $\Delta_m(i,j)$ exceeds the magnitude of $\Delta_c(i,j)$. The dot product case follows by the same steps with no denominators.
\end{proof}

In practice, visual information or bias is almost orthogonal to textual information or bias. This makes $\Delta_m(i,j) \approx 0$, and therefore provides a guarantee that rankings should stay the same under sole manipulation of modality-specific information.

\begin{remark}[Concrete two-dimensional example]
Let $\M = \mathrm{span}\{\bm{e}_1\}$ and $\C = \mathrm{span}\{\bm{e}_2\}$ with $\bm{e}_1$ orthogonal to $\bm{e}_2$. Take $\bm{\m}(\x) = \bm{e}_1$ and $\bm{\c}(\x) = \tfrac{1}{2}\bm{e}_2$. Choose candidates $\bm{y}_1 = \tfrac{1}{\sqrt{2}}(\bm{e}_1 + \bm{e}_2)$ and $\bm{y}_2 = \tfrac{1}{\sqrt{2}}(-\bm{e}_1 + \bm{e}_2)$ which are unit normalized. Then $\Delta_c(1,2) = 0$ and $\Delta_m(1,2) = \sqrt{2}$ so $\cos(\bm{v}, \bm{y}_1) > \cos(\bm{v}, \bm{y}_2)$ while $\cos(\bm{\c}(\x), \bm{y}_1) = \cos(\bm{\c}(\x), \bm{y}_2)$. If we perturb $\bm{\c}(\x)$ slightly toward $-\bm{e}_2$ so that $\Delta_c(1,2) < 0$, the inequality reverses under content while it stays the same under observed scores, demonstrating a strict ranking flip.
\end{remark}

In summary, if modality information were candidate-independent, removing it would leave retrieval unchanged as shown in ~\cref{prop:constant-modality-invariance}. In practice, however, modality information is multi-component and adaptive. Its projections onto candidates differ significantly, and when those candidate-dependent differences oppose and dominate the content margin, rankings change dramatically as characterized in ~\cref{prop:adaptive-modality-flip}. This is precisely why we identify and remove unimodal atoms before comparing content -- doing so can reveal the true content-based similarities that were previously masked by conflicting modality signals. The bounded spread criterion in ~\cref{cor:bounded-spread} provides a practical stability guarantee, ensuring that decomposition is beneficial primarily when modality variations are large relative to content margins.

\section{On previous work's attempt to characterise $\Omega_I\oplus\Omega_T$}
\label{app:ilestsinuljpp}

We compare the method to remove the modality gap proposed by \citet{schrodi2025two} to ours and to three baselines in its capacity to close the modality gap. We report measures the modality gap as (\emph{i}) $\norm{\Delta}$ the norm of the difference in means, (\emph{ii}) $\mathrm{sep}$ the accuracy score of a linear logistic regression probe trained to classify text vs image embeddings, measuring the linear separability of both distribution, (\emph{iii}) RMG, a metric introduced by \citet{schrodi2025two} and (\emph{iv}) our OOD score.

Let $I, T \in \R^{N\times d}$ be image-caption pairs embedded using CLIP. Let $\mu_{I} \coloneqq \E[I_i]$, $\mu_{T} \coloneqq \E[T_i]$ and $\Delta \coloneqq \mu_T-\mu_T$. Let $e_1, ..., e_k$, $k\ll d$ be the directions identified by \citet{schrodi2025two}.

\textbf{Baseline A:} center embeddings modality-wise, $I\gets I-\mu_I$, similarly for $T$. \textbf{Baseline B:} project out modality wise means, $I \gets I-(I\cdot\mu_I) \mu_I$, with $\mu_I$ being additionally normalised, and similarly for $T$. \textbf{Baseline C:} project out $\Delta$, $I\gets I-(I\cdot\Delta) \Delta$, with $\Delta$ being normalized, and similarly for $T$. \textbf{Method:} project $I$ and $T$ on the orthogonal complement of $\mathrm{Span}(e_1, ..., e_k)$. We also compare to our method and to the original embedding (line "unchanged" in \cref{tab:comparison_schrodi}).

\begin{table}[h]
\centering
\begin{tabular}{lcccc}
\toprule
\textbf{Method} & $\|\Delta\|$ ($\downarrow$) & $\mathrm{sep}$ ($\downarrow$) & RMG ($\downarrow$) & OOD ($\downarrow$) \\
\midrule
Unchanged & 0.72 & 1.00 & 0.52 & 1.00 \\
\midrule
Baseline A & 0.00 & 0.50 & 0.45 & 0.99 \\
Baseline B & 0.00 & 0.50 & 0.45 & 0.98 \\
Baseline C & 0.00 & 0.50 & 0.45 & 0.89 \\
\midrule
Schrodi et al. & 0.22 & 1.00 & 0.46 & 0.98 \\
Ours & 0.01 & 0.50 & 0.41 & 0.63 \\
\bottomrule
\end{tabular}
\caption{Comparison of 2 methods and 3 baselines in their ability to remove the modality gap, as measured by 4 metrics.}
\label{tab:comparison_schrodi}
\end{table}

\end{document}